\newenvironment{metaalgorithm}[1][htb]
  {% Update algorithm name
   \begin{algorithm}[#1]%
  }{\end{algorithm}}
\def\eqref#1{equation~\ref{#1}}
\def\1{\bm{1}}
\def\ra{{\textnormal{a}}}
\def\ro{{\textnormal{o}}}
\def\rr{{\textnormal{r}}}
\def\rs{{\textnormal{s}}}
\def\vpi{{\boldsymbol{\pi}}}
\def\rva{{\mathbf{a}}}
\def\rvg{{\mathbf{g}}}
\def\rvo{{\mathbf{o}}}
\def\vmu{{\bm{\mu}}}
\def\vtheta{{\bm{\theta}}}
\def\va{{\bm{a}}}
\def\vg{{\bm{g}}}
\def\vo{{\bm{o}}}
\def\vx{{\bm{x}}}
\def\vpi{{\boldsymbol{\pi}}}
\def\vtheta{{\boldsymbol{\theta}}}
\def\vbarpi{{\boldsymbol{\bar{\pi}}}}
\def\mH{{\bm{H}}}
\DeclareMathAlphabet{\mathsfit}{\encodingdefault}{\sfdefault}{m}{sl}
\SetMathAlphabet{\mathsfit}{bold}{\encodingdefault}{\sfdefault}{bx}{n}
\newcommand{\E}{\mathbb{E}}
\DeclareMathOperator*{\argmax}{arg\,max}
\DeclareMathOperator*{\argmin}{arg\,min}
\begin{document}

\title{Heterogeneous-Agent Reinforcement Learning}

\author{\name Yifan Zhong$^{1,2,*}$ \email zhongyifan@stu.pku.edu.cn \\
       % \addr Institute for Artificial Intelligence, Peking University
       % \AND
       \name Jakub Grudzien Kuba$^{3,*}$ \email jakub.grudzien@new.ox.ac.uk \\
       % \addr University of Oxford
       % \AND
       \name Xidong Feng$^{4,*}$ \email xidong.feng.20@ucl.ac.uk \\
       % \addr University College London
       % \AND
       \name Siyi Hu$^{5}$ \email siyi.hu@student.uts.edu.au \\
       % \addr ReLER, AAII, University of Technology Sydney
       % \AND
       \name Jiaming Ji$^{1}$ \email jiamg.ji@stu.pku.edu.cn \\
       % \addr Institute for Artificial Intelligence, Peking University
       % \AND
       \name Yaodong Yang$^{1,\dagger}$ \email yaodong.yang@pku.edu.cn \\
       \addr 
       $^{1}$ Institute for Artificial Intelligence, Peking University \\
       $^{2}$ Beijing Institute for General Artificial Intelligence \\
       $^{3}$ University of Oxford \\
       $^{4}$ University College London \\
       $^{5}$ ReLER, AAII, University of Technology Sydney \\
        $^*$ Equal contribution\quad $\dagger$ Corresponding author
       }

\editor{George Konidaris}

\maketitle

\begin{abstract}%   <- trailing '%' for backward compatibility of .sty file

The necessity for cooperation among intelligent machines has popularised cooperative \textsl{multi-agent reinforcement learning} (MARL) in AI research. However, many research endeavours heavily rely on parameter sharing among agents, which confines them to only \emph{homogeneous}-agent setting and leads to training instability and lack of convergence guarantees. To achieve effective cooperation in the general \emph{heterogeneous}-agent setting, we propose \textsl{Heterogeneous-Agent Reinforcement Learning} (HARL) algorithms that resolve the aforementioned issues. Central to our findings are the \textsl{multi-agent advantage decomposition lemma} and the \textsl{sequential update scheme}. Based on these, we develop the provably correct \textsl{Heterogeneous-Agent Trust Region Learning} (HATRL), and derive HATRPO and HAPPO by tractable approximations. Furthermore, we discover a novel framework named \textsl{Heterogeneous-Agent Mirror Learning} (HAML), which strengthens theoretical guarantees for HATRPO and HAPPO and provides a general template for cooperative MARL algorithmic designs. We prove that all algorithms derived from HAML inherently enjoy monotonic improvement of joint return and convergence to Nash Equilibrium. As its natural outcome, HAML validates more novel algorithms in addition to HATRPO and HAPPO, including HAA2C, HADDPG, and HATD3, which generally outperform their existing MA-counterparts. We comprehensively test HARL algorithms on six challenging benchmarks and demonstrate their superior effectiveness and stability for coordinating heterogeneous agents compared to strong baselines such as MAPPO and QMIX.\footnote{Our code is available at \url{https://github.com/PKU-MARL/HARL}.}

\end{abstract}

\begin{keywords}
  cooperative multi-agent reinforcement learning, heterogeneous-agent trust region learning, heterogeneous-agent mirror learning, heterogeneous-agent reinforcement learning algorithms, sequential update scheme
\end{keywords}

\section{Introduction}

Cooperative Multi-Agent Reinforcement Learning (MARL) is a natural model of learning in multi-agent systems, such as robot swarms \citep{huttenrauch2017guided, huttenrauch2019deep}, autonomous cars \citep{cao2012overview}, and traffic signal control \citep{calvo2018heterogeneous}. To solve cooperative MARL problems, one naive approach is to directly apply single-agent reinforcement learning algorithm to each agent and consider other agents as a part of the environment, a paradigm commonly referred to as \textsl{Independent Learning} \citep{tan1993multi,de2020independent}. Though effective in certain tasks, independent learning fails in the face of more complex scenarios \citep{hu2022marllib, foerster2018counterfactual}, which is intuitively clear: once a learning agent updates its policy, so do its teammates, which causes changes in the effective environment of each agent which single-agent algorithms are not prepared for \citep{claus1998dynamics}. To address this, a learning paradigm named \textsl{Centralised Training with Decentralised Execution}  (CTDE) \citep{maddpg,foerster2018counterfactual,JMLR:v24:22-0169} was developed. The CTDE framework learns a joint value function which, during training, has access to the global state and teammates' actions. With the help of the centralised value function that accounts for the non-stationarity caused by others, each agent adapts its policy parameters accordingly. Thus, it effectively leverages global information while still preserving decentralised agents for execution. As such, the CTDE paradigm allows a straightforward extension of single-agent policy gradient theorems \citep{sutton:nips12, silver2014deterministic} to multi-agent scenarios \citep{maddpg, kuba2021settling,spg-david}.  Consequently, numerous multi-agent policy gradient algorithms have been developed  \citep{foerster2018counterfactual, peng1703multiagent,zhang2020bi,wen2018probabilistic,gr2,yang2018mean, ackermann2019reducing}.

Though existing methods have achieved reasonable performance on common benchmarks, several limitations remain. Firstly, some algorithms \citep{mappo, de2020independent} rely on parameter sharing and require agents to be  \emph{homogeneous}  (\emph{i.e.}, share the same observation space and action space, and play similar roles in a cooperation task),  which largely limits their applicability to \emph{heterogeneous}-agent settings (\emph{i.e.}, no constraint on the observation spaces, action spaces, and the roles of agents) and potentially harms the performance \citep{christianos2021scaling}. While there has been work extending parameter sharing for heterogeneous agents \citep{terry2020parameter}, their methods rely on padding, which is neither elegant nor general. 
Secondly, existing algorithms update the agents simultaneously. As we show in Section \ref{sec:hh} later, the agents are unaware of partners' update directions under this update scheme, which could lead to potentially conflicting updates, resulting in training instability and failure of convergence. 
Lastly, some algorithms, such as IPPO and MAPPO, are developed based on intuition and empirical results. The lack of theory compromises their trustworthiness for important usage.

To resolve these challenges, in this work we propose \textsl{Heterogeneous-Agent Reinforcement Learning} (HARL) algorithm series, that is meant for the general \emph{heterogeneous}-agent settings, achieves effective coordination through a novel \textsl{sequential update scheme}, and is grounded theoretically.

In particular, we capitalize on the \textsl{multi-agent advantage decomposition lemma} \citep{kuba2021settling} and derive the theoretically underpinned multi-agent extension of trust region learning, which is proved to enjoy monotonic improvement property and convergence to the Nash Equilibrium (NE) guarantee. Based on this, we propose Heterogeneous-Agent Trust Region Policy Optimisation (HATRPO) and Heterogeneous-Agent Proximal Policy Optimisation (HAPPO) as tractable approximations to theoretical procedures. 

Furthermore, inspired by Mirror Learning \citep{kuba2022mirror} that provides a theoretical explanation for the effectiveness of TRPO and PPO
, we discover a novel framework named \textsl{Heterogeneous-Agent Mirror Learning} (HAML), which strengthens theoretical guarantees for HATRPO and HAPPO and provides a general template for cooperative MARL algorithmic designs.
We prove that all algorithms derived from HAML inherently satisfy the desired property of the monotonic improvement of joint return and the convergence to Nash equilibrium.
Thus, HAML dramatically expands the theoretically sound algorithm space and, potentially, provides cooperative MARL solutions to more practical settings. 
We explore the HAML class and derive more theoretically underpinned and practical heterogeneous-agent algorithms, including HAA2C, HADDPG, and HATD3. 

To facilitate the usage of HARL algorithms, we open-source our PyTorch-based integrated implementation. Based on this, we test HARL algorithms comprehensively on Multi-Agent Particle Environment (MPE) \citep{maddpg, mordatch2018emergence}, Multi-Agent MuJoCo (MAMuJoCo) \citep{peng2021facmac}, StarCraft Multi-Agent Challenge (SMAC) \citep{samvelyanstarcraft}, SMACv2 \citep{ellis2022smacv2}, Google Research Football Environment (GRF) \citep{kurach2020google}, and Bi-DexterousHands \citep{chen2022towards}. The empirical results confirm the algorithms' effectiveness in practice. 
On all benchmarks with heterogeneous agents including MPE, MAMuJoCo, GRF, and Bi-Dexteroushands, HARL algorithms generally outperform their existing MA-counterparts, and their performance gaps become larger as the heterogeneity of agents increases, showing that HARL algorithms are more robust and better suited for the general heterogeneous-agent settings. While all HARL algorithms show competitive performance, they culminate in HAPPO and HATD3 in particular, which establish the new state-of-the-art results. As an off-policy algorithm, HATD3 also improves sample efficiency, leading to more efficient learning and faster convergence. On tasks where agents are mostly homogeneous such as SMAC and SMACv2, HAPPO and HATRPO attain comparable or superior win rates at convergence while not relying on the parameter-sharing trick, demonstrating their general applicability.
Through ablation analysis, we empirically show the novelties introduced by HARL theory and algorithms are crucial for learning the optimal cooperation strategy, thus signifying their importance. Finally, we systematically analyse the computational overhead of sequential update and conclude that it does not need to be a concern.

\begin{comment}

\end{comment}
\section{Preliminaries}

\label{sec:prelim}

In this section, we first introduce problem formulation and notations for cooperative MARL, and then review existing work and analyse their limitations.
\subsection{Cooperative MARL Problem Formulation and Notations}

We consider a fully cooperative multi-agent task that can be described as a Markov game (MG) \citep{littman1994markov}, also known as a stochastic game \citep{shapley1953stochastic}.

\begin{restatable}{definition}{mdp}
A cooperative Markov game is defined by a tuple $\langle \mathcal{N}, \mathcal{S}, \boldsymbol{\mathcal{A}}, r, P, \gamma, d\rangle$. Here, $\mathcal{N}=\{1, \dots, n\}$ is a set of $n$ agents, $\mathcal{S}$ is the state space, $\boldsymbol{\mathcal{A}}=\times_{i=1}^{n}\mathcal{A}^i$ is the products of all agents' action spaces, known as the joint action space. Further, $r:\mathcal{S}\times\boldsymbol{\mathcal{A}}\rightarrow \mathbb{R}$ is the joint reward function, $P:\mathcal{S}\times\boldsymbol{\mathcal{A}}\times\mathcal{S}\rightarrow [0,1]$ is the transition probability kernel, $\gamma\in[0, 1)$ is the discount factor, and $d\in\mathcal{P}(\mathcal{S})$ (where $\mathcal{P}(X)$ denotes the set of probability distributions over a set $X$) is the positive initial state distribution. 
\end{restatable}

Although our results hold for general compact state and action spaces, in this paper we assume that they are finite, for simplicity. In this work, we will also use the notation $\mathbb{P}(X)$ to denote the power set of a set $X$. At time step $t\in\mathbb{N}$, the agents are at state $\rs_t$; they take independent actions $\ra^i_t, \forall i\in\mathcal{N}$ drawn from their policies $\pi^i(\cdot^i|\rs_t)\in\mathcal{P}(\mathcal{A}^i)$, and equivalently, they take a joint action $\rva_t=(\ra^1_t, \dots, \ra^n_t)$ drawn from their joint policy $\vpi(\cdot|\rs_t)=\prod_{i=1}^{n}\pi^i(\cdot^i|\rs_t)\in\mathcal{P}(\boldsymbol{\mathcal{A}})$. We write $\Pi^i\triangleq \{ \times_{s\in\mathcal{S}}\pi^i(\cdot^i|s) \ | \forall s\in\mathcal{S}, \pi^{i}(\cdot^i|s)\in\mathcal{P}(\mathcal{A}^i) \}$ to denote the policy space of agent $i$, and $\boldsymbol{\Pi}\triangleq (\Pi^1, \dots, \Pi^n)$ to denote the joint policy space. It is important to note that when $\pi^i(\cdot^i|s)$ is a Dirac delta distribution, $\forall s \in \mathcal{S}$, the policy is referred to as \textit{deterministic} \citep{silver2014deterministic} and we write $\mu^i(s)$ to refer to its centre. Then, the environment emits the joint reward $\rr_{t} = r(\rs_t, \rva_t)$ and moves to the next state $\rs_{t+1}\sim P(\cdot|\rs_t, \rva_t)\in\mathcal{P}(\mathcal{S})$. The joint policy $\boldsymbol{\pi}$, the transition probabililty kernel $P$, and the initial state distribution $d$, induce a marginal state distribution at time $t$, denoted by $\rho^{t}_{\boldsymbol{\pi}}$. We define an (improper) marginal state distribution $\rho_{\boldsymbol{\pi}} \triangleq \sum_{t=0}^{\infty}\gamma^{t}\rho^{t}_{\boldsymbol{\pi}}$. The state value function and the state-action value function are defined as: 
\begin{align*}
V_{\boldsymbol{\pi}}(s) \triangleq         \E_{\rva_{0:\infty}\sim\boldsymbol{\pi}, \rs_{1:\infty}\sim P}\big[ \sum_{t=0}^{\infty}\gamma^{t}\rr_{t}
\big| \ \rs_{0} = s \big]
\end{align*}
and\footnote{We write $a^i$, $\va$, and $s$ when we refer to the action, joint action, and state as to values, and $\ra^i$, $\rva$, and $\rs$ as to random variables.}
\begin{align*}
Q_{\boldsymbol{\pi}}(s, \va)  \triangleq \E_{\rs_{1:\infty}\sim P, \rva_{1:\infty}\sim\boldsymbol{\pi}}\big[ \sum_{t=0}^{\infty}\gamma^{t}\rr_{t} 
    \big| \ \rs_{0} = s,  \ \rva_{0} = \va \big].
\end{align*}
The advantage function is defined to be
\begin{align*}
A_{\boldsymbol{\pi}}(s, \va) \triangleq Q_{\boldsymbol{\pi}}(s, \va) - V_{\boldsymbol{\pi}}(s). 
\end{align*}
In this paper, we consider the fully-cooperative setting where the agents aim to maximise the expected joint return, defined as 
\begin{align*}
    J(\boldsymbol{\pi}) \triangleq \E_{\rs_{0:\infty}\sim \rho^{0:\infty}_{\boldsymbol{\pi}}, \rva_{0:\infty}\sim\boldsymbol{\pi}}\left[ \sum_{t=0}^{\infty}\gamma^{t}\rr_{t} \right].
\end{align*} 

We adopt the most common solution concept for multi-agent problems which is that of Nash equilibrium (NE) \citep{nash1951non,yang2020overview,filar2012competitive,bacsar1998dynamic},  defined as follows.

\begin{restatable}{definition}{NE}
\label{definition:ne}
In a fully-cooperative game, a joint policy $\boldsymbol{\pi}_* = (\pi_*^1, \dots, \pi_*^n)$ is a Nash equilibrium (NE) if for every $i\in\mathcal{N}$, $\pi^i\in\Pi^i$ implies $J\left(\boldsymbol{\pi}_*\right) \geq J\left(\pi^i,  \boldsymbol{\pi}_*^{-i}\right)$.
\end{restatable}

NE is a well-established game-theoretic solution concept. Definition \ref{definition:ne} characterises the equilibrium point at convergence for cooperative MARL tasks. To study the problem of finding a NE, we pay close attention to the contribution to performance from different subsets of agents. To this end, we introduce the following novel definitions.   
\begin{restatable}{definition}{multiagentfunctions}
    Let $i_{1:m}$  denote an ordered subset $\{i_{1}, \dots, i_{m}\}$ of $\mathcal{N}$. We write $-i_{1:m}$ to refer to its complement, and $i$ and $-i$, respectively, when $m=1$. We write $i_k$ when we refer to the $k^{\text{th}}$ agent in the ordered subset. Correspondingly, the multi-agent state-action value function is defined as
    \begin{align}
        Q_{\boldsymbol{\pi}}^{i_{1:m}}\left(s, \va^{i_{1:m}}\right) \triangleq \E_{\rva^{-i_{1:m}}\sim\boldsymbol{\pi}^{-i_{1:m}}}\left[ Q_{\boldsymbol{\pi}}\left(s, \va^{i_{1:m}}, \rva^{-i_{1:m}}\right) \right],\nonumber
    \end{align}
    In particular, when $m=n$ (the joint action of all agents is considered), then $i_{1:n}\in\text{Sym}(n)$, where $\text{Sym}(n)$ denotes the set of permutations of integers $1,\dots, n$, known as the \textbf{symmetric group}. In that case, $Q^{i_{1:n}}_{\vpi}(s, \va^{i_{1:n}})$ is equivalent to $Q_{\vpi}(s, \va)$. On the other hand, when $m=0$, i.e., $i_{1:m}=\emptyset$, the function takes the form of $V_{\vpi}(s)$. Moreover, consider two disjoint subsets of agents, $j_{1:k}$ and $i_{1:m}$. Then, the multi-agent advantage function of $i_{1:m}$ with respect to $j_{1:k}$ is defined as 
    \begin{align}
            \label{eq:single-agent-advantage}
         A_{\boldsymbol{\pi}}^{i_{1:m}}\left(s, \va^{j_{1:k}}, \va^{i_{1:m}} \right) \triangleq  Q_{\boldsymbol{\pi}}^{j_{1:k}, i_{1:m}}\left( s, \va^{j_{1:k}}, \va^{i_{1:m}}\right) 
        - Q_{\boldsymbol{\pi}}^{j_{1:k}}\left( s, \va^{j_{1:k}}\right).
    \end{align}
    
\end{restatable}

In words, $Q_{\boldsymbol{\pi}}^{i_{1:m}}\left(s, \va^{i_{1:m}}\right)$ evaluates the value of agents $i_{1:m}$ taking actions $\va^{i_{1:m}}$ in state $s$ while marginalizing out $\rva^{-i_{1:m}}$, and $A_{\boldsymbol{\pi}}^{i_{1:m}}\left(s, \va^{j_{1:k}}, \va^{i_{1:m}} \right)$ evaluates the advantage of agents $i_{1:m}$ taking actions $\va^{i_{1:m}}$ in state $s$ given that the actions taken by agents $j_{1:k}$ are $\va^{j_{1:k}}$, with the rest of agents' actions marginalized out by expectation. As we show later in Section \ref{sec:HARL}, these functions allow to decompose the joint advantage function, thus shedding new light on the credit assignment problem.

\subsection{Dealing With Partial Observability}
\label{sec:partial-obs}

Notably, in some cooperative multi-agent tasks, the global state $s$ may be only partially observable to the agents. That is, instead of the omniscient global state, each agent can only perceive a local observation of the environment, which does not satisfy the \emph{Markov property}. The model that accounts for partial observability is Decentralized Partially Observable Markov Decision Process (Dec-POMDP) \citep{oliehoek2016concise}. However, Dec-POMDP is proved to be NEXP-complete \citep{bernstein2002complexity} and requires super-exponential time to solve in the worst case \citep{zhang2021multi}. To obtain tractable results, we assume full observability in theoretical derivations and let each agent take actions conditioning on the global state, \emph{i.e.}, $a_t^i \sim \pi^i(\cdot^i|s)$, thereby arriving at practical algorithms. In literature \citep{yang2018mean, kuba2021settling, wang2023more}, this is a common modeling choice for rigor, consistency, and simplicity of the proofs.

In our implementation, we either compensate for partial observability by employing RNN so that agent actions are conditioned on the action-observation history, or directly use the MLP network so that agent actions are conditioned on the partial observations. Both of them are common approaches adopted by existing work, including MAPPO \citep{mappo}, QMIX \citep{rashid2018qmix}, COMA \citep{foerster2018counterfactual}, OB \citep{kuba2021settling}, MACPF \citep{wang2023more} etc.. From our experiments (Section \ref{sec:experiments}), we show that both approaches are capable of solving partially observable tasks.

\subsection{The State of Affairs in Cooperative MARL}
\label{sec:affairs}

Before we review existing SOTA algorithms for cooperative MARL, we introduce two settings in which the algorithms can be implemented. Both of them can be considered appealing depending on the application, but their benefits also come with limitations which, if not taken care of, may deteriorate an algorithm's performance and applicability.

\subsubsection{Homogeneity \emph{vs.} Heterogeneity}
\label{sec:hh}

The first setting is that of \textit{homogeneous} policies, \emph{i.e.}, those where all agents share one set of policy parameters: $\pi^i=\pi, \forall i \in \mathcal{N}$, so that $\vpi=(\pi, \dots, \pi)$ \citep{de2020independent, mappo}, commonly referred to as \emph{Full Parameter Sharing} (FuPS) \citep{christianos2021scaling}. This approach enables a straightforward adoption of an RL algorithm to MARL, and it does not introduce much computational and sample complexity burden with the increasing number of agents. As such, it has been a common practice in the MARL community to improve sample efficiency and boost algorithm performance \citep{sunehag2018value, foerster2018counterfactual, rashid2018qmix}. However, FuPS could lead to an exponentially-suboptimal outcome in the extreme case (see Example \ref{eg:suboptimal} in Appendix \ref{appendix:proof-of-examples}). While agent identity information could be added to observation to alleviate this difficulty, FuPS+id still suffers from interference during agents' learning process in scenarios where they have different abilities and goals, resulting in poor performance, as analysed by \cite{christianos2021scaling} and shown by our experiments (Figure \ref{fig:humanoid}). One remedy is the \emph{Selective Parameter Sharing} (SePS) \citep{christianos2021scaling}, which only shares parameters among similar agents. Nevertheless, this approach has been shown to be suboptimal and highly scenario-dependent, emphasizing the need for prior understanding of task and agent attributes to effectively utilize the SePS strategy \citep{hu2022policy}. More severely, both FuPS and SePS require the observation and action spaces of agents in a sharing group to be the same, restricting their applicability to the general heterogeneous-agent setting. Existing work that extends parameter sharing to heterogeneous agents relies on \emph{padding} \citep{terry2020parameter}, which also cannot be generally applied. To summarize, algorithms relying on parameter sharing potentially suffer from compromised performance and applicability.

A more ambitious approach to MARL is to allow for \textit{heterogeneity} of policies among agents, \emph{i.e.}, to let $\pi^i$ and $\pi^j$ be different functions when $i\neq j\in\mathcal{N}$. This setting has greater applicability as heterogeneous agents can operate in different action spaces. Furthermore, thanks to this model's flexibility they may learn more sophisticated joint behaviors. Lastly, they can recover homogeneous policies as a result of training, if that is indeed optimal.

Nevertheless, training heterogeneous agents is highly non-trivial. Given a joint reward, an individual agent may not be able to distill its own contribution to it --- a problem known as \textit{credit assignment} \citep{foerster2018counterfactual, kuba2021settling}. Furthermore, even if an agent identifies its improvement direction, it may conflict with those of other agents when not optimised properly. We provide two examples to illustrate this phenomenon.

The first one is shown in Figure \ref{fig:simultaneous-vs-sequential}. We design a single-state differentiable game where two agents play continuous actions $a^1, a^2 \in \mathbb{R}$ respectively, and the reward function is $r(a^1, a^2) = a^1a^2$. When we initialise agent policies in the second or fourth quadrants and set a large learning rate, the simultaneous update approach could result in a decrease in joint reward. In contrast, the sequential update proposed in this paper enables agent 2 to fully adapt to agent 1's updated policy and improves the joint reward.

\begin{figure}[tbp]
  \centering
  \includegraphics[width=0.8\linewidth]{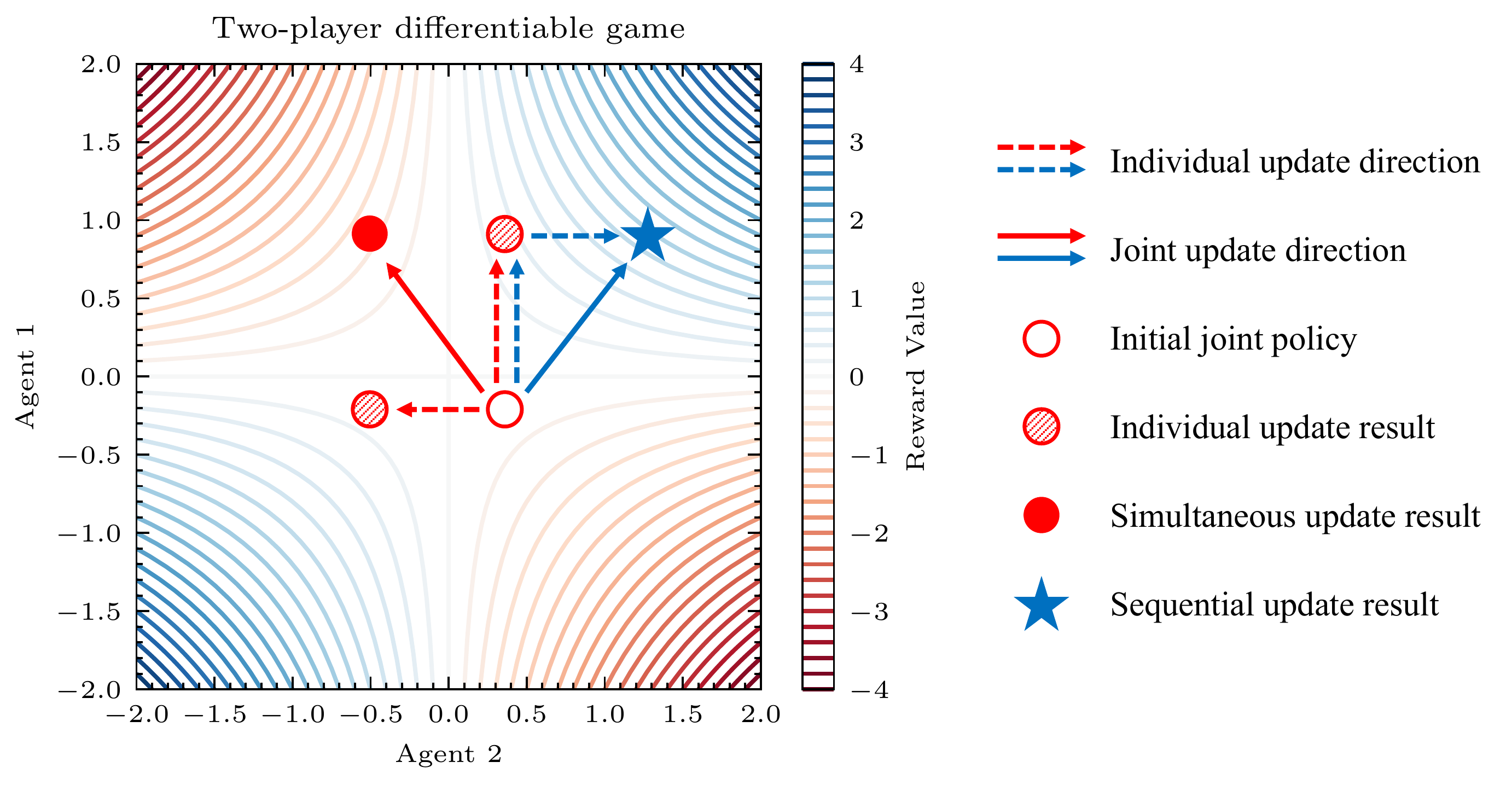}
    \caption{Example of a two-agent differentiable game with $r(a^1, a^2)=a^1a^2$. We initialise the two policies in the fourth quadrant. Under the straightforward simultaneous update scheme (red), agent 1 takes a positive update to improve the joint reward, meanwhile agent 2 moves towards the negative axis for the same purpose. However, their update directions conflict with each other and lead to a decrease in the joint return. By contrast, under our proposed sequential update scheme (blue), agent 1 updates first, and agent 2 adapts to agent 1' updated policy, jointly leading to improvement.}
  
  \label{fig:simultaneous-vs-sequential}
\end{figure}

We consider a matrix game with discrete action space as the second example. Our matrix game is illustrated as follows:
\begin{restatable}{example}{needscare}
\label{eg:needscare}
Let's consider a fully-cooperative game with $2$ agents, one state, and the joint action space $\{0, 1\}^2$, where the reward is given by $r(0, 0)=0, r(0,1)=r(1,0)=2,$ and $r(1, 1)=-1$. Suppose that $\pi_{\text{old}}^i(0) > 0.6$ for $i=1,2$. Then, if agents $i$ update their policies by 
\begin{align}
\pi_{\text{new}}^i = \argmax_{\color{red}\pi^i\color{black}}\E_{\ra^i\sim\color{red}\pi^i\color{black}, \ra^{-i}\sim\pi^{-i}_{\text{old}}}\big[ A_{\vpi_{\text{old}}}(\ra^i, \ra^{-i})\big], \forall i\in\mathcal{N}, \nonumber
\end{align}
then the resulting policy will yield a lower return, 
\begin{align}
J(\vpi_{\text{old}}) > J(\vpi_{\text{new}}) = \min_{\vpi}J(\vpi). \nonumber
\end{align}
\end{restatable}

This example helpfully illustrates the miscoordination problem when agents conduct independent reward maximisation simultaneously. A similar miscoordination problem when heterogeneous agents update at the same time is also shown in Example 2 of \cite{alos2010logit}.

Therefore, our discussion in this section not only implies that homogeneous algorithms could have restricted performance and applicability, but also highlight that heterogeneous algorithms should be developed with extra care when not optimised properly (large learning rate in Figure \ref{fig:simultaneous-vs-sequential} and independent reward maximisation in Example \ref{eg:needscare}), which could be common in complex high-dimensional problems. In the next subsection, we describe existing SOTA actor-critic algorithms which, while often very effective, are still not impeccable, as they suffer from one of the above two limitations.

\subsubsection{Analysis of Existing Work}
\label{sec:sota}
MAA2C  \citep{papoudakis2021benchmarking} extends the A2C \citep{mnih2016asynchronous} to MARL by replacing the RL optimisation (single-agent policy) objective with the MARL one (joint policy),
\begin{align}
    \label{eq:maa2c}
    \mathcal{L}^{\text{MAA2C}}(\vpi) \triangleq \E_{\rs\sim\vpi, \rva\sim\vpi}\big[ A_{\vpi_{\text{old}}}(\rs, \rva) \big],
\end{align}
which computes the gradient with respect to every agent $i$'s policy parameters, and performs a gradient-ascent update for each agent. This algorithm is straightforward to implement and is capable of solving simple multi-agent problems \citep{papoudakis2021benchmarking}. We point out, however, that by simply following their own MAPG, the agents could perform uncoordinated updates, as illustrated in Figure \ref{fig:simultaneous-vs-sequential}. Furthermore, MAPG estimates have been proved to suffer from large variance which grows linearly with the number of agents \citep{kuba2021settling}, thus making the algorithm unstable. To assure greater stability, the following MARL methods, inspired by stable RL approaches, have been developed.

MADDPG \citep{maddpg} is a MARL extension of the popular DDPG algorithm \citep{lillicrap2015continuous}. At every iteration, every agent $i$ updates its deterministic policy by maximising the following objective 
\begin{align}
    \label{eq:maddpg}
    \mathcal{L}^{\text{MADDPG}}_{i}(\mu^{i}) \triangleq \E_{\rs\sim\beta_{\vmu_{\text{old}}}}\Big[ Q^{i}_{\vmu_{\text{old}}}\big(\rs, \mu^{i}(\rs)\big)\Big]
    = 
    \E_{\rs\sim\beta_{\vmu_{\text{old}}}}\Big[ Q_{\vmu_{\text{old}}}\big(\rs, \mu^{i}(\rs), \vmu^{-i}_{\text{old}}(\rs)\big)\Big],
\end{align}
where $\beta_{\vmu_{\text{old}}}$ is a state distribution that is not necessarily equivalent to $\rho_{\vmu_{\text{old}}}$, thus allowing for off-policy training. In practice, MADDPG maximises Equation (\ref{eq:maddpg}) by a few steps of gradient ascent. 
The main advantages of MADDPG include a small variance of its MAPG estimates---a property granted by deterministic policies \citep{silver2014deterministic}, as well as low sample complexity due to learning from off-policy data. Such a combination makes the algorithm competitive on certain continuous-action tasks \citep{maddpg}. However, MADDPG does not address the multi-agent credit assignment problem \citep{foerster2018counterfactual}. Plus, when training the decentralised actors, MADDPG does not take into account the updates agents have made and naively uses the off-policy data from the replay buffer which, much like in Section \ref{sec:hh}, leads to uncoordinated updates and suboptimal performance in the face of harder tasks \citep{peng2021facmac, ray}. MATD3 \citep{ackermann2019reducing} proposes to reduce overestimation bias in MADDPG using double centralized critics, which improves its performance and stability but does not help with getting rid of the aforementioned limitations.

MAPPO \citep{mappo} is a relatively straightforward extension of PPO \citep{ppo} to MARL. In its default formulation, the agents employ the trick of \textit{parameter sharing} described in the previous subsection. As such, the policy is updated to maximise 
\begin{align}
\scriptsize
    \label{eq:mappo}
    \mathcal{L}^{\text{MAPPO}}(\pi) \triangleq \E_{\rs\sim\rho_{\vpi_{\text{old}}}, \rva\sim\vpi_{\text{old}}}\Bigg[ \sum_{i=1}^{n} \min\Big( \frac{\pi(\ra^i|\rs)}{\pi_{\text{old}}(\ra^i|\rs)}A_{\vpi_{\text{old}}}(\rs, \rva), \text{clip}\big(\frac{\pi(\ra^i|\rs)}{\pi_{\text{old}}(\ra^i|\rs)}, 1\pm\epsilon\big)A_{\vpi_{\text{old}}}(\rs, \rva)\Big)\Bigg],
\end{align}
where the $\text{clip}(\cdot, 1\pm\epsilon)$ operator clips the input to $1-\epsilon$/$1+\epsilon$ if it is below/above this value. Such an operation removes the incentive for agents to make large policy updates, thus stabilising the training effectively. Indeed, the algorithm's performance on the StarCraftII benchmark is remarkable, and it is accomplished by using only on-policy data. Nevertheless, the parameter-sharing strategy limits the algorithm's applicability and could lead to its suboptimality when agents have different roles. In trying to avoid this issue, one can implement the algorithm without parameter sharing, thus making the agents simply take simultaneous PPO updates meanwhile employing a joint advantage estimator. In this case, the updates could be uncoordinated, as we discussed in Section \ref{sec:hh}.

In summary, all these algorithms do not possess performance guarantees. 
Altering their implementation settings to avoid one of the limitations from Section \ref{sec:hh} makes them, at best, fall into another. This shows that the MARL problem introduces additional complexity into the single-agent RL setting, and needs additional care to be rigorously solved. With this motivation, in the next section, we propose novel heterogeneous-agent methods based on \emph{sequential update} with correctness guarantees.

\section{Our Methods}
\label{sec:HARL} 

The purpose of this section is to introduce Heterogeneous-Agent Reinforcement Learning (HARL) algorithm series which we prove to solve cooperative problems theoretically. HARL algorithms are designed for the general and expressive setting of heterogeneous agents, and their essence is to coordinate agents' updates, thus resolving the challenges in Section \ref{sec:hh}. We start by developing a theoretically justified Heterogeneous-Agent Trust Region Learning (HATRL) procedure in Section \ref{subsec:HATRL} and deriving practical algorithms, namely HATRPO and HAPPO, as its tractable approximations in Section \ref{subsec:pracalgo}.
We further introduce the novel Heterogeneous-Agent Mirror Learning (HAML) framework in Section \ref{subsec:haml}, which strengthens performance guarantees of HATRPO and HAPPO (Section \ref{subsec:hatrpo-happo-haml-instances}) and provides a general template for cooperative MARL algorithmsic design, leading to more HARL algorithms (Section \ref{subsec:new-haml}).

\subsection{Heterogeneous-Agent Trust Region Learning (HATRL)}
\label{subsec:HATRL}

Intuitively, if we parameterise all agents separately and let them learn one by one, then we will break the homogeneity constraint and allow the agents to coordinate their updates, thereby avoiding the two limitations from Section \ref{sec:affairs}. 
Such coordination can be achieved, for example, by accounting for previous agents' updates in the optimization objective of the current one along the aforementioned sequence.
Fortunately, this idea is embodied in the multi-agent advantage function $A_{\boldsymbol{\pi}}^{i_m}\left(s, \va^{i_{1:m-1}}, a^{i_m} \right) $
which allows agent $i_m$ to evaluate the utility of its action $a^{i_m}$ given actions of previous agents $\va^{i_{1:m-1}}$.
Intriguingly, multi-agent advantage functions allow for rigorous decomposition of the joint advantage function, as described by the following pivotal lemma.

\begin{restatable}[Multi-Agent Advantage Decomposition]{lemma}{maadlemma}
    \label{lemma:maadlemma}
In any cooperative Markov games, given a joint policy $\boldsymbol{\pi}$, for any state $s$, and any agent subset $i_{1:m}$, the below equation holds. 
    \begin{align}
        A^{i_{1:m}}_{\boldsymbol{\pi}}\left(s, \va^{i_{1:m}}\right)
        = \sum_{j=1}^{m}A^{i_{j}}_{\boldsymbol{\pi}}\left(s, \va^{i_{1:j-1}}, a^{i_{j}}\right). \nonumber
    \end{align}
\end{restatable}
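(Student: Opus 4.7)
The plan is to prove this as a telescoping sum, exploiting the fact that each term on the right-hand side is a difference of two consecutive multi-agent $Q$-functions along the sequence $i_1, i_2, \dots, i_m$.

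First I would unfold the definition of the multi-agent advantage given in Equation~\eqref{eq:single-agent-advantage}. Applied with $j_{1:k} = i_{1:j-1}$ and the singleton $\{i_j\}$ in the role of $i_{1:m}$, the $j$-th summand on the right-hand side becomes
\begin{align*}
A^{i_j}_{\boldsymbol{\pi}}\!\left(s, \va^{i_{1:j-1}}, a^{i_j}\right)
= Q^{i_{1:j}}_{\boldsymbol{\pi}}\!\left(s, \va^{i_{1:j}}\right) - Q^{i_{1:j-1}}_{\boldsymbol{\pi}}\!\left(s, \va^{i_{1:j-1}}\right).
\end{align*}
Summing over $j = 1, \dots, m$ and cancelling the neighbouring terms then leaves only the endpoints:
\begin{align*}
\sum_{j=1}^{m} A^{i_j}_{\boldsymbol{\pi}}\!\left(s, \va^{i_{1:j-1}}, a^{i_j}\right)
= Q^{i_{1:m}}_{\boldsymbol{\pi}}\!\left(s, \va^{i_{1:m}}\right) - Q^{i_{1:0}}_{\boldsymbol{\pi}}\!\left(s\right).
\end{align*}

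Second, I would handle the two boundary cases needed to match this with the claimed left-hand side. The $j=1$ term involves $i_{1:0} = \emptyset$, and by the convention stated in the definition (``when $m=0$, the function takes the form of $V_{\vpi}(s)$''), we have $Q^{\emptyset}_{\vpi}(s) = V_{\vpi}(s)$. On the other end, applying the definition of $A^{i_{1:m}}_{\boldsymbol{\pi}}$ with an empty conditioning set $j_{1:k}=\emptyset$ yields $A^{i_{1:m}}_{\boldsymbol{\pi}}(s, \va^{i_{1:m}}) = Q^{i_{1:m}}_{\boldsymbol{\pi}}(s, \va^{i_{1:m}}) - V_{\vpi}(s)$. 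Combining these identifications with the telescoped sum gives exactly the lemma.

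The argument is essentially bookkeeping, so there is no genuine obstacle; the only care needed is to make sure the conditioning set $j_{1:k}$ used when unfolding each summand indeed matches the prefix $i_{1:j-1}$ (so that $Q^{j_{1:k}, i_j} = Q^{i_{1:j}}$), and to justify the two boundary conventions $Q^{\emptyset}_{\vpi}(s) = V_{\vpi}(s)$ and $A^{i_{1:m}}_{\vpi}(s, \va^{i_{1:m}}) = Q^{i_{1:m}}_{\vpi}(s, \va^{i_{1:m}}) - V_{\vpi}(s)$ explicitly, as these are what let the telescoping collapse to the stated left-hand side. Notably, the proof makes no use of the order of the agents inside $i_{1:m}$ beyond the fact that the same ordering is used on both sides, so the identity holds for any permutation, which will be convenient later when the sequential update scheme picks an arbitrary order.
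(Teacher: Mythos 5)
Your proposal is correct and is essentially the same telescoping argument the paper gives in Appendix B.2: the paper writes $A^{i_{1:m}}_{\boldsymbol{\pi}} = Q^{i_{1:m}}_{\boldsymbol{\pi}} - V_{\boldsymbol{\pi}}$ as the telescoping sum $\sum_{k=1}^{m}\bigl[Q^{i_{1:k}}_{\boldsymbol{\pi}} - Q^{i_{1:k-1}}_{\boldsymbol{\pi}}\bigr]$ and identifies each summand with $A^{i_k}_{\boldsymbol{\pi}}(s,\va^{i_{1:k-1}},a^{i_k})$, which is your argument read in the reverse direction. Your explicit attention to the boundary conventions $Q^{\emptyset}_{\vpi}=V_{\vpi}$ is slightly more careful than the paper's terse presentation but introduces nothing different in substance.
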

For proof see Appendix \ref{appendix:theoretical-matrpo}.  
Notably, Lemma \ref{lemma:maadlemma} holds in general for  cooperative Markov games, with no need for  any assumptions on the decomposability of the joint value function such as  those in VDN \citep{sunehag2018value}, QMIX \citep{rashid2018qmix} or Q-DPP \citep{yang2020multi}.

\begin{figure}[tbp]
  \centering
  \includegraphics[width=\linewidth]{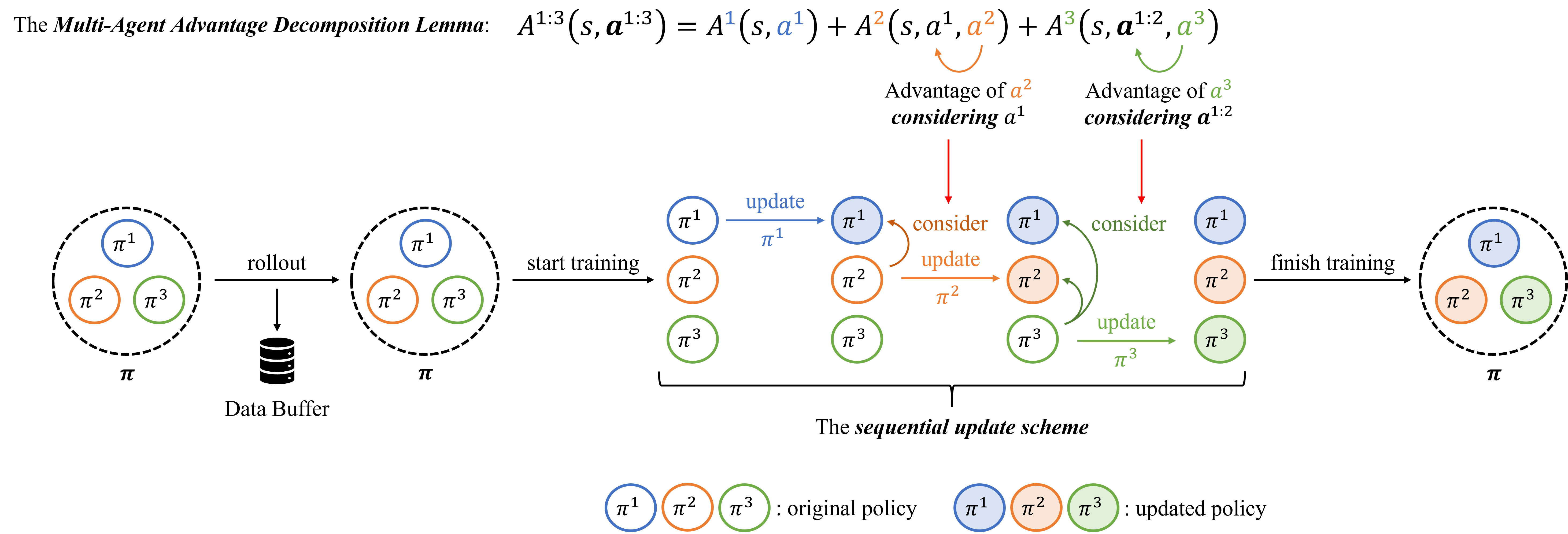}
    \caption{The \textsl{multi-agent advantage decomposition lemma} and the \textsl{sequential update scheme} are naturally consistent. The former (upper in the figure) decomposes joint advantage into sequential advantage evaluations, each of which takes into consideration previous agents' actions. Based on this, the latter (lower in the figure) allows each policy to be updated considering previous updates during the training stage. The rigor of their connection is embodied in Lemma \ref{lemma:trpotosadtrpo} and Lemma \ref{lemma:hamo}, where multi-agent advantage decomposition lemma is crucial for the proofs and leads to algorithms that employ sequential update scheme.}
  
  \label{fig:maad-sus}
\end{figure}

Lemma \ref{lemma:maadlemma} confirms that a sequential update is an effective approach to search for the direction of performance improvement (i.e., joint actions with positive advantage values) in multi-agent learning.   
That is, imagine that agents take actions sequentially by following an arbitrary order  $i_{1:n}$. Let agent $i_{1}$ take action $\bar{a}^{i_{1}}$ such that { $A_{\boldsymbol{\pi}}^{i_{1}}(s, \bar{a}^{i_{1}}) > 0$}, and then, for the remaining  $m=2, \dots, n$, each agent $i_{m}$ takes an action $\bar{a}^{i_{m}}$ such that { $A_{\boldsymbol{\pi}}^{i_{m}}(s, \bar{\va}^{i_{1:m-1}}, \bar{a}^{i_{m}}) > 0$}. 
For the induced joint action $\bar{\va}$, Lemma \ref{lemma:maadlemma} assures that { $A_{\boldsymbol{\pi}}(s, \bar{\va})$} is positive, thus the performance is guaranteed to improve.     
To formally extend the above process into a policy iteration procedure with monotonic improvement guarantee, we begin by introducing the following definitions.

\begin{restatable}{definition}{localsurrogate}
    \label{definition:localsurrogate}
     Let $\boldsymbol{\pi}$ be a joint policy, $\boldsymbol{\bar{\pi}}^{i_{1:m-1}}=\prod_{j=1}^{m-1}\bar{\pi}^{i_j}$ be some \textbf{other} joint policy of agents $i_{1:m-1}$, and $\hat{\pi}^{i_{m}}$ be some \textbf{other} policy of agent $i_{m}$. Then
     \begin{align}
         L^{i_{1:m}}_{\boldsymbol{\pi}}\left( \boldsymbol{\bar{\pi}}^{i_{1:m-1}}, \hat{\pi}^{i_{m}} \right) 
         \triangleq
         \E_{\rs \sim \rho_{\boldsymbol{\pi}}, \rva^{i_{1:m-1}}\sim\boldsymbol{\bar{\pi}}^{i_{1:m-1}}, \ra^{i_m}\sim\hat{\pi}^{i_{m}}}
         \left[  A_{\boldsymbol{\pi}}^{i_{m}}\left(\rs, \rva^{i_{1:m-1}}, \ra^{i_{m}}\right) \right].
         \nonumber
     \end{align}
\end{restatable}
Note that, for any $\boldsymbol{\bar{\pi}}^{i_{1:m-1}}$, we have
 \begin{align}
    \label{eq:nice-maad-property}
     L^{i_{1:m}}_{\boldsymbol{\pi}}\left( \boldsymbol{\bar{\pi}}^{i_{1:m-1}}, \pi^{i_{m}} \right) 
     &=
     \E_{\rs \sim \rho_{\boldsymbol{\pi}}, \rva^{i_{1:m-1}}\sim\boldsymbol{\bar{\pi}}^{i_{1:m-1}}, \ra^{i_m}\sim\pi^{i_{m}}}
     \left[  A_{\boldsymbol{\pi}}^{i_{m}}\left(\rs, \rva^{i_{1:m-1}}, \ra^{i_{m}}\right) \right]
     \nonumber\\
     &= \E_{\rs \sim \rho_{\boldsymbol{\pi}}, \rva^{i_{1:m-1}}\sim\boldsymbol{\bar{\pi}}^{i_{1:m-1}} }
     \left[  \E_{\ra^{i_m}\sim\pi^{i_m}}
     \left[A_{\boldsymbol{\pi}}^{i_{m}}\left(\rs, \rva^{i_{1:m-1}}, \ra^{i_{m}}\right) \right] \right] = 0.
 \end{align}
 
Building on Lemma \ref{lemma:maadlemma} and Definition \ref{definition:localsurrogate}, we derive the bound for joint policy update.
\begin{restatable}{lemma}{trpotosadtrpo}
\label{lemma:trpotosadtrpo}
    Let $\boldsymbol{\pi}$ be a joint policy. Then, for any joint policy $\boldsymbol{\bar{\pi}}$, we have
    \begin{align}
    &J(\boldsymbol{\bar{\pi}}) \geq
        J(\boldsymbol{\pi}) +
        \sum_{m=1}^{n}\left[L^{i_{1:m}}_{\boldsymbol{\pi}}\left(
        \boldsymbol{\bar{\pi}}^{i_{1:m-1}}, \bar{\pi}^{i_{m}}\right)
        - C\text{{\normalfont D}}_{\text{KL}}^{\text{max}}(\pi^{i_{m}}, \bar{\pi}^{i_{m}})\right], \nonumber\\
    &\qquad \text{ where } C = \frac{4\gamma\max_{s, \va}|A_{\boldsymbol{\pi}}(s, \va)|}{(1-\gamma)^{2}}. 
    \end{align}
\end{restatable}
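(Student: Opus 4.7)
The plan is to reduce the claim to the classical single-agent trust region performance bound applied to the joint policy, and then use the multi-agent advantage decomposition together with the product structure of joint policies to split both the surrogate term and the KL penalty agent-by-agent.

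First, I would view the Markov game as a single MDP over the joint action space $\boldsymbol{\mathcal{A}}$. The Kakade--Langford performance difference identity combined with the standard total-variation coupling argument (the one used in the classical TRPO derivation) immediately yields
\begin{align*}
J(\vbarpi) \;\geq\; J(\vpi) + \E_{\rs \sim \rho_{\vpi},\, \rva \sim \vbarpi}\bigl[A_{\vpi}(\rs,\rva)\bigr] - C\, D_{\text{KL}}^{\max}(\vpi, \vbarpi),
\end{align*}
with the exact constant $C = 4\gamma \max_{s,\va}|A_{\vpi}(s,\va)|/(1-\gamma)^2$. No multi-agent structure has been used yet.

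Second, I would apply Lemma~\ref{lemma:maadlemma} with the ordering $i_{1:n}$ inside the expectation, so that $A_{\vpi}(s,\va) = \sum_{m=1}^{n} A_{\vpi}^{i_{m}}(s, \va^{i_{1:m-1}}, a^{i_{m}})$. Because $\vbarpi$ factorises as $\prod_{j=1}^{n}\bar{\pi}^{i_j}$ and the $m$-th summand depends only on $\va^{i_{1:m}}$, the actions of agents $i_{m+1:n}$ integrate out trivially, so by linearity of expectation
\begin{align*}
\E_{\rs \sim \rho_{\vpi},\, \rva \sim \vbarpi}\bigl[A_{\vpi}(\rs,\rva)\bigr]
= \sum_{m=1}^{n} L^{i_{1:m}}_{\vpi}\bigl(\vbarpi^{i_{1:m-1}}, \bar{\pi}^{i_m}\bigr),
\end{align*}
matching Definition~\ref{definition:localsurrogate} term by term.

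Third, I would bound the joint KL by a sum of single-agent KLs. The chain rule for KL divergence applied to the product distributions $\vpi(\cdot|s)=\prod_m \pi^{i_m}(\cdot^{i_m}|s)$ and $\vbarpi(\cdot|s)=\prod_m \bar{\pi}^{i_m}(\cdot^{i_m}|s)$ gives $\KL(\vpi(\cdot|s)\,\|\,\vbarpi(\cdot|s)) = \sum_{m=1}^{n} \KL(\pi^{i_m}(\cdot^{i_m}|s)\,\|\,\bar{\pi}^{i_m}(\cdot^{i_m}|s))$, and pushing $\max_s$ through the finite sum yields $D_{\text{KL}}^{\max}(\vpi, \vbarpi) \le \sum_{m=1}^{n} D_{\text{KL}}^{\max}(\pi^{i_m}, \bar{\pi}^{i_m})$. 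Plugging the two decompositions into the joint-policy trust region bound of step one produces exactly the stated inequality.

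The main obstacle is really bookkeeping rather than any new idea: one must verify that marginalising $\rva^{-i_{1:m}}$ under the product policy recovers precisely the expression for $L_{\vpi}^{i_{1:m}}$ in Definition~\ref{definition:localsurrogate}, and that the single global constant $C$ can be ``shared'' across all per-agent KL penalties with no loss---which is exactly the subadditivity of $\max_s$ over a sum of nonnegative terms. The genuinely multi-agent content is isolated in Lemma~\ref{lemma:maadlemma}, which is what turns the joint expectation into a sequential one where the $m$-th agent conditions on the already-updated policies $\bar{\pi}^{i_{1:m-1}}$ of its predecessors.
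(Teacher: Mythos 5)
Your proposal is correct and follows essentially the same route as the paper's proof: apply the single-agent TRPO bound (Theorem~\ref{theorem:trpo-ineq}) to the joint policy, decompose the joint advantage via Lemma~\ref{lemma:maadlemma} to recover the $L^{i_{1:m}}_{\boldsymbol{\pi}}$ terms, and bound the joint maximal KL by the sum of per-agent maximal KLs using the product structure (the paper's Lemma~\ref{lemma:kl-inequality}, which is exactly your chain-rule-plus-subadditivity-of-$\max$ step). No gaps.
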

For proof see Appendix \ref{appendix:analysis-training-the-matrpo}.
This lemma provides an idea about how a joint policy can be improved. Namely, by Equation (\ref{eq:nice-maad-property}), we know that if any agents were to set the values of the above summands { $L_{\boldsymbol{\pi}}^{i_{1:m}}(\boldsymbol{\bar{\pi}}^{i_{1:m-1}}, \bar{\pi}^{i_m}) - C\text{{\normalfont D}}_{\text{KL}}^{\text{max}}(\pi^{i_m}, \bar{\pi}^{i_m})$} by sequentially updating their policies, each of them can always make its summand be zero by making no policy update (i.e., $\bar{\pi}^{i_m} = \pi^{i_m})$. This implies that any positive update will lead to an increment in summation.  Moreover, as there are $n$ agents making policy updates, the compound increment can be large,  leading to a substantial improvement. Lastly, note that this property holds with no requirement on the specific order by which agents make their updates; this allows for flexible scheduling on the update order at each iteration. 
To summarise, we propose the following Algorithm \ref{algorithm:theoretical-matrpo}. 

\begin{algorithm}
\caption{Multi-Agent Policy Iteration with Monotonic Improvement Guarantee}
\label{algorithm:theoretical-matrpo}
Initialise the joint policy $\boldsymbol{\pi}_{0} = (\pi^{1}_{0}, \dots, \pi^{n}_{0})$.

\For{$k=0, 1, \dots$}{
    Compute the advantage function $A_{\boldsymbol{\pi}_{k}}(s, \va)$ for all state-(joint)action pairs $(s, \va)$.
    
    Compute $\epsilon = \max_{s, \va}|A_{\boldsymbol{\pi}_k}(s, \va)|$
    and $C = \frac{4\gamma\epsilon}{(1-\gamma)^{2}}$.
    
    Draw a permutation $i_{1:n}$ of agents at random.
    
    \For{$m=1:n$}{
        Make an update $\pi^{i_{m}}_{k+1} = \argmax_{\pi^{i_{m}}}\left[ 
        L_{\boldsymbol{\pi}_k}^{i_{1:m}}\left(\boldsymbol{\pi}^{i_{1:m-1}}_{k+1}, \pi^{i_{m}}\right) - C\text{{\normalfont D}}_{\text{KL}}^{\text{max}}(\pi^{i_{m}}_{k}, \pi^{i_{m}})\right]$.
    }
}

\end{algorithm}

We want to highlight that the algorithm is markedly different from naively applying the  TRPO update on the joint policy of all agents.  
Firstly, our Algorithm \ref{algorithm:theoretical-matrpo} does not update the entire joint policy at once, but rather updates each agent's individual policy sequentially. 
Secondly, during the sequential update, each agent has a unique optimisation objective that takes into account all previous agents' updates, which is also the key for the monotonic improvement property to hold. We justify by the following theorem that Algorithm \ref{algorithm:theoretical-matrpo} enjoys monotonic improvement property.
\begin{restatable}{theorem}{matrpomonotonic} 
\label{theorem:monotonic-matrpo}
A sequence $\left(\boldsymbol{\pi}_{k}\right)_{k=0}^{\infty}$ of joint policies updated by Algorithm \ref{algorithm:theoretical-matrpo} has the monotonic improvement property, i.e., $J(\boldsymbol{\pi}_{k+1})\geq J(\boldsymbol{\pi}_{k})$ for all $k\in\mathbb{N}$.
\end{restatable}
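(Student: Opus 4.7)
The plan is to apply Lemma \ref{lemma:trpotosadtrpo} with $\boldsymbol{\pi} = \boldsymbol{\pi}_k$ and $\boldsymbol{\bar{\pi}} = \boldsymbol{\pi}_{k+1}$, and then argue that each summand on the right-hand side is non-negative by virtue of the optimisation rule in Algorithm \ref{algorithm:theoretical-matrpo}. This reduces the theorem to a one-line verification per agent plus a summation.

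First, I would fix an iteration index $k$ and the random permutation $i_{1:n}$ drawn at that iteration, and write out the bound from Lemma \ref{lemma:trpotosadtrpo} applied to $\boldsymbol{\pi}_k$ and $\boldsymbol{\pi}_{k+1}$:
\begin{align*}
J(\boldsymbol{\pi}_{k+1}) \geq J(\boldsymbol{\pi}_k) + \sum_{m=1}^{n}\Bigl[L^{i_{1:m}}_{\boldsymbol{\pi}_k}\bigl(\boldsymbol{\pi}^{i_{1:m-1}}_{k+1}, \pi^{i_m}_{k+1}\bigr) - C\,\text{D}_{\text{KL}}^{\text{max}}\bigl(\pi^{i_m}_k, \pi^{i_m}_{k+1}\bigr)\Bigr],
\end{align*}
where $C = 4\gamma\epsilon/(1-\gamma)^2$ is exactly the constant computed in the algorithm. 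It therefore suffices to show that each bracketed summand is non-negative.

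Next, I would examine a single summand for agent $i_m$. The update rule in Algorithm \ref{algorithm:theoretical-matrpo} defines $\pi^{i_m}_{k+1}$ as the \emph{maximiser} of $\pi^{i_m} \mapsto L^{i_{1:m}}_{\boldsymbol{\pi}_k}(\boldsymbol{\pi}^{i_{1:m-1}}_{k+1}, \pi^{i_m}) - C\,\text{D}_{\text{KL}}^{\text{max}}(\pi^{i_m}_k, \pi^{i_m})$. By plugging in the candidate $\pi^{i_m} = \pi^{i_m}_k$, we see that this objective is at least as large as
\begin{align*}
L^{i_{1:m}}_{\boldsymbol{\pi}_k}\bigl(\boldsymbol{\pi}^{i_{1:m-1}}_{k+1}, \pi^{i_m}_k\bigr) - C\,\text{D}_{\text{KL}}^{\text{max}}\bigl(\pi^{i_m}_k, \pi^{i_m}_k\bigr) = 0,
\end{align*}
where the first term vanishes by the identity in Equation (\ref{eq:nice-maad-property}) (the advantage of an agent with respect to its own on-policy distribution is zero in expectation), and the second term vanishes since the KL divergence of a distribution from itself is zero. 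Hence each summand for $m=1,\dots,n$ is non-negative.

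The main obstacle, if any, is simply keeping the indexing clean: one has to remember that the definition of $L^{i_{1:m}}_{\boldsymbol{\pi}_k}$ involves the \emph{already-updated} policies $\boldsymbol{\pi}^{i_{1:m-1}}_{k+1}$ of prior agents in the permutation but only the \emph{old} advantage function $A^{i_m}_{\boldsymbol{\pi}_k}$, so the argument for zeroing out Equation (\ref{eq:nice-maad-property}) still applies verbatim regardless of what the prior agents did. Summing the non-negative summands yields $J(\boldsymbol{\pi}_{k+1}) \geq J(\boldsymbol{\pi}_k)$, and since the permutation was arbitrary the conclusion holds for every $k \in \mathbb{N}$, completing the proof.
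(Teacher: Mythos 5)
Your proposal is correct and follows essentially the same argument as the paper: the paper's proof rederives the bound of Lemma \ref{lemma:trpotosadtrpo} inline from Theorem \ref{theorem:trpo-ineq} and Lemma \ref{lemma:kl-inequality}, whereas you cite that lemma directly, but the core step is identical — each summand is non-negative because $\pi^{i_m}_k$ is a feasible candidate whose objective value is zero by Equation (\ref{eq:nice-maad-property}) and the vanishing self-KL.
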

For proof see Appendix \ref{appendix:analysis-training-the-matrpo}. 
With the above theorem, we claim a successful development of Heterogeneous-Agent Trust Region Learning (HATRL), as it retains the monotonic improvement property of trust region learning.  
Moreover, we take a step further to prove Algorithm \ref{algorithm:theoretical-matrpo}'s asymptotic convergence behavior towards NE.  

\begin{restatable}{theorem}{matrpoconvergence}
\label{proposition:convergence-matrpo}
Supposing in Algorithm \ref{algorithm:theoretical-matrpo} any permutation of agents has a fixed non-zero probability to begin the update, a sequence $\left(\boldsymbol{\pi}_{k}\right)_{k=0}^{\infty}$ of joint policies generated by the algorithm, in a cooperative Markov game, has a non-empty set of limit points, each of which is a Nash equilibrium. 
\end{restatable}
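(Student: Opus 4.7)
The plan is to combine compactness of the joint policy space with the monotonic improvement property (Theorem \ref{theorem:monotonic-matrpo}), then identify Nash equilibria as the only possible limit points via a local variational argument on the KL-regularised surrogate. Two structural facts come for free. Since $\mathcal{S}$ and each $\mathcal{A}^i$ are finite, $\boldsymbol{\Pi}$ is a product of finitely many probability simplices and hence compact, so $(\boldsymbol{\pi}_k)$ automatically admits a non-empty set of limit points. Because rewards are bounded and $\gamma<1$, $J$ is bounded on $\boldsymbol{\Pi}$; combined with monotonicity in Theorem \ref{theorem:monotonic-matrpo}, this implies that $J(\boldsymbol{\pi}_k)$ converges and, in particular, $J(\boldsymbol{\pi}_{k+1})-J(\boldsymbol{\pi}_k)\to 0$.

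For the core argument, I would introduce, for each agent $i$, the improvement potential
\[
\Psi_i(\boldsymbol{\pi})\;:=\;\max_{\hat\pi^i\in\Pi^i}\bigl[\,L^{i}_{\boldsymbol{\pi}}(\hat\pi^i)\,-\,C(\boldsymbol{\pi})\,\text{D}_{\text{KL}}^{\max}(\pi^i,\hat\pi^i)\bigr],
\]
with $C(\boldsymbol{\pi})=4\gamma\max_{s,\va}|A_{\boldsymbol{\pi}}(s,\va)|/(1-\gamma)^2$. This quantity is non-negative (take $\hat\pi^i=\pi^i$) and continuous in $\boldsymbol{\pi}$ on the interior of the simplex. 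The key one-step inequality is $J(\boldsymbol{\pi}_{k+1})-J(\boldsymbol{\pi}_k)\geq\Psi_{i_1}(\boldsymbol{\pi}_k)$ whenever agent $i_1$ opens the permutation at iteration $k$: applying Lemma \ref{lemma:trpotosadtrpo} writes the improvement as a sum of $n$ summands, the $m{=}1$ summand equals $\Psi_{i_1}(\boldsymbol{\pi}_k)$ by construction, and each later summand $m\geq 2$, itself an algorithm-produced maximum, is at least its value at the no-update choice $\hat\pi^{i_m}=\pi^{i_m}_k$, which by Equation~\eqref{eq:nice-maad-property} equals $0$. Because the permutations are drawn independently of the policy trajectory and every permutation has a fixed positive probability, the second Borel--Cantelli lemma guarantees, almost surely, that for any limit point $\boldsymbol{\pi}_*$ with subsequence $\boldsymbol{\pi}_{k_j}\to\boldsymbol{\pi}_*$ and any agent $i$, there is a further sub-subsequence on which $i$ opens. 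Combining the one-step bound with $J(\boldsymbol{\pi}_{k+1})-J(\boldsymbol{\pi}_k)\to 0$ and continuity of $\Psi_i$ then forces $\Psi_i(\boldsymbol{\pi}_*)=0$ for every $i$.

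The remaining step is to translate $\Psi_i(\boldsymbol{\pi}_*)=0$ for all $i$ into the Nash property. Given an arbitrary $\tilde\pi^i\in\Pi^i$, I would test the mixture $\pi^i_\lambda:=(1-\lambda)\pi^i_*+\lambda\tilde\pi^i$ inside $\Psi_i(\boldsymbol{\pi}_*)$. Linearity of $L^{i}_{\boldsymbol{\pi}_*}$ in its argument together with $L^{i}_{\boldsymbol{\pi}_*}(\pi^i_*)=0$ gives $L^{i}_{\boldsymbol{\pi}_*}(\pi^i_\lambda)=\lambda L^{i}_{\boldsymbol{\pi}_*}(\tilde\pi^i)$, while a second-order Taylor expansion of the log delivers $\text{D}_{\text{KL}}^{\max}(\pi^i_*,\pi^i_\lambda)=O(\lambda^2)$. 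Hence $0=\Psi_i(\boldsymbol{\pi}_*)\geq\lambda L^{i}_{\boldsymbol{\pi}_*}(\tilde\pi^i)-O(\lambda^2)$ for all small $\lambda>0$, so $L^{i}_{\boldsymbol{\pi}_*}(\tilde\pi^i)\leq 0$ for every $\tilde\pi^i$. Specialising $\tilde\pi^i(s)=\arg\max_{a}A^{i}_{\boldsymbol{\pi}_*}(s,a)$, using that $d>0$ implies $\rho_{\boldsymbol{\pi}_*}(s)>0$ pointwise, and noting $\max_a A^{i}_{\boldsymbol{\pi}_*}(s,a)\geq 0$ (since its $\pi^i_*$-expectation vanishes), I conclude $\max_a A^{i}_{\boldsymbol{\pi}_*}(s,a)=0$ for every $s$. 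The single-agent performance difference lemma applied to the unilateral deviation $(\pi^i_*,\boldsymbol{\pi}_*^{-i})\to(\tilde\pi^i,\boldsymbol{\pi}_*^{-i})$ then yields $J(\tilde\pi^i,\boldsymbol{\pi}_*^{-i})\leq J(\boldsymbol{\pi}_*)$ for every $\tilde\pi^i$, which is precisely Definition~\ref{definition:ne}.

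The hardest part will be the mixture step at the boundary of the simplex: when $\pi^i_*(a|s)=0$ for some $a$, a test $\tilde\pi^i$ that places mass on $a$ makes $\text{D}_{\text{KL}}^{\max}(\pi^i_*,\pi^i_\lambda)$ acquire a linear-in-$\lambda$ term, so the naive Taylor estimate breaks and the quadratic smallness used to beat the KL penalty is lost. I expect to handle this by noting that the KL penalty in Algorithm~\ref{algorithm:theoretical-matrpo} keeps each iterate $\pi^i_k$ in the interior of the simplex whenever $\pi^i_{k-1}$ was, and then either restricting the analysis to fully supported limit points, or treating a boundary limit point by $\epsilon$-smoothing $\pi^i_*\mapsto(1-\epsilon)\pi^i_*+\epsilon\,\mathrm{Unif}$ and passing $\epsilon\to 0^+$ via continuity of $J$.
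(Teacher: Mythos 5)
Your proposal is correct and follows essentially the same route as the paper's proof: your improvement potential $\Psi_i$ is exactly the paper's notion of TR-stationarity, your one-step bound $J(\boldsymbol{\pi}_{k+1})-J(\boldsymbol{\pi}_k)\geq\Psi_{i_1}(\boldsymbol{\pi}_k)$ together with positive permutation probabilities is the paper's Step 1 (it takes expectations over the permutation process where you invoke Borel--Cantelli), your mixture argument with the $O(\lambda^2)$ KL estimate is the paper's Step 2 in coordinate-free form, and the concluding Bellman-optimality step is its Step 3. The boundary-of-the-simplex subtlety you flag is real, but the paper's own gradient computation for the KL penalty silently assumes full support as well, so on that point you are, if anything, more careful than the published argument.
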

For proof see Appendix \ref{appendix:analysis-convergence-thematrpo}. In deriving this result, the novel details introduced by Algorithm \ref{algorithm:theoretical-matrpo} played an important role. The monotonic improvement property (Theorem \ref{theorem:monotonic-matrpo}), achieved through the multi-agent advantage decomposition lemma and the sequential update scheme, provided us with a guarantee of the convergence of the return. Furthermore, randomisation of the update order ensured that, at convergence, none of the agents is incentified to make an update. The proof is finalised by excluding the possibility that the algorithm converges at non-equilibrium points.

\subsection{Practical Algorithms}
\label{subsec:pracalgo}

When implementing Algorithm \ref{algorithm:theoretical-matrpo} in practice,  large state and action spaces could prevent agents from designating policies $\pi^i(\cdot|s)$ for each state $s$ separately. To handle this, we parameterise each agent's policy $\pi^i_{\theta^i}$ by  $\theta^{i}$, which, together with other agents' policies,  forms a joint policy $\boldsymbol{\pi}_{\vtheta}$  parametrised by $\vtheta = (\theta^1, \dots, \theta^n)$. 
In this subsection, we develop two deep MARL algorithms to optimise the $\vtheta$. 
\subsubsection{HATRPO} 
Computing $\text{{\normalfont D}}_{\text{KL}}^{\text{max}}\big(\pi^{i_m}_{\theta^{i_m}_k}, \pi^{i_m}_{\theta^{i_m}}\big)$  in Algorithm \ref{algorithm:theoretical-matrpo} is challenging; it requires evaluating the KL-divergence for all states at each iteration.  
Similar to TRPO, one can ease this maximal KL-divergence penalty {$\text{{\normalfont D}}_{\text{KL}}^{\text{max}}\big(\pi^{i_m}_{\theta^{i_m}_k}, \pi^{i_m}_{\theta^{i_m}}\big)$} by replacing it with the expected KL-divergence constraint  $\E_{\rs\sim\rho_{\boldsymbol{\pi}_{\vtheta_k}}}\Big[ \text{{\normalfont D}}_{\text{KL}}\big(\pi^{i_m}_{\theta^{i_m}_k}(\cdot|\rs), \pi^{i_m}_{\theta^{i_m}}(\cdot|\rs) \big) \Big] \leq \delta$ where $\delta$ is a threshold hyperparameter and  
the expectation can be easily approximated by stochastic sampling. 
With the above amendment, we propose practical HATRPO algorithm in which,
at every iteration $k+1$, given a permutation of agents   $i_{1:n}$, agent $i_{m \in \{1,...,n\}}$ sequentially optimises its policy parameter $\theta^{i_m}_{k+1}$ by maximising a  constrained objective: 
\begin{align}
    \label{eq:matrpo-objective}
    &\theta^{i_m}_{k+1}
    = \argmax_{\theta^{i_m}}\E_{\rs\sim\rho_{\boldsymbol{\pi}_{{\vtheta}_{k}}}, \rva^{i_{1:m-1}}\sim\boldsymbol{\pi}^{i_{1:m-1}}_{{\vtheta}^{i_{1:m-1}}_{k+1}}, \ra^{i_m}\sim\pi^{i_m}_{\theta^{i_m}}}\big[ 
    A^{i_m}_{\boldsymbol{\pi}_{\vtheta_k}}(\rs, \rva^{i_{1:m-1}}, \ra^{i_m})
    \big],\nonumber\\
    &\quad \quad \quad \quad \quad \quad \text{subject to   }\E_{\rs\sim\rho_{\boldsymbol{\pi}_{\vtheta_{k}}}}\big[ 
    \text{{\normalfont D}}_{\text{KL}}\big(\pi^{i_m}_{\theta^{i_m}_{k}}(\cdot|\rs), \pi^{i_m}_{\theta^{i_m}}(\cdot|\rs)\big)\big] \leq \delta.
\end{align}
To compute the above equation, similar to TRPO, one can apply  a linear approximation to the objective function and a quadratic approximation to the KL constraint;  the optimisation problem in Equation (\ref{eq:matrpo-objective}) can be solved by a closed-form  update rule as 
\begin{align}
    \theta^{i_m}_{k+1} = \theta^{i_m}_k + \alpha^j \sqrt{\frac{2\delta}{\vg^{i_m}_k(\mH^{i_m}_k)^{-1}\vg^{i_m}_k}} (\mH^{i_m}_k)^{-1}\vg^{i_m}_k \ , 
        \label{eq:matrpo-update}
\end{align}
where {\small $\mH^{i_m}_{k} = \nabla^{2}_{\theta^{i_m}}\E_{\rs\sim\rho_{\boldsymbol{\pi}_{\vtheta_k}}}\big[ \text{{\normalfont D}}_{\text{KL}}\big( \pi^{i_m}_{\theta^{i_m}_k}(\cdot|\rs),  \pi^{i_m}_{\theta^{i_m}}(\cdot|\rs) \big)\big] \big|_{\theta^{i_m}=\theta^{i_m}_k}$}  is the Hessian of the expected KL-divergence,  $\vg^{i_m}_k$ is the gradient of the objective in Equation (\ref{eq:matrpo-objective}), 
$\alpha^j<1$ is a positive coefficient that is found via backtracking line search, and the product of  {\small $(\mH^{i_m}_k)^{-1}\vg^{i_m}_k$} can be efficiently computed with conjugate gradient algorithm.   

Estimating 
{$\E_{\rva^{i_{1:m-1}}\sim\boldsymbol{\pi}^{i_{1:m-1}}_{\vtheta_{k+1}},
\ra^{i_{m}}\sim\pi^{i_{m}}_{\theta^{i_m}}}\Big[A^{i_{m}}_{\boldsymbol{\pi}_{\vtheta_{k}}}\big(\rs, \rva^{i_{1:m-1}}, \ra^{i_{m}}\big)\Big]$} is the last missing piece for HATRPO, which poses new challenges because  each agent's objective has to take into account all previous agents' updates, and the size of input values. 
Fortunately,  with the following proposition, we can efficiently estimate this objective by a joint advantage estimator. 
\begin{restatable}{proposition}{advantageestimation}
    \label{lemma:advantage-estimation}
    Let $\boldsymbol{\pi}=\prod_{j=1}^{n}\pi^{i_j}$ be a joint policy, and  $A_{\boldsymbol{\pi}}(\rs, \rva)$ be its joint advantage function. Let $\boldsymbol{\bar{\pi}}^{i_{1:m-1}} =\prod_{j=1}^{m-1}\bar{\pi}^{i_j}$ be some \textbf{other} joint policy of agents $i_{1:m-1}$, and $\hat{\pi}^{i_m}$ be some \textbf{other} policy of agent $i_m$. Then, for every state $s$,
    \begin{align}
   & \E_{\rva^{i_{1:m-1}}\sim\boldsymbol{\bar{\pi}}^{i_{1:m-1}}, \ra^{i_m}\sim\hat{\pi}^{i_m}}\big[ A^{i_{m}}_{\boldsymbol{\pi}}\big(s, \rva^{i_{1:m-1}}, \ra^{i_{m}}\big) \big] \nonumber \\
        & \ \ \ \ \ \ \ \ \ \ \ \ \ \ \ \ \ \ \ \ \ \ \ \ \ \ \ \ \ \  \ \ \ \ \ \ \ \ \ \ \ \ \ \  = \E_{\rva\sim\boldsymbol{\pi}}\Big[
        \Big(\frac{\hat{\pi}^{i_m}(\ra^{i_m}|s) }{ \pi^{i_m}(\ra^{i_m}|s)} 
        -1\Big)  \frac{\boldsymbol{\bar{\pi}}^{i_{1:m-1}}(\rva^{i_{1:m-1}}|s)}
        {\boldsymbol{\pi}^{i_{1:m-1}}(\rva^{i_{1:m-1}}|s)}A_{\boldsymbol{\pi}}(s, \rva)  \Big].  
        \label{eq:joint-adv}
    \end{align}
\end{restatable}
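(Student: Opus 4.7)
The plan is to verify the identity by expanding both sides to a common intermediate form, namely an expression involving the multi-agent Q-functions $Q^{i_{1:m}}_{\vpi}$ and $Q^{i_{1:m-1}}_{\vpi}$. I would work from the right-hand side down to the left-hand side, since the RHS contains the importance-sampling ratios which are the natural object to manipulate, whereas the LHS is the clean target.

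First, I would split the RHS along the $\bigl(\tfrac{\hat{\pi}^{i_m}}{\pi^{i_m}} - 1\bigr)$ factor into two summands. For the first summand, I would perform a change of measure: since $\rva \sim \vpi = \vpi^{i_{1:m-1}} \cdot \pi^{i_m} \cdot \vpi^{-i_{1:m}}$, multiplying by the ratio $\tfrac{\bar{\vpi}^{i_{1:m-1}}(\rva^{i_{1:m-1}}|s)}{\vpi^{i_{1:m-1}}(\rva^{i_{1:m-1}}|s)} \cdot \tfrac{\hat{\pi}^{i_m}(\ra^{i_m}|s)}{\pi^{i_m}(\ra^{i_m}|s)}$ converts the joint law to $\bar{\vpi}^{i_{1:m-1}} \cdot \hat{\pi}^{i_m} \cdot \vpi^{-i_{1:m}}$; marginalising the $-i_{1:m}$ coordinates out of $A_{\vpi}(s,\rva) = Q_{\vpi}(s,\rva) - V_{\vpi}(s)$ and applying the definition of the multi-agent $Q$-function gives
\begin{align*}
\E_{\rva^{i_{1:m-1}}\sim\bar{\vpi}^{i_{1:m-1}},\,\ra^{i_m}\sim\hat{\pi}^{i_m}}\Bigl[ Q_{\vpi}^{i_{1:m}}\bigl(s,\rva^{i_{1:m}}\bigr) - V_{\vpi}(s)\Bigr].
\end{align*}
For the second summand, the same importance-sampling trick — this time only on the $i_{1:m-1}$ coordinates, since the $\hat{\pi}^{i_m}/\pi^{i_m}$ factor is replaced by $1$ — leaves the $i_m$ and $-i_{1:m}$ coordinates distributed according to $\vpi^{-i_{1:m-1}}$, and marginalising these out of $A_{\vpi}(s,\rva)$ yields
\begin{align*}
\E_{\rva^{i_{1:m-1}}\sim\bar{\vpi}^{i_{1:m-1}}}\Bigl[ Q_{\vpi}^{i_{1:m-1}}\bigl(s,\rva^{i_{1:m-1}}\bigr) - V_{\vpi}(s)\Bigr].
\end{align*}

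Subtracting, the $V_{\vpi}(s)$ terms cancel, and the outer expectation over $\ra^{i_m}\sim\hat{\pi}^{i_m}$ can be freely added to the second summand because $Q_{\vpi}^{i_{1:m-1}}$ does not depend on $a^{i_m}$. Collapsing the two expectations into one therefore yields
\begin{align*}
\E_{\rva^{i_{1:m-1}}\sim\bar{\vpi}^{i_{1:m-1}},\,\ra^{i_m}\sim\hat{\pi}^{i_m}}\Bigl[ Q_{\vpi}^{i_{1:m}}\bigl(s,\rva^{i_{1:m}}\bigr) - Q_{\vpi}^{i_{1:m-1}}\bigl(s,\rva^{i_{1:m-1}}\bigr)\Bigr],
\end{align*}
which by definition of $A_{\vpi}^{i_m}$ (Equation~\ref{eq:single-agent-advantage}) is exactly the LHS.

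The computation is essentially bookkeeping, so the main obstacle is notational rather than mathematical: one must be careful that the change of measure uses only the relevant marginal ratios (the $i_{1:m-1}$ ratio in one part, the combined ratio in the other), and that in each case the coordinates being marginalised — $-i_{1:m}$ in the first part and $-i_{1:m-1}$ in the second — line up precisely with the definition of the appropriate multi-agent $Q$-function. The ``$-1$'' in the factor $\bigl(\tfrac{\hat{\pi}^{i_m}}{\pi^{i_m}} - 1\bigr)$ is exactly the device that adjusts which coordinates get marginalised and produces the telescoping $Q^{i_{1:m}} - Q^{i_{1:m-1}}$; once this role is recognised, no further subtlety remains.
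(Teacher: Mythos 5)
Your proposal is correct and follows essentially the same route as the paper's proof: split the right-hand side along the $\bigl(\hat{\pi}^{i_m}/\pi^{i_m}-1\bigr)$ factor, change measure on the $i_{1:m-1}$ (and, for the first summand, $i_m$) coordinates, marginalise the remaining agents to obtain the multi-agent $Q$-functions, and telescope to $A^{i_m}_{\vpi}$. The only cosmetic difference is that the paper phrases the final step as $A^{i_{1:m}}_{\vpi}-A^{i_{1:m-1}}_{\vpi}$ and cites Lemma~\ref{lemma:maadlemma}, whereas you cancel the $V_{\vpi}(s)$ terms directly and invoke the definition in Equation~\ref{eq:single-agent-advantage}; these are equivalent.
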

For proof see Appendix \ref{appendix:proof-of-advantage-estimation}. One benefit of applying Equation (\ref{eq:joint-adv}) is that agents only need to maintain a joint advantage estimator $A_{\boldsymbol{\pi}}(\rs, \rva)$ rather than one centralised critic for each individual agent (e.g., unlike  CTDE methods such as MADDPG). 
Another practical benefit one can draw is that, given an estimator {$\hat{A}(\rs, \rva)$} of the advantage function {$A_{\boldsymbol{\pi}_{\vtheta_{k}}}(\rs, \rva)$}, for example, GAE \citep{schulman2015high}, 
{\small $\E_{\rva^{i_{1:m-1}}\sim\boldsymbol{\pi}^{i_{1:m-1}}_{\vtheta^{i_{1:m-1}}_{k+1}}, \ra^{i_{m}}\sim\pi^{i_{m}}_{\theta^{i_m}}}\left[
A^{i_{m}}_{\boldsymbol{\pi}_{\vtheta_{k}}}\left(s, \rva^{i_{1:m-1}}, \ra^{i_{m}} \right)\right]$} can be estimated
with an estimator  of 
\begin{align}
    \label{eq:mad-estimator}
        \Big(\frac{ \pi^{i_{m}}_{\theta^{i_m}}(\ra^{i_{m}}|s) }{ \pi^{i_{m}}_{\theta_{k}^{i_m}}(\ra^{i_{m}}|s) } - 1\Big)M^{i_{1:m}}\big(s, \rva \big),
        \ \ \ \ \ \text{where} \ \ 
        M^{i_{1:m}} = \frac{\boldsymbol{\pi}^{i_{1:m-1}}_{\vtheta^{i_{1:m-1}}_{k+1}}(\rva^{i_{1:m-1}}|s)}
        {\boldsymbol{\pi}^{i_{1:m-1}}_{\vtheta^{i_{1:m-1}}_{k}}(\rva^{i_{1:m-1}}|s)}\hat{A}\big(s, \rva \big).
\end{align}
Notably, Equation (\ref{eq:mad-estimator}) aligns nicely with the sequential update scheme in HATRPO. 
For agent  $i_m$,  since previous agents $i_{1:m-1}$ have already made their updates, the compound policy ratio for $M^{i_{1:m}}$ in Equation (\ref{eq:mad-estimator}) is easy to compute. 
Given a batch $\mathcal{B}$ of trajectories with length $T$, we can estimate the gradient with respect to policy parameters (derived in Appendix \ref{appendix:derive-grad}) as follows, 
\begin{align}
    \hat{\vg}^{i_m}_k = \frac{1}{|\mathcal{B}|}\sum_{\tau\in \mathcal{B}}\sum\limits_{t=0}^{T}M^{i_{1:m}}(\rs_t, \rva_t)\nabla_{\theta^{i_m}}\log\pi^{i_m}_{\theta^{i_m}}(\ra^{i_m}_t|\rs_t)\big|_{\theta^{i_m}=\theta^{i_m}_k}.\nonumber
\end{align}
The term $-1\cdot M^{i_{1:m}}(\rs, \rva)$ of Equation (\ref{eq:mad-estimator}) is not reflected in $\hat{\vg}^{i_m}_k$, as it only introduces a constant with zero gradient. Along with the Hessian  of the expected KL-divergence, \emph{i.e.}, $\mH^{i_m}_k$, 
we can update $\theta_{k+1}^{i_m}$ by following Equation (\ref{eq:matrpo-update}). 
The detailed pseudocode of HATRPO is listed in Appendix \ref{appendix:matrpo}.

\subsubsection{HAPPO}
To further alleviate the computation burden from $\mH^{i_m}_k$ in HATRPO, one can follow the idea of  PPO  by considering only using first-order derivatives. 
This is achieved by making agent $i_m$ choose a policy parameter $\theta^{i_m}_{k+1}$ which maximises the  clipping objective of 
\begin{align}
    & \E_{\rs\sim\rho_{\boldsymbol{\pi}_{\vtheta_{k}}}, \rva\sim\boldsymbol{\pi}_{\vtheta_{k}}}
    \Bigg[  
    \min\Bigg( \frac{ \pi^{i_{m}}_{\theta^{i_{m}}}(\ra^{i_m}|\rs) }{ \pi^{i_{m}}_{\theta^{i_{m}}_{k}}(\ra^{i_m}|\rs)} 
    M^{i_{1:m}}\left( \rs, \rva \right) ,
    \text{clip}\bigg( 
    \frac{ \pi^{i_{m}}_{\theta^{i_{m}}}(\ra^{i_m}|\rs) }{ \pi^{i_{m}}_{\theta^{i_{m}}_{k}}(\ra^{i_m}|\rs)},
    1\pm \epsilon
    \bigg)
    M^{i_{1:m}}\left( \rs, \rva \right) 
    \Bigg)
    \Bigg].
    \end{align}
The optimisation process can be performed by stochastic gradient methods such as Adam \citep{kingma2014adam}. 
We refer to the above procedure as HAPPO and Appendix \ref{appendix:mappo} for its full pseudocode.

\subsection{Heterogeneous-Agent Mirror Learning: A Continuum of Solutions to Cooperative MARL}
\label{subsec:haml}

Recently, Mirror Learning \citep{kuba2022mirror} provided a theoretical explanation of the effectiveness of TRPO and PPO in addition to the original trust region interpretation, and unifies a class of policy optimisation algorithms. Inspired by their work, we further discover a novel theoretical framework
for cooperative MARL, 
named \textsl{Heterogeneous-Agent Mirror Learning} (HAML), which enhances theoretical guarantees of HATRPO and HAPPO. As a proven template for algorithmic designs, HAML substantially generalises the desired guarantees of monotonic improvement and NE convergence to a continuum of algorithms and naturally hosts HATRPO and HAPPO as its instances, further explaining their robust performance. We begin by introducing the necessary definitions of HAML attributes: the drift functional. 

\begin{restatable}{definition}{definedrift}
\label{def:mirror}
Let $i\in\mathcal{N}$, a \textbf{heterogeneous-agent drift functional} (HADF) $\mathfrak{D}^{i}$ of $i$ consists of a map, which is defined as 
\begin{align}
    \mathfrak{D}^{i}: \boldsymbol{\Pi}\times\color{RoyalPurple}\boldsymbol{\Pi}\color{black}\times\color{WildStrawberry}\mathbb{P}(-i)\color{black}\times\mathcal{S} \rightarrow \{ \mathfrak{D}^{i}_{\vpi}(\cdot|s, \color{RoyalPurple}\vbarpi\color{WildStrawberry}^{j_{1:m}}\color{black}): \mathcal{P}(\mathcal{A}^i) \rightarrow  \mathbb{R} \}, \nonumber
\end{align}
such that for all arguments, 
under notation $\mathfrak{D}^{i}_{\boldsymbol{\pi}}\big(\hat{\pi}^i|s, \vbarpi^{j_{1:m}}\big) \triangleq \mathfrak{D}^{i}_{\boldsymbol{\pi}}\big(\hat{\pi}^i(\cdot^i|s) | s, \vbarpi^{j_{1:m}}(\cdot|s)\big)$,
\begin{enumerate}
    \item $\mathfrak{D}^{i}_{\boldsymbol{\pi}}\big(\hat{\pi}^i|s, \vbarpi^{j_{1:m}}\big) \geq \mathfrak{D}^{i}_{\boldsymbol{\pi}}\big(\pi^i|s, \vbarpi^{j_{1:m}}\big) =0$ (non-negativity), 
    \item $\mathfrak{D}^{i}_{\boldsymbol{\pi}}\big(\hat{\pi}^i|s, \vbarpi^{j_{1:m}}\big)$ has all G\^ateaux derivatives zero at $\hat{\pi}^{i}=\pi^i$ (zero gradient).
\end{enumerate}
We say that the HADF is positive if $\mathfrak{D}^{i}_{\vpi}(\hat{\pi}^i|s, \vbarpi^{j_{1:m}})=0, \forall s \in \mathcal{S}$ implies $\hat{\pi}^i=\pi^i$, and trivial if $\mathfrak{D}^{i}_{\vpi}(\hat{\pi}^i|s, \vbarpi^{j_{1:m}})=0, \forall s \in \mathcal{S}$ for all $\vpi, \vbarpi^{j_{1:m}}$, and $\hat{\pi}^i$.
\end{restatable}

Intuitively, the drift $\mathfrak{D}_{\vpi}^{i}(\hat{\pi}^i|s, \vbarpi^{j_{1:m}})$ is a notion of distance between $\pi^i$ and $\hat{\pi}^i$, given that agents $j_{1:m}$ just updated to $\vbarpi^{j_{1:m}}$. We highlight that, under this conditionality, the same update (from $\pi^i$ to $\hat{\pi}^i$) can have different sizes---this will later enable HAML agents to \textsl{softly} constraint their learning steps in a coordinated way. Before that, we introduce a notion that renders \textsl{hard} constraints, which may be a part of an algorithm design, or an inherent limitation.

\begin{restatable}{definition}{neighbourhood}
Let $i\in\mathcal{N}$.
We say that, $\mathcal{U}^i:\boldsymbol{\Pi}\times\Pi^i\rightarrow\mathbb{P}(\Pi^i)$ is a \textsl{neighbourhood operator} if $\forall \pi^i \in \Pi^i$, $\mathcal{U}_{\vpi}^i(\pi^i)$ contains a closed ball, \emph{i.e.}, there exists a state-wise monotonically non-decreasing metric $\chi:\Pi^i\times \Pi^i \rightarrow \mathbb{R}$ such that $\forall \pi^i\in\Pi^i$ there exists $\delta^i > 0$ such that $\chi(\pi^i, \bar{\pi}^i) \leq \delta^i \implies \bar{\pi}^i \in \mathcal{U}^i_{\vpi}(\pi^i)$. 
\end{restatable}

For every joint policy $\vpi$, we will associate it with its \textsl{sampling distribution}---a positive state distribution $\beta_{\vpi}\in\mathcal{P}(\mathcal{S})$ that is continuous in $\vpi$ \citep{kuba2022mirror}. With these notions defined, we introduce the main definition for HAML framework.

\begin{restatable}{definition}{mirror}
Let $i\in\mathcal{N}$, $j^{1:m}\in\mathbb{P}(-i)$, and $\mathfrak{D}^{i}$ be a HADF of agent $i$. The \textbf{heterogeneous-agent mirror operator} (HAMO) integrates the advantage function as
\begin{align}
    \big[ \mathcal{M}^{(\hat{\pi}^i)}_{\mathfrak{D}^{i}, \vbarpi^{j_{1:m}}} A_{\vpi}\big](s) \triangleq \E_{\rva^{j_{1:m}}\sim\vbarpi^{j_{1:m}}, \ra^i\sim\hat{\pi}^i}\Big[ A^i_{\vpi}(s, \rva^{j_{1:m}}, \ra^i) \Big] - \mathfrak{D}^i_{\vpi}\Big(\hat{\pi}^i \big| s, \vbarpi^{j_{1:m}}\Big).\nonumber
\end{align}
\end{restatable}
Note that when $\hat{\pi}^i=\pi^i$, HAMO evaluates to zero. Therefore, as the HADF is non-negative, a policy $\hat{\pi}^i$ that improves HAMO must make it positive and thus leads to the improvement of the multi-agent advantage of agent $i$. It turns out that, under certain configurations, agents' local improvements result in the joint improvement of all agents, as described by the lemma below, proved in Appendix \ref{apx:proof_hamo}.

\begin{restatable}[HAMO Is All You Need]{lemma}{dpi}
\label{lemma:hamo}
Let $\boldsymbol{\pi}_{\text{old}}$ and $\vpi_{\text{new}}$ be joint policies and let $i_{1:n}\in\text{\normalfont Sym}(n)$ be an agent permutation. Suppose that, for every state $s\in\mathcal{S}$ and every $m=1,\ldots,n$,
\begin{align}
    \label{ineq:what-mirror-step-does}
    \big[\mathcal{M}^{(\pi^{i_m}_{\text{new}})}_{\mathfrak{D}^{i_{m}}, \vpi_{\text{new}}^{i_{1:m-1}}} A_{\boldsymbol{\vpi_{\text{old}}}}\big](s) \geq 
    \big[\mathcal{M}^{(\pi^{i_m}_{\text{old}})}_{\mathfrak{D}^{i_{m}}, \vpi_{\text{new}}^{i_{1:m-1}}} A_{\boldsymbol{\vpi_{\text{old}}}}\big](s).
\end{align}
Then, $\boldsymbol{\pi}_{\text{new}}$ is jointly better than $\boldsymbol{\pi}_{\text{old}}$, so that for every state $s$,
\begin{align}
    V_{\boldsymbol{\pi}_{\text{new}}}(s) \geq V_{\boldsymbol{\pi}_{\text{old}}}(s). \nonumber
\end{align}
\end{restatable}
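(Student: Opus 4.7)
My plan is to reduce the statement to the (state-wise) Kakade--Langford performance difference identity by showing that $\E_{\rva\sim\vpi_{\text{new}}}[A_{\vpi_{\text{old}}}(s,\rva)]\geq 0$ for every state $s$.

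First, I would evaluate the baseline HAMO $[\mathcal{M}^{(\pi^{i_m}_{\text{old}})}_{\mathfrak{D}^{i_m},\vpi^{i_{1:m-1}}_{\text{new}}}A_{\vpi_{\text{old}}}](s)$. By item~1 of the HADF definition, $\mathfrak{D}^{i_m}_{\vpi_{\text{old}}}(\pi^{i_m}_{\text{old}}|s,\vpi^{i_{1:m-1}}_{\text{new}})=0$. Moreover, for any fixed state $s$ and any previous actions $\va^{i_{1:m-1}}$, writing $A^{i_m}_{\vpi_{\text{old}}}=Q^{i_{1:m}}_{\vpi_{\text{old}}}-Q^{i_{1:m-1}}_{\vpi_{\text{old}}}$ and averaging $Q^{i_{1:m}}_{\vpi_{\text{old}}}$ over $\ra^{i_m}\sim\pi^{i_m}_{\text{old}}$ reintroduces exactly the marginalisation that defines $Q^{i_{1:m-1}}_{\vpi_{\text{old}}}$, giving
\[\E_{\ra^{i_m}\sim\pi^{i_m}_{\text{old}}}\bigl[A^{i_m}_{\vpi_{\text{old}}}(s,\va^{i_{1:m-1}},\ra^{i_m})\bigr]=0.\]
This is a pointwise sharpening of Equation~(\ref{eq:nice-maad-property}). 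Combining the two observations yields $[\mathcal{M}^{(\pi^{i_m}_{\text{old}})}_{\mathfrak{D}^{i_m},\vpi^{i_{1:m-1}}_{\text{new}}}A_{\vpi_{\text{old}}}](s)=0$, so the hypothesis collapses to
\[\E_{\rva^{i_{1:m-1}}\sim\vpi^{i_{1:m-1}}_{\text{new}},\,\ra^{i_m}\sim\pi^{i_m}_{\text{new}}}\bigl[A^{i_m}_{\vpi_{\text{old}}}(s,\rva^{i_{1:m-1}},\ra^{i_m})\bigr]\;\geq\;\mathfrak{D}^{i_m}_{\vpi_{\text{old}}}\bigl(\pi^{i_m}_{\text{new}}\big|s,\vpi^{i_{1:m-1}}_{\text{new}}\bigr)\;\geq\;0\]
for every $s\in\mathcal{S}$ and every $m=1,\ldots,n$.

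Next, I would aggregate these $n$ pointwise inequalities using the Multi-Agent Advantage Decomposition (Lemma~\ref{lemma:maadlemma}) with $m=n$, so that $A^{i_{1:n}}_{\vpi_{\text{old}}}=A_{\vpi_{\text{old}}}$. Taking expectations under $\rva\sim\vpi_{\text{new}}$ on both sides of the decomposition identity, and noting that each summand $A^{i_j}_{\vpi_{\text{old}}}(s,\rva^{i_{1:j-1}},\ra^{i_j})$ depends on $\rva$ only through $\rva^{i_{1:j}}$, I obtain
\[\E_{\rva\sim\vpi_{\text{new}}}\bigl[A_{\vpi_{\text{old}}}(s,\rva)\bigr]=\sum_{m=1}^{n}\E_{\rva^{i_{1:m-1}}\sim\vpi^{i_{1:m-1}}_{\text{new}},\,\ra^{i_m}\sim\pi^{i_m}_{\text{new}}}\bigl[A^{i_m}_{\vpi_{\text{old}}}(s,\rva^{i_{1:m-1}},\ra^{i_m})\bigr]\;\geq\;0.\]

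Finally, I would invoke the state-wise Kakade--Langford performance difference identity
\[V_{\vpi_{\text{new}}}(s)-V_{\vpi_{\text{old}}}(s)=\sum_{t=0}^{\infty}\gamma^{t}\,\E_{\rs_{t},\,\rva_{t}\sim\vpi_{\text{new}}\,\mid\,\rs_{0}=s}\bigl[A_{\vpi_{\text{old}}}(\rs_{t},\rva_{t})\bigr],\]
whose inner expectation is $\geq 0$ at every reachable state by the previous display. Hence every term of the discounted sum is non-negative, yielding $V_{\vpi_{\text{new}}}(s)\geq V_{\vpi_{\text{old}}}(s)$ for all $s$, as required. The main obstacle is essentially bookkeeping: care is needed to verify that the zero-advantage identity holds \emph{pointwise} in $s$ and in $\va^{i_{1:m-1}}$ (not merely under the $\rho_{\vpi}$-averaged form of Equation~(\ref{eq:nice-maad-property})), and that the HADF truly vanishes when its two policy arguments coincide (item~1, not the zero-gradient item~2). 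Once these alignments are checked, the advantage decomposition and the classical performance difference identity close the argument immediately.
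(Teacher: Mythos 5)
Your proof is correct, and it shares the paper's opening move---using the multi-agent advantage decomposition (Lemma \ref{lemma:maadlemma}) together with the hypothesis to conclude that $\E_{\rva\sim\vpi_{\text{new}}}\big[A_{\vpi_{\text{old}}}(s,\rva)\big]\geq\sum_{m=1}^{n}\mathfrak{D}^{i_m}_{\vpi_{\text{old}}}\big(\pi^{i_m}_{\text{new}}\big|s,\vpi^{i_{1:m-1}}_{\text{new}}\big)\geq 0$ at every state---but it closes the argument by a genuinely different route. The paper keeps the aggregate drift $\widetilde{\mathfrak{D}}$ explicit, rewrites the inequality in terms of $Q_{\vpi_{\text{old}}}$, expands $Q_{\vpi_{\text{new}}}-Q_{\vpi_{\text{old}}}$ through the Bellman equation, and arrives at the recursion $V_{\vpi_{\text{new}}}(s)-V_{\vpi_{\text{old}}}(s)\geq\gamma\inf_{s'}\big[V_{\vpi_{\text{new}}}(s')-V_{\vpi_{\text{old}}}(s')\big]$, whence $(1-\gamma)\inf_{s}\big[V_{\vpi_{\text{new}}}(s)-V_{\vpi_{\text{old}}}(s)\big]\geq 0$. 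You instead discard the (non-negative) drifts immediately and invoke the state-wise performance-difference identity, so that non-negativity of the expected joint advantage at every state makes every term of the discounted telescoping sum non-negative. Both routes are valid; yours is the more classical policy-iteration-style argument and is shorter, while the paper's infimum argument avoids appealing to a per-state version of Lemma \ref{lemma:performance-difference}---the paper only states that lemma for $J$ averaged over $d$, so you are implicitly relying on the standard conditioned-on-$\rs_0=s$ telescoping form (worth stating explicitly, though its proof is the identical telescoping computation). Your preliminary observations are also sound and match what the paper uses implicitly: the baseline HAMO $\big[\mathcal{M}^{(\pi^{i_m}_{\text{old}})}_{\mathfrak{D}^{i_{m}},\vpi^{i_{1:m-1}}_{\text{new}}}A_{\vpi_{\text{old}}}\big](s)$ vanishes because $\E_{\ra^{i_m}\sim\pi^{i_m}_{\text{old}}}\big[A^{i_m}_{\vpi_{\text{old}}}(s,\va^{i_{1:m-1}},\ra^{i_m})\big]=0$ holds pointwise in $(s,\va^{i_{1:m-1}})$ (marginalising $Q^{i_{1:m}}_{\vpi_{\text{old}}}$ over $\ra^{i_m}\sim\pi^{i_m}_{\text{old}}$ recovers $Q^{i_{1:m-1}}_{\vpi_{\text{old}}}$) and the HADF vanishes at $\pi^{i_m}_{\text{old}}$ by non-negativity; the paper encodes the same facts as an equality between two expressions that are each zero.
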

Subsequently, the \textsl{monotonic improvement property} of the joint return follows naturally, as
\begin{align}
    J(\boldsymbol{\pi}_{\text{new}}) = \E_{\rs\sim d}\big[V_{\boldsymbol{\pi}_{\text{new}}}(s)\big] 
    \geq 
    \E_{\rs\sim d}\big[V_{\boldsymbol{\pi}_{\text{old}}}(s)\big] 
    = J(\boldsymbol{\pi}_{\text{old}}). \nonumber
\end{align}

However, the conditions of the lemma require every agent to solve $|\mathcal{S}|$ instances of Inequality (\ref{ineq:what-mirror-step-does}), which may be an intractable problem. We shall design a single optimisation objective whose solution satisfies those inequalities instead. Furthermore, to have a practical application to large-scale problems, such an objective should be estimatable via sampling. To handle these challenges, we introduce the following Algorithm Template \ref{algorithm:haml} which generates a continuum of HAML algorithms.

\begin{metaalgorithm}
\SetKwInOut{Input}{Input}
\SetKwInOut{Output}{Output}
\caption{Heterogeneous-Agent Mirror Learning}
\label{algorithm:haml}
%\begin{algorithmic}[1]
%\Input{ heterogeneous-agent HADFs $\mathfrak{D}^{i, \nu}$, neighbourhood operators $\mathcal{U}^i$, $\forall i\in\mathcal{N}$, and a sampling distribution function $\beta_{\vpi}$}
Initialise a joint policy $\boldsymbol{\pi}_{0} = (\pi^{1}_{0}, \dots, \pi^{n}_{0})$\;
\For{$k=0, 1, \dots$}{
    Compute the advantage function $A_{\boldsymbol{\pi}_{k}}(s, \va)$ for all state-(joint)action pairs $(s, \va)$\;
    Draw a permutaion $i_{1:n}$ of agents at random \color{RoyalPurple} //from a positive distribution $p\in\mathcal{P}(\text{Sym}(n))$\color{black}\;
    \For{$m=1:n$}{
        Make an update $\pi^{i_{m}}_{k+1} = \argmax\limits_{\pi^{i_{m}}\in\mathcal{U}^{i_m}_{\vpi_k}(\pi^{i_m}_k)}\E_{\rs\sim\beta_{\vpi_k}}\Big[
        \big[\mathcal{M}^{(\pi^{i_m})}_{\mathfrak{D}^{i_{m}}, \vpi_{k+1}^{i_{1:m-1}}} A_{\boldsymbol{\vpi_{k}}}\big](s)
        \Big]$\;
    }
    }
%\end{algorithmic}
\Output{A limit-point joint policy $\vpi_{\infty}$}
\end{metaalgorithm}

Based on Lemma \ref{lemma:hamo} and the fact that $\pi^i\in\mathcal{U}_{\vpi}^i(\pi^i), \forall i\in\mathcal{N}, \pi^i\in\Pi^i$, we can know any HAML algorithm (weakly) improves the joint return at every iteration. In practical settings, such as deep MARL, the maximisation step of a HAML method can be performed by a few steps of gradient ascent on a sample average of HAMO (see Definition \ref{def:mirror}). 
We also highlight that if the neighbourhood operators $\mathcal{U}^i$ can be chosen so that they produce small policy-space subsets, then the resulting updates will be not only improving but also small. This, again, is a desirable property while optimising neural-network policies, as it helps stabilise the algorithm. Similar to HATRL, the order of agents in HAML updates is randomised at every iteration; this condition has been necessary to establish convergence to NE, which is intuitively comprehensible: fixed-point joint policies of this randomised procedure assure that none of the agents is incentivised to make an update, namely reaching a NE. We provide the full list of the most fundamental HAML properties in Theorem \ref{theorem:fundamental} which shows that any method derived from Algorithm  Template \ref{algorithm:haml} solves the cooperative MARL problem.
\begin{restatable}[The Fundamental Theorem of Heterogeneous-Agent Mirror Learning]{theorem}{mamlfundamental}
\label{theorem:fundamental}
Let, for every agent $i\in\mathcal{N}$, $\mathfrak{D}^{i}$ be a HADF, $\mathcal{U}^i$ be a neighbourhood operator, and let the sampling distributions $\beta_{\vpi}$ depend continuously on $\vpi$. Let $\boldsymbol{\pi}_0 \in \boldsymbol{\Pi}$, and the sequence of joint policies $(\boldsymbol{\pi}_k)_{k=0}^{\infty}$ be obtained by a HAML algorithm induced by $\mathfrak{D}^{i}, \mathcal{U}^{i}, \forall i\in\mathcal{N}$, and $\beta_{\vpi}$. Then, the joint policies induced by the algorithm enjoy the following list of properties 
\begin{enumerate}
    \item \label{property1} Attain the monotonic improvement property, 
    \begin{align}
        J(\boldsymbol{\pi}_{k+1}) \ \geq \ J(\boldsymbol{\pi}_k),\nonumber
        %+ 
        %\E_{\rs\sim\nu_{k}}\Bigg[ \frac{d(\rs)}{\beta_{\vpi_k}(\rs)} \boldsymbol{\mathfrak{D}}^{i^{k}_{1:n}}_{\vpi_k}(\vpi_{k+1}|\rs)\Bigg],
        %\nonumber
    \end{align}
    \item \label{property2} Their value functions converge to a Nash value function $V^{\text{NE}}$
    \begin{align}
        \lim_{k\rightarrow\infty}V_{\boldsymbol{\pi}_k} = V^{\text{NE}},\nonumber
    \end{align}
    \item \label{property3} Their expected returns converge to a Nash return,
    \begin{align}
        \lim_{k\rightarrow\infty}J(\boldsymbol{\pi}_k) = J^{\text{NE}},\nonumber
    \end{align}
    \item \label{property4} Their $\omega$-limit set consists of Nash equilibria.
\end{enumerate}

\end{restatable}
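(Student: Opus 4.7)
The plan is to use Lemma \ref{lemma:hamo} (HAMO Is All You Need) as the engine for Property \ref{property1}, monotone convergence as the engine for Properties \ref{property2} and \ref{property3}, and a contradiction argument built on the positive-permutation condition for Property \ref{property4}. For Property \ref{property1}: at each outer iteration $k$ and each inner step $m$, the incumbent policy $\pi^{i_m}_k$ lies in $\mathcal{U}^{i_m}_{\vpi_k}(\pi^{i_m}_k)$ and, by the non-negativity clause of Definition \ref{def:mirror}, evaluates HAMO to zero. Hence the argmax $\pi^{i_m}_{k+1}$ achieves expected HAMO at least zero under $\beta_{\vpi_k}$. Since $\beta_{\vpi_k}$ is strictly positive on $\mathcal{S}$ and the neighbourhood operator acts state-wise, the inner maximisation decouples across states, giving the state-wise inequality (\ref{ineq:what-mirror-step-does}) required by Lemma \ref{lemma:hamo}. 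Applying that lemma pointwise gives $V_{\vpi_{k+1}}(s)\geq V_{\vpi_k}(s)$ for every $s$, and averaging over $\rs\sim d$ yields $J(\vpi_{k+1})\geq J(\vpi_k)$.

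For Properties \ref{property2} and \ref{property3}, I would use boundedness: with rewards bounded, $V_{\vpi}$ and $J(\vpi)$ are both bounded in absolute value by $R_{\max}/(1-\gamma)$. Combined with the monotonicity just proved, the real-valued sequences $\left(J(\vpi_k)\right)_k$ and $\left(V_{\vpi_k}(s)\right)_k$, for each fixed $s$, are non-decreasing and bounded, hence converge to limits $J^{\infty}$ and $V^{\infty}(s)$. Identifying these limits as Nash quantities is deferred to Property \ref{property4}.

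For Property \ref{property4}, I argue by contradiction. Suppose some $\omega$-limit point $\vpi^{*}$ of $(\vpi_k)$ fails to be a Nash equilibrium; then there exist $i^{*}\in\mathcal{N}$ and $\hat{\pi}^{i^{*}}$ with $J(\hat{\pi}^{i^{*}},\vpi^{*,-i^{*}}) > J(\vpi^{*})$. The performance-difference identity together with Lemma \ref{lemma:maadlemma} recasts this strict gain as a strictly positive expected single-agent advantage at $\vpi^{*}$ under $\beta_{\vpi^{*}}$. The zero-gradient property of the HADF, together with the closed ball sitting inside $\mathcal{U}^{i^{*}}_{\vpi^{*}}(\pi^{*,i^{*}})$, lets me interpolate $\tilde{\pi}^{i^{*}}$ between $\pi^{*,i^{*}}$ and $\hat{\pi}^{i^{*}}$ close enough to $\pi^{*,i^{*}}$ that the drift term is dominated by the advantage term, securing expected HAMO bounded below by some $\eta>0$. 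Continuity of $\beta_{\vpi}$, $A_{\vpi}$, and $\mathfrak{D}^{i^{*}}_{\vpi}$ in $\vpi$ then transfers a margin of, say, $\eta/2$ to every $\vpi_k$ in a neighbourhood of $\vpi^{*}$. Whenever the randomised permutation places $i^{*}$ first, an event with probability bounded below by the positivity of $p\in\mathcal{P}(\mathrm{Sym}(n))$, the argmax update therefore yields $J(\vpi_{k+1})-J(\vpi_{k})\geq\eta/2$. Since $\vpi_k$ visits this neighbourhood infinitely often, this contradicts convergence of $(J(\vpi_k))$, forcing every $\omega$-limit point to be Nash and identifying $J^{\infty}=J^{\mathrm{NE}}$, $V^{\infty}=V^{\mathrm{NE}}$.

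The hard part will be the quantitative construction inside Property \ref{property4}: producing $\tilde{\pi}^{i^{*}}$ within the neighbourhood with a \emph{uniform} improvement margin, and converting the probabilistic guarantee from the random permutation into a deterministic contradiction with convergence of $J(\vpi_k)$. The continuity hypotheses on $\beta_{\vpi}$ and on the HADF, together with the positivity of $p$ and the closed-ball clause in the definition of the neighbourhood operator, are precisely what makes this uniformity available along the approaching subsequence; absent any one of them, the contradiction argument collapses.
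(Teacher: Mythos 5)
Your treatment of Properties \ref{property1}--\ref{property3} matches the paper's: the state-wise inequality (\ref{ineq:what-mirror-step-does}) is extracted from the expected-HAMO argmax exactly as in the paper's auxiliary Lemma \ref{lemma:hamoimp} (swap the incumbent back in at any offending state; the state-wise monotone metric keeps the swap inside $\mathcal{U}^{i_m}_{\vpi_k}(\pi^{i_m}_k)$ while strictly increasing the objective), and then Lemma \ref{lemma:hamo} plus monotone boundedness gives convergence of $V_{\vpi_k}$ and $J(\vpi_k)$.

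For Property \ref{property4}, however, your direct contradiction has a genuine gap at the step where you claim continuity ``transfers a margin of $\eta/2$ to every $\vpi_k$ in a neighbourhood of $\vpi^{*}$.'' The candidate $\tilde{\pi}^{i^*}$ you interpolate lives in the closed ball inside $\mathcal{U}^{i^*}_{\vpi^*}(\pi^{*,i^*})$, but the update at step $k$ maximises over $\mathcal{U}^{i^*}_{\vpi_k}(\pi^{i^*}_k)$, and the definition of a neighbourhood operator only guarantees a ball of \emph{some} radius $\delta^{i^*}>0$ at \emph{each} incumbent, with no lower bound as $\pi^{i^*}_k\to\pi^{*,i^*}$ and no hemicontinuity of the correspondence $\vpi\mapsto\mathcal{U}^{i^*}_{\vpi}(\cdot)$. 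Continuity of $\beta_{\vpi}$, $A_{\vpi}$ and $\mathfrak{D}^{i^*}_{\vpi}$ therefore does not yield a uniform improvement margin along the approaching subsequence: if the feasible radius shrinks, so does the interpolation step and with it the HAMO gain. The paper avoids this by inverting the order of quantifiers: it first shows (Step 2 of its proof, via Berge's Maximum Theorem applied to the update operator $\text{HU}$ and the positivity of $p$) that every limit point $\vbarpi$ is a \emph{fixed point} of the HAML update for every permutation, using only that $\E[V_{\vpi_{k+1}}-V_{\vpi_k}]\to 0$; the contradiction with non-Nashness is then run entirely \emph{at} $\vbarpi$, where the ball has a fixed positive radius, the expected multi-agent advantage of the incumbent is zero, and the zero-G\^ateaux-derivative property of the HADF together with the affinity of the advantage in the policy lets the drift be dropped, reducing to the Bellman optimality condition and hence a best response. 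Two smaller repairs your argument would also need: Lemma \ref{lemma:hamo} as stated only gives non-negativity, so turning ``expected HAMO $\geq\eta/2$'' into a quantitative bound on $J(\vpi_{k+1})-J(\vpi_k)$ requires rerunning its recursion and deflating by $d(s)>0$; and since the improving permutation occurs only with probability bounded below, the contradiction with convergence of $J(\vpi_k)$ needs either an expectation argument (as in the paper) or a conditional Borel--Cantelli step rather than a deterministic ``infinitely often.''
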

See the proof in Appendix \ref{apx:proof_t1}. With the above theorem, we can conclude that HAML provides a template for generating theoretically sound, stable, monotonically improving algorithms that enable agents to learn solving multi-agent cooperation tasks. 

\subsection{Casting HATRPO and HAPPO as HAML Instances}
\label{subsec:hatrpo-happo-haml-instances}
In this section, we show that HATRPO and HAPPO are in fact valid instances of HAML, which provides a more direct theoretical explanation for their excellent empirical performance.

We begin with the example of HATRPO, where agent $i_m$ (the permutation $i_{1:n}$ is drawn from the uniform distribution) updates its policy so as to maximise (in $\bar{\pi}^{i_m}$)
\begin{align}
    \E_{\rs\sim\rho_{\boldsymbol{\pi}_{\text{old}}},
    \rva^{i_{1:m-1}}\sim \vpi^{i_{1:m-1}}_{\text{new}},
    \ra^{i_m}\sim\bar{\pi}^{i_m}}\Big[ A^{i_m}_{\boldsymbol{\pi}_{\text{old}}}(\rs, \rva^{i_{1:m-1}}, \ra^{i_m}) \Big], \ \ \ \text{subject to} \ \overline{D}_{\text{KL}}(\pi^{i_m}_{\text{old}}, \bar{\pi}^{i_m})\leq \delta.\nonumber
\end{align}

This optimisation objective can be casted as a HAMO with the HADF $\mathfrak{D}^{i_m} \equiv 0$, and the KL-divergence neighbourhood operator
\begin{align}
\mathcal{U}^{i_m}_{\vpi}(\pi^{i_m}) = \Big\{ \bar{\pi}^{i_m} \ \Big| \ \E_{\rs\sim\rho_{\vpi}}\Big[\text{KL}\big(\pi^{i_m}(\cdot^{i_m}|\rs), \bar{\pi}^{i_m}(\cdot^{i_m}|\rs)\big)\Big] \leq \delta \Big\}. \nonumber
\end{align}

The sampling distribution used in HATRPO is $\beta_\vpi=\rho_\vpi$. Lastly, as the agents update their policies in a random loop, the algorithm is an instance of HAML. Hence, it is monotonically improving and converges to a Nash equilibrium set. 

In HAPPO, the update rule of agent $i_m$ is changed with respect to HATRPO as
\begin{align}
    &\E_{\rs\sim\rho_{\boldsymbol{\pi}_{\text{old}}},
    \rva^{i_{1:m-1}}\sim \vpi^{i_{1:m-1}}_{\text{new}},
    \ra^{i_m}\sim\pi^{i_m}_{\text{old}}}\Big[ 
    \min\big( \rr(\bar{\pi}^{i_m})A^{i_{1:m}}_{\vpi_{\text{old}}}(\rs, \rva^{i_{1:m}}), \text{clip}\big( \rr(\bar{\pi}^{i_m}), 1\pm \epsilon\big)A^{i_{1:m}}_{\vpi_{\text{old}}}(\rs, \rva^{i_{1:m}})\big) \Big],\nonumber
\end{align}
where $\rr(\bar{\pi}^{i}) = \frac{\bar{\pi}^i(\ra^i|\rs)}{\pi^{i}_{\text{old}}(\ra^i|\rs)}$. We show in Appendix \ref{appendix:happo} that this optimisation objective is equivalent to
\begin{align}
    &\E_{\rs\sim\rho_{\boldsymbol{\pi}_{\text{old}}}}\Big[ 
    \E_{\rva^{i_{1:m-1}}\sim \vpi^{i_{1:m-1}}_{\text{new}},
    \ra^{i_m}\sim\bar{\pi}^{i_m}}\big[ A^{i_m}_{\boldsymbol{\pi}_{\text{old}}}(\rs, \rva^{i_{1:m-1}}, \ra^{i_m}) \big] \nonumber\\
    &\quad \quad \quad \quad  -\color{RoyalPurple}\E_{\rva^{i_{1:m-1}}\sim \vpi^{i_{1:m-1}}_{\text{new}},
    \ra^{i_m}\sim\pi^{i_m}_{\text{old}}}\big[
    \text{ReLU}\big( \big[ \rr(\bar{\pi}^{i_m}) -\text{clip}\big( \rr(\bar{\pi}^{i_m}), 1\pm \epsilon\big)\big]A^{i_{1:m}}_{\boldsymbol{\pi}_{\text{old}}}(s, \rva^{i_{1:m}})\big) \big]\color{black}\Big]. \nonumber
\end{align}

The \color{RoyalPurple}purple \color{black} term is clearly non-negative due to the presence of the ReLU function. Furthermore, for policies $\bar{\pi}^{i_m}$ sufficiently close to $\pi^{i_m}_{\text{old}}$, the clip operator does not activate, thus rendering $\rr(\bar{\pi}^{i_m})$ unchanged. Therefore, the \color{RoyalPurple} purple \color{black} term is zero at and in a region around $\bar{\pi}^{i_m}=\pi^{i_m}_{\text{old}}$, which also implies that its G{\^a}teaux derivatives are zero. Hence, it evaluates a HADF for agent $i_m$, thus making HAPPO a valid HAML instance.

Finally, we would like to highlight that these conclusions about HATRPO and HAPPO strengthen the results in Section \ref{subsec:HATRL} and \ref{subsec:pracalgo}. In addition to their origin in HATRL, we now show that their optimisation objectives directly enjoy favorable theoretical properties endowed by HAML framework. Both interpretations underpin their empirical performance.

\subsection{More HAML Instances}
\label{subsec:new-haml}
In this subsection, we exemplify how HAML can be used for derivation of principled MARL algorithms, solely by constructing valid drift functional, neighborhood operator, and sampling distribution. Our goal is to verify the correctness of HAML theory and enrich the cooperative MARL with more theoretically guaranteed and practical algorithms. The results are more robust heterogeneous-agent versions of popular RL algorithms including A2C, DDPG, and TD3, different from those in Section \ref{sec:sota}. 

\begin{figure}[tbp]
  \centering
  \includegraphics[width=\linewidth]{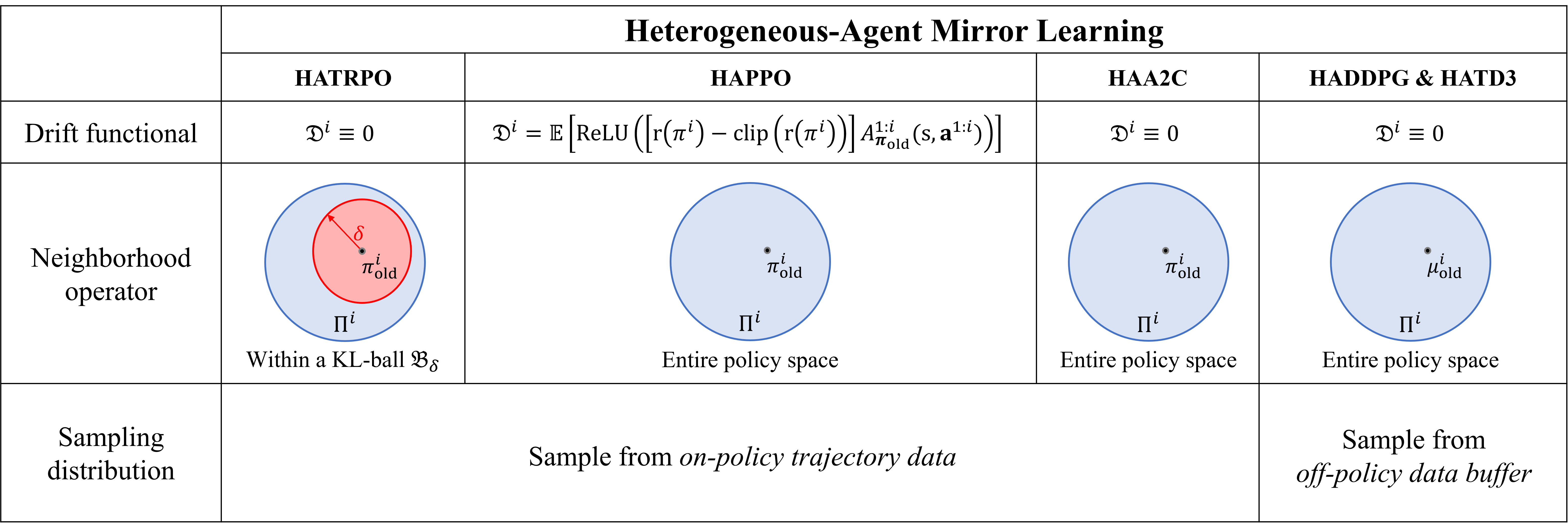}
    \caption{This figure presents a simplified schematic overview of HARL algorithms represented as valid instances of HAML. The complete details are available in Appendix \ref{appendix:haml-summary}. By recasting HATRPO and HAPPO as HAML formulations, we demonstrate that their guarantees pertaining to monotonic improvement and NE convergence are enhanced by leveraging the HAML framework. Moreover, HAA2C, HADDPG, and HATD3 are obtained by designing HAML components, thereby securing those same performance guarantees. The variety of drift functionals, neighborhood operators, and sampling distributions utilised by these approaches further attests to the versatility and richness of the HAML framework.
    }
  
  \label{fig:theory-algo-summary}
\end{figure}

\subsubsection{HAA2C}

HAA2C intends to optimise the policy for the joint advantage function at every iteration, and similar to A2C, does not impose any penalties or constraints on that procedure. This learning procedure is accomplished  by, first, drawing a random permutation of agents $i_{1:n}$, and then performing a few steps of gradient ascent on the objective of 
\begin{align}
    \label{eq:haa2c}
    \E_{\rs\sim\rho_{\vpi_{\text{old}}}, \rva^{i_{1:m}}\sim\vpi^{i_{1:m}}_{\text{old}}}\Big[ 
    \frac{\vpi^{i_{1:m-1}}_{\text{new}}(\rva^{i_{1:m-1}}|\rs)\pi^{i_m}(\ra^{i_m}|\rs) }{\vpi^{i_{1:m-1}}_{\text{old}}(\rva^{i_{1:m-1}}|\rs)\pi^{i_m}_{\text{old}}(\ra^{i_m}|\rs) }A_{\vpi_{\text{old}}}^{i_m}(\rs, \rva^{i_{1:m-1}}, \ra^{i_m})
    \Big], 
\end{align}
with respect to $\pi^{i_m}$ parameters,
for each agent $i_m$ in the permutation, sequentially. In practice, we replace the multi-agent advantage {\small $A_{\vpi_{\text{old}}}^{i_m}(\rs, \rva^{i_{1:m-1}}, \ra^{i_m})$} with the joint advantage estimate which, thanks to the joint importance sampling in Equation (\ref{eq:haa2c}), poses the same objective on the agent (see Appendix \ref{appendix:algos} for full pseudocode).

\subsubsection{HADDPG}

HADDPG exploits the fact that $\beta_{\boldsymbol{\pi}}$ can be independent of $\boldsymbol{\pi}$ and aims to maximise the state-action value function \emph{off-policy}. As it is a deterministic-action method, importance sampling in its case translates to replacement of the old action inputs to the critic with the new ones. Namely, agent $i_m$ in a random permutation $i_{1:n}$ maximises
\begin{align}
    \label{eq:haddpg}
    \E_{\rs\sim\beta_{\vmu_{\text{old}}}}\Big[ Q^{i_{1:m}}_{\vmu_{\text{old}}}\big(\rs, \vmu_{\text{new}}^{i_{1:m-1}}(\rs), \mu^{i_m}(\rs)
    \big)\Big],
\end{align}
with respect to $\mu^{i_m}$, also with a few steps of gradient ascent. Similar to HAA2C, optimising the state-action value function (with the old action replacement) is equivalent to the original multi-agent value (see Appendix \ref{appendix:algos} for full pseudocode). 

\subsubsection{HATD3}

HATD3 improves HADDPG with tricks proposed by \cite{fujimoto2018addressing}. Similar to HADDPG, HATD3 is also an off-policy algorithm and optimises the same target, but it employs target policy smoothing, clipped double Q-learning, and delayed policy updates techniques (see Appendix \ref{appendix:algos} for full pseudocode). We observe that HATD3 consistently outperforms HADDPG on all tasks, showing that relevant reinforcement learning can be directly applied to MARL without the need for rediscovery, another benefit of the HAML.

As the HADDPG and HATD3 algorithms have been derived, it is logical to consider the possibility of HADQN, given that DQN can be viewed as a pure value-based version of DDPG for discrete action problems. In light of this, we introduce HAD3QN, a value-based approximation of HADDPG that incorporates techniques proposed by \cite{van2016deep} and \cite{wang2016dueling}. The details of HAD3QN are presented in Appendix \ref{appendix:had3qn}, which includes the pseudocode, performance analysis, and an ablation study demonstrating the importance of the dueling double Q-network architecture for achieving stable and efficient multi-agent learning.

To elucidate the formulations and differences of HARL approaches in their HAML representation, we provide a simplified summary in Figure \ref{fig:theory-algo-summary} and list the full details in Appendix \ref{appendix:haml-summary}. While these approaches have already tailored HADFs, neighbourhood operators, and sampling distributions, we speculate that the entire abundance of the HAML framework can still be explored with more future work. Nevertheless, we commence addressing the heterogeneous-agent cooperation problem with these five methods, and analyse their performance in Section \ref{sec:experiments}.

\section{Related Work}

There have been previous attempts that tried to solve the cooperative MARL problem by developing multi-agent trust region learning theories.
%These algorithms, some of which achieved impressive empirical results on certain tasks. 
Despite empirical successes, most of them did not manage to propose a theoretically-justified  trust region protocol in multi-agent learning, or maintain the monotonic improvement property.  Instead, they tend to impose certain assumptions to enable direct implementations of TRPO/PPO in MARL problems. For example,  IPPO \citep{de2020independent} assumes homogeneity of action spaces for all agents and enforces parameter sharing.  %Although this approach enables IPPO to achieve great results on StarCraft \citep{samvelyanstarcraft} challenges, it is limited to the homogeneous setting and, generally, it leads to a suboptimal solution. 
 \cite{mappo} proposed MAPPO which enhances IPPO by considering a joint critic function and  finer implementation techniques for on-policy methods.  Yet, it still suffers similar drawbacks of IPPO due to the lack of monotonic improvement guarantee especially when the parameter-sharing condition is switched off. 
\cite{wen2022game} adjusted PPO for MARL by considering a game-theoretical approach at the meta-game level among agents. Unfortunately, it can only deal with two-agent cases due to the intractability of Nash equilibrium. 
%\cite{hu2021noisy} developed Noisy-MAPPO that targets to address the suboptimality issue; however, it still lacks theoretical insights for the modification made on  MAPPO. 
Recently, \cite{li2023multiagent} tried to implement TRPO for MARL through distributed consensus optimisation;  however, they enforced the  same ratio  ${\bar{\pi}^i(a^i|s)}/{\pi^i(a^i|s)}$ for all agents (see their Equation (7)), which, similar to parameter sharing, largely limits the policy space for optimisation. Moreover, their method comes with a $\delta/n$ KL-constraint threshold that fails to consider scenarios with large agent number. While Coordinated PPO (CoPPO) \citep{wu2021coordinated} derived a theoretically-grounded joint objective and obtained practical algorithms through a set of approximations, it still suffers from the non-stationarity problem as it updates agents simultaneously.

% One of the key ideas behind our Heterogeneous-Agent algorithm series is the sequential update scheme. A similar idea of multi-agent sequential update was also discussed in the context of dynamic programming  \citep{bertsekas2019multiagent} where artificial  "in-between" states have to be considered. On the contrary,  our sequential update scheme is developed based on Lemma \ref{lemma:maadlemma}, which does not  require  any artificial assumptions and holds for any cooperative games.   Furthermore, \cite{bertsekas2019multiagent} requires to maintain a fixed order of updates that is  pre-defined for the task, whereas the order  in  HARL algorithms can be randomised at each iteration, which also offers desirable convergence property, as stated in Theorem \ref{proposition:convergence-matrpo} and also verified through ablation studies in Section \ref{subsec:ablations}.  The idea of sequential update also appeared in principal component analysis; in EigenGame \citep{gemp2021eigengame} eigenvectors, represented as players, maximise their own utility functions one-by-one. Although EigenGame provably solves the PCA problem, it is of little use in MARL, where a single iteration of sequential updates is insufficient to learn complex policies. Furthermore, its design and analysis rely on closed-form matrix calculus, which has no extension to MARL.

One of the key ideas behind our Heterogeneous-Agent algorithm series is the sequential update scheme. A similar idea of multi-agent sequential update was also discussed in the context of dynamic programming  \citep{bertsekas2019multiagent} where artificial  "in-between" states have to be considered. On the contrary,  our sequential update scheme is developed based on Lemma \ref{lemma:maadlemma}, which does not  require  any artificial assumptions and holds for any cooperative games. The idea of sequential update also appeared in principal component analysis; in EigenGame \citep{gemp2021eigengame} eigenvectors, represented as players, maximise their own utility functions one-by-one. Although EigenGame provably solves the PCA problem, it is of little use in MARL, where a single iteration of sequential updates is insufficient to learn complex policies. Furthermore, its design and analysis rely on closed-form matrix calculus, which has no extension to MARL.

Lastly, we would like to highlight the importance of the decomposition result in Lemma \ref{lemma:maadlemma}. This result could serve as an effective solution to value-based methods in MARL where tremendous efforts have been made to decompose the joint Q-function into individual Q-functions when the joint Q-function is decomposable \citep{rashid2018qmix}.
Lemma \ref{lemma:maadlemma}, in contrast, is a general result that holds for any cooperative  MARL problems regardless of decomposability. As such, we think of it as an appealing contribution to future developments on value-based MARL methods.

Our work is an extension of previous work HATRPO / HAPPO, which was originally proposed in a conference paper \citep{kuba2022trust}. The main additions in our work are:

\begin{itemize}
    \item Introducing Heterogeneous-Agent Mirror Learning (HAML), a more general theoretical framework that strengthens theoretical guarantees for HATRPO and HAPPO and can induce a continuum of sound algorithms with guarantees of monotonic improvement and convergence to Nash Equilibrium;

    \item Designing novel algorithm instances of HAML including HAA2C, HADDPG, and HATD3, which attain better performance than their existing MA-counterparts, with HATD3 establishing the new SOTA results for off-policy algorithms;
    
    \item Releasing PyTorch-based implementation of HARL algorithms, which is more unified, modularised, user-friendly, extensible, and effective than the previous one;

    \item Conducting comprehensive experiments evaluating HARL algorithms on six challenging benchmarks Multi-Agent Particle Environment (MPE), Multi-Agent MuJoCo (MAMuJoCo), StarCraft Multi-Agent Challenge (SMAC), SMACv2, Google Research Football Environment (GRF), and Bi-DexterousHands.

\end{itemize}

\section{Experiments and Analysis}
\label{sec:experiments}

In this section, we evaluate and analyse HARL algorithms on six cooperative multi-agent benchmarks --- Multi-Agent Particle Environment (MPE) \citep{maddpg, mordatch2018emergence}, Multi-Agent MuJoCo (MAMuJoCo) \citep{peng2021facmac}, StarCraft Multi-Agent Challenge (SMAC) \citep{samvelyanstarcraft}, SMACv2 \citep{ellis2022smacv2}, Google Research Football Environment (GRF) \citep{kurach2020google}, and Bi-DexterousHands \citep{chen2022towards}, as shown in Figure \ref{fig:benchmark-overview} --- and compare their performance to existing SOTA methods. These benchmarks are diverse in task difficulty, agent number, action type, dimensionality of observation space and action space, and cooperation strategy required, and hence provide a comprehensive assessment of the effectiveness, stability, robustness, and generality of our methods. 
% The experimental results confirm that \textbf{1)} \emph{HAPPO and HADDPG consistently outperform MAPPO and MADDPG on heterogeneous-agent cooperation tasks}, \textbf{2)} \emph{HARL algorithms culminate in HAPPO and HATD3, which show superior effectiveness and stability for heterogeneous-agent cooperation tasks over existing strong baselines such as MAPPO, QMIX, and MADDPG, refreshing the SOTA results}, \textbf{3)} \emph{the novel details introduced by HATRL and HAML theories, namely non-sharing of parameters and randomised order in sequential update, are crucial for obtaining the strong performance}.
The experimental results demonstrate that HAPPO, HADDPG, and HATD3 generally outperform their MA-counterparts on heterogeneous-agent cooperation tasks. Moreover, HARL algorithms culminate in HAPPO and HATD3, which exhibit superior effectiveness and stability for heterogeneous-agent cooperation tasks over existing strong baselines such as MAPPO, QMIX, MADDPG, and MATD3, refreshing the state-of-the-art results. Our ablation study also reveals that the novel details introduced by HATRL and HAML theories, namely non-sharing of parameters and randomised order in sequential update, are crucial for obtaining the strong performance. Finally, we empirically show that the computational overhead introduced by sequential update does not need to be a concern.

Our implementation of HARL algorithms takes advantage of the sequential update scheme and the CTDE framework that HARL algorithms share in common, and unifies them into either the on-policy or the off-policy training pipeline, resulting in modularisation and extensibility. It also naturally hosts MAPPO, MADDPG, and MATD3 as special cases and provides the (re)implementation of these three algorithms along with HARL algorithms. For fair comparisons, we use our (re)implementation of MAPPO, MADDPG, and MATD3 as baselines on MPE and MAMuJoCo, where their publicly acknowledged performance report under exactly the same settings is lacking, and we ensure that their performance matches or exceeds the results reported by their original paper and subsequent papers; on the other benchmarks, the original implementations of baselines are used. To be consistent with the officially reported results of MAPPO, we let it utilize parameter sharing on all but Bi-DexterousHands and the Speaker Listener task in MPE. Details of hyper-parameters and experiment setups can be found in Appendix \ref{appendix:exp}.

\begin{figure}[tbp]
  \centering
  \includegraphics[width=\linewidth]{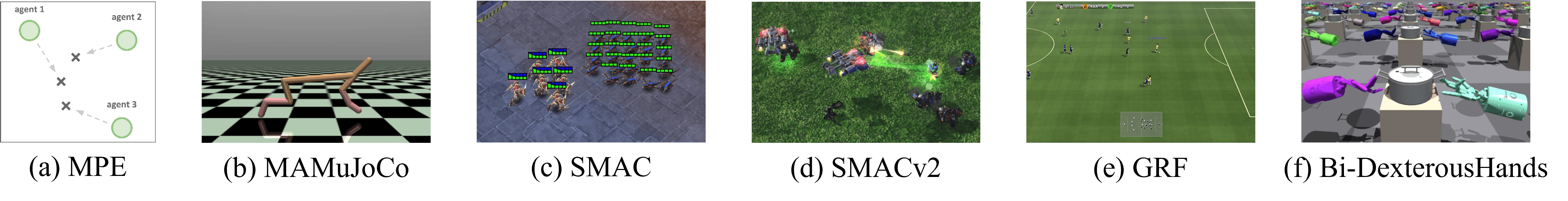}
    \caption{The six environments used for testing HARL algorithms.}
  
  \label{fig:benchmark-overview}
\end{figure}

\subsection{MPE Testbed}

We consider the three fully cooperative tasks in MPE \citep{maddpg}: Spread, Reference, and Speaker Listener. These tasks require agents to explore and then learn the optimal cooperation strategies, such as spreading to targets as quickly as possible without collision, instructing companions, and so on. The Speaker Listener scenario, in particular, explicitly designs different roles and fails the homogeneous agent approach. As the original codebase of MPE is no longer maintained, we choose to use its PettingZoo version \citep{terry2021pettingzoo}. To make it compatible with the cooperative MARL problem formulation in Section \ref{sec:prelim}, we implement the interface of MPE so that agents do not have access to their individual rewards, as opposed to the setting used by MADDPG. Instead, individual rewards of agents are summed up to form the joint reward, which is available during centralised training. We evaluate HAPPO, HATRPO, HAA2C, HADDPG, and HATD3 on the continuous action-space version of these three tasks against MAPPO, MADDPG, and MATD3, with on-policy algorithms running for 10 million steps and off-policy ones for 5 million steps. Since the stochastic policy algorithms, namely HAPPO, HATRPO, and HAA2C, can also be applied to discrete action-space scenarios, we additionally compare them with MAPPO on the discrete version of these three tasks, using the same number of timesteps. The learning curves plotted from training data across three random seeds are shown in Figure \ref{fig:mpe}.

\begin{figure}[tbp]
  \centering
  \includegraphics[width=\linewidth]{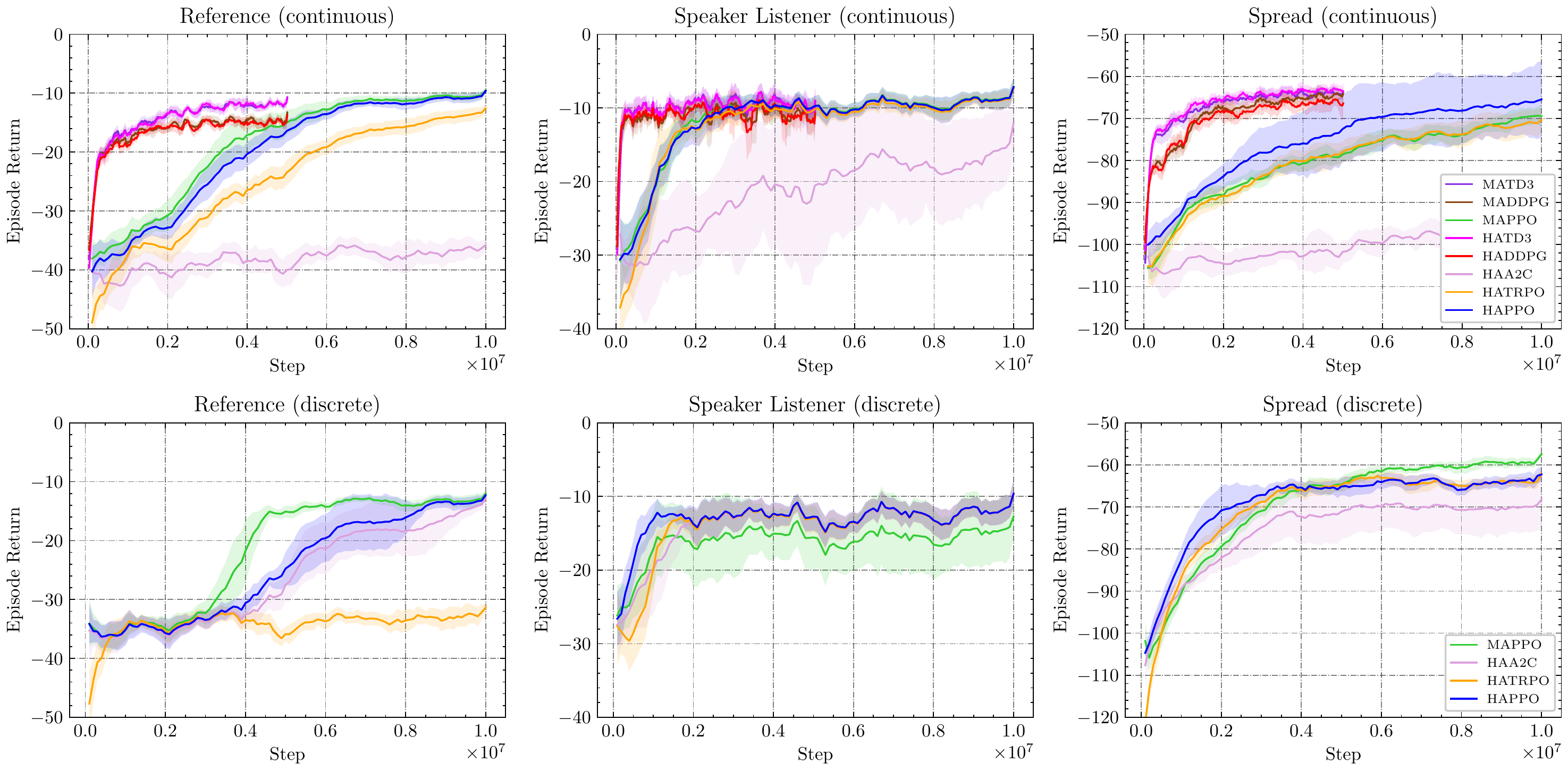}
  \caption{Comparisons of average episode return on Multi-Agent Particle Environments. The ``continuous'' and ``discrete'' in parenthesis refer to the type of action space in each task.}
  \label{fig:mpe}
\end{figure}

While MPE tasks are relatively simple, it is sufficient for identifying several patterns. HAPPO consistently solves all six combinations of tasks, with its performance comparable to or better than MAPPO. With a single set of hyper-parameters, HATRPO also solves five combinations easily and achieves steady learning curves due to the explicitly specified distance constraint and reward improvement between policy updates. It should be noted that the oscillations observed after convergence are due to the randomness of test environments which affects the maximum reward an algorithm can attain. HAA2C, on the other hand, is equally competitive on the discrete version of tasks, but shows higher variance and is empirically harder to achieve the same level of episode return on the continuous versions, which is a limitation of this method since its update rule can not be precisely realised in practice and meanwhile it imposes no constraint. Nevertheless, it still constitutes a potentially competitive solution. 

Furthermore, two off-policy HARL methods, HADDPG and HATD3, exhibit extremely fast mastery of the three tasks with small variance, demonstrating their advantage in high sample efficiency. Their performances are similar to MA-counterparts on these simple tasks, with TD3-based methods achieving faster convergence rate and higher total rewards, establishing new SOTA off-policy results. Off-policy HARL methods consistently converge with much fewer samples than on-policy methods across all tasks, holding the potential to alleviate the high sample complexity and slow training speed problems, which are commonly observed in MARL experiments.

These observations show that while HARL algorithms have the same improvement and convergence guarantees in theory, they differ in learning behaviours due to diverse algorithmic designs. In general, they complement each other and collectively solve all tasks.

\subsection{MAMuJoCo Testbed}

The Multi-Agent MuJoCo (MAMuJoCo) environment is a multi-agent extension of MuJoCo. While the MuJoCo tasks challenge a robot to learn an optimal way of motion, MAMuJoCo models each part of a robot as an independent agent --- for example,  a leg for a spider or an arm for a swimmer --- and requires the agents to collectively perform efficient motion. With the increasing variety of the body parts, modeling heterogeneous policies becomes \textbf{necessary}. Thus, we believe that MAMuJoCo is a suitable task suite for evaluating the effectiveness of our heterogeneous-agent methods. We evaluate HAPPO, HATRPO, HAA2C, HADDPG, and HATD3 on the five most representative tasks against MAPPO, MADDPG, and MATD3 and plot the learning curves across at least three seeds in Figure \ref{fig:mamujoco-on-policy} and \ref{fig:mamujoco-off-policy}.

\begin{figure}[tbp]
  \centering
  \includegraphics[width=\linewidth]{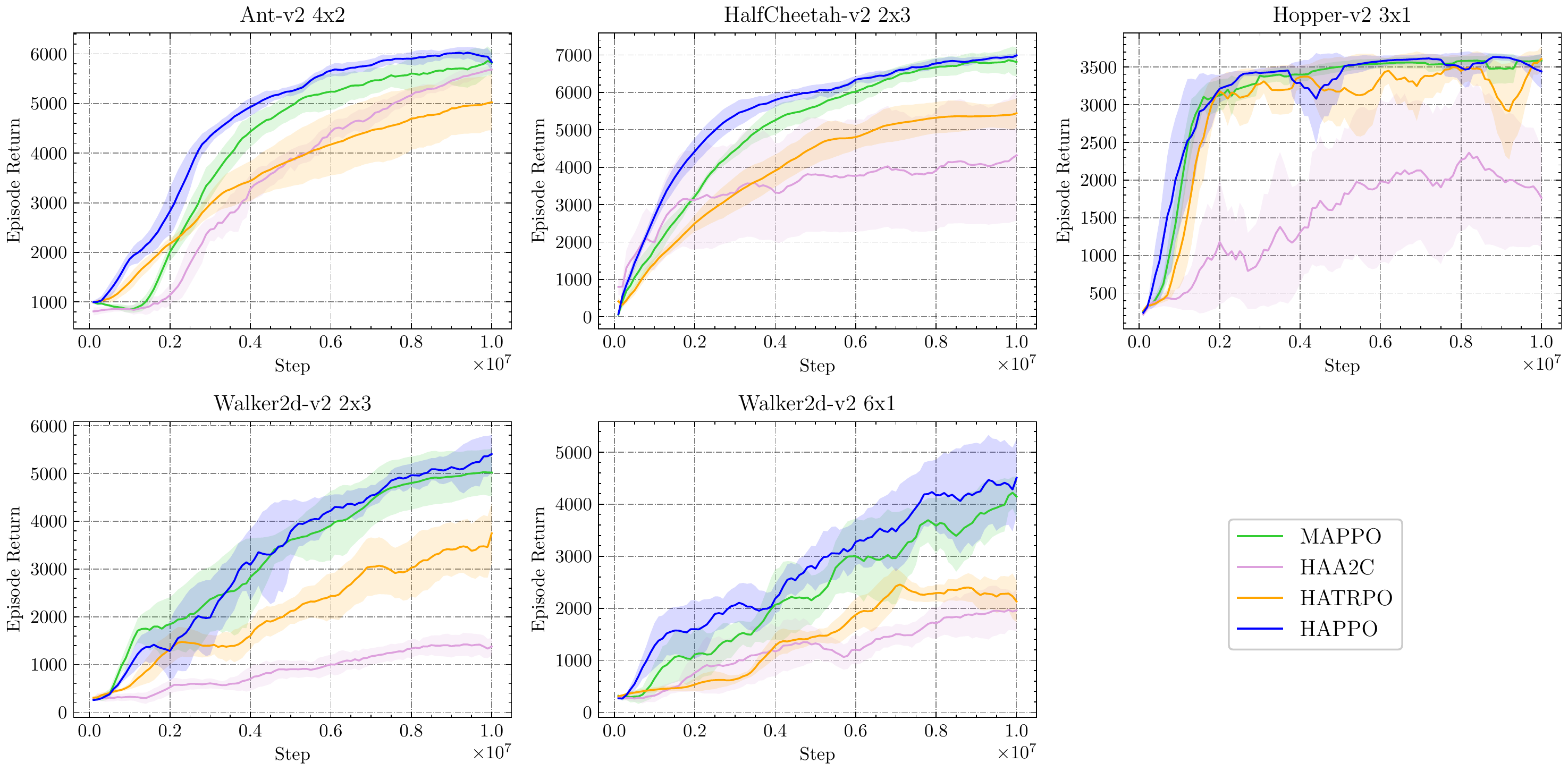}
  \caption{Comparisons of average episode return of on-policy algorithms on Multi-Agent MuJoCo. HAPPO generally outperforms MAPPO, refreshing the state-of-the-art (SOTA) results for on-policy algorithms.}
  \label{fig:mamujoco-on-policy}
\end{figure}

\begin{figure}[tbp]
  \centering
  \includegraphics[width=\linewidth]{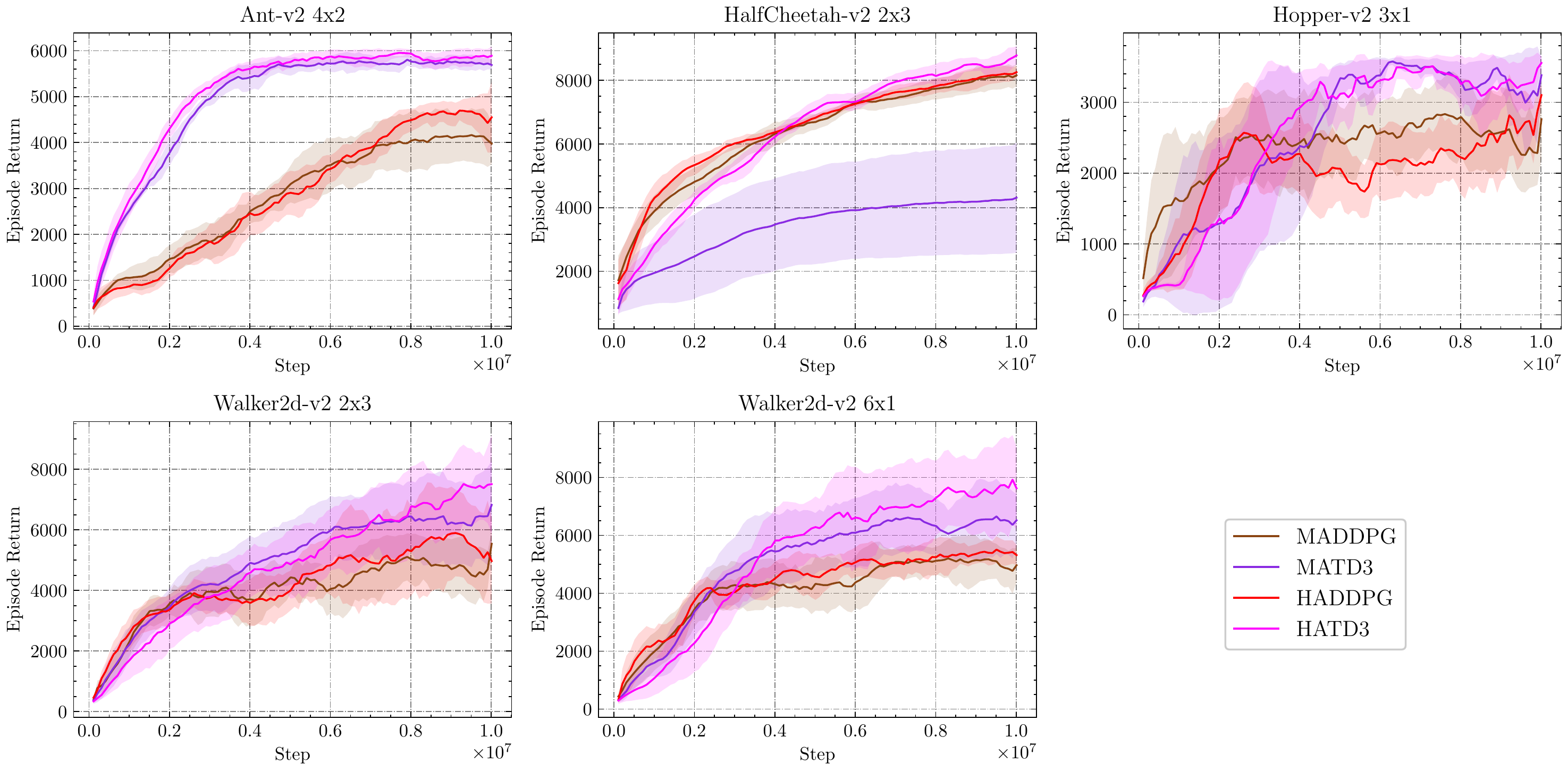}
  \caption{Comparisons of average episode return of off-policy algorithms on Multi-Agent MuJoCo. HADDPG and HATD3 generally outperform MADDPG and MATD3, while HATD3 achieves the highest average return across all tasks, thereby refreshing the state-of-the-art (SOTA) results for off-policy algorithms.}
  \label{fig:mamujoco-off-policy}
\end{figure}

We observe that on all five tasks, HAPPO, HADDPG, and HATD3 achieves generally better average episode return than their MA-counterparts. HATRPO and HAA2C also achieve strong and steady learning behaviours on most tasks.
% Though HATRPO may be lower than HAPPO in terms of episode return within the same amount of time, it shows extremely stable and monotonically improving learning behaviour. This shows the success of the extension of trust region learning to CoMARL, as HATRPO is the closest to the HATRL. 
Since the running motion are hard to be realised by any subset of all agents, the episode return metric measures the quality of agents' cooperation. Rendered videos from the trained models of HARL algorithms confirm that agents develop effective cooperation strategies for controlling their corresponding body parts. For example, on the 2-agent HalfCheetah task, agents trained by HAPPO learn to alternately hit the ground, forming a swift kinematic gait that resembles a real cheetah. The motion performed by each agent is meaningless alone and only takes effect when combined with the other agent's actions. In other words, all agents play indispensable roles and have unique contributions in completing the task, which is the most desirable form of cooperation. Empirically, HARL algorithms prove their capability to generate this level of cooperation from random initialisation. 

As for the off-policy HARL algorithms, HATD3 outperforms both MATD3 and HADDPG on all tasks, due to the beneficial combination of sequential update and the stabilising effects brought by twin critics, delayed actor update, and target action smoothing tricks. This also admits the feasibility of introducing RL tricks to MARL. Its performance is even generally better than HAPPO, showing the competence to handle continuous tasks. Experimental results on MAMuJoCo not only prove the superiority of HARL algorithms over existing strong baselines, but also reveal that HARL renders multiple effective solutions to multi-agent cooperation tasks.

\begin{figure}[tbp]
  \centering
  \includegraphics[width=0.4\linewidth]{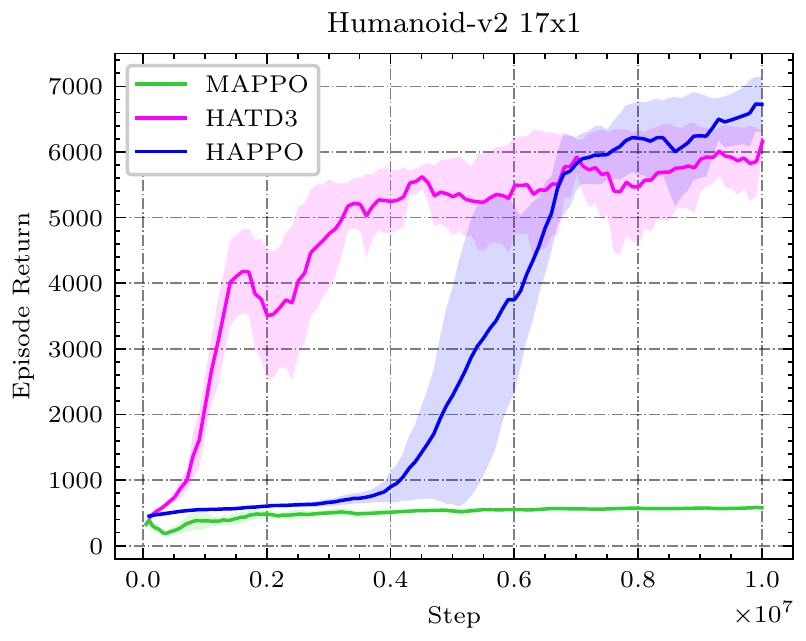}
  \caption{Comparisons of average episode return on the 17-agent Humanoid control task in Multi-Agent MuJoCo. In the face of this many-heterogeneous-agent task, HAPPO and HATD3 achieve state-of-the-art (SOTA) performance, while MAPPO fails completely. This highlights the superior effectiveness of HARL algorithms for promoting cooperation among heterogeneous agents.}
  \label{fig:humanoid}
\end{figure}

Though MAMuJoCo tasks are heterogeneous in nature, parameter sharing is still effective in scenarios where learning a     ``versatile'' policy to control all body parts by relying on the expressiveness of neural network is enough. As a result, on these five tasks, MAPPO underperforms HAPPO by not very large margins. To fully distinguish HAPPO from MAPPO, we additionally compare them on the 17-agent Humanoid task and report the learning curves averaged across three seeds in Figure \ref{fig:humanoid}. In this scenario, the 17 agents control dissimilar body parts and it is harder for a single policy to select the right action for each part. Indeed, MAPPO completely fails to learn. In contrast, HAPPO still manages to coordinate the agents' updates with its sequential update scheme which leads to a walking humanoid with the joint effort from all agents. With the same theoretical properties granted by HAML, HATD3 also successfully learns to control the 17-agent humanoid. Therefore, HARL algorithms are more applicable and effective for the general many-heterogeneous-agent cases. Their advantage becomes increasingly significant with the increasing heterogeneity of agents.

\subsection{SMAC \& SMACv2 Testbed}

The StarCraft Multi-Agent Challenge (SMAC) contains a set of StarCraft maps in which a team of mostly homogeneous ally units aims to defeat the opponent team. It challenges an algorithm to develop effective teamwork and decentralised unit micromanagement, and serves as a common arena for algorithm comparison. We benchmark HAPPO and HATRPO on five hard maps and five super hard maps in SMAC against QMIX \citep{rashid2018qmix} and MAPPO \citep{mappo}, which are known to achieve supreme results. Furthermore, as \cite{ellis2022smacv2} proposes SMACv2 to increase randomness of tasks and diversity among unit types in SMAC, we additionally test HAPPO and HATRPO on five maps in SMACv2 against QMIX and MAPPO. On these two sets of tasks, we adopt the implementations of QMIX and MAPPO that have achieved the best-reported results, \emph{i.e.} in SMAC we use the implementation by \cite{mappo} and in SMACv2 we use the implementation by \cite{ellis2022smacv2}.
Following the evaluation metric proposed by \cite{wang2020rode}, we report the win rates computed across at least three seeds in Table \ref{tab:smac} and provide the learning curves in Appendix \ref{appendix:smac-figures}.

\begin{table}[!t]
\centering
\begin{tabular}{ccccccc}
\hline
Map               & Difficulty & HAPPO                           & HATRPO                           & MAPPO                            & QMIX                             & Steps            \\ \hline
8m\_vs\_9m        & Hard       & 83.8\scriptsize{(4.1)}          & \textbf{92.5}\scriptsize{(3.7)}  & 87.5\scriptsize{(4.0)}           & \textbf{92.2}\scriptsize{(1.0)}  & $1 \mathrm{e} 7$ \\
25m               & Hard       & 95.0\scriptsize{(2.0)}          & \textbf{100.0}\scriptsize{(0.0)} & \textbf{100.0}\scriptsize{(0.0)} & 89.1\scriptsize{(3.8)}           & $1 \mathrm{e} 7$ \\
5m\_vs\_6m        & Hard       & \textbf{77.5}\scriptsize{(7.2)} & \textbf{75.0}\scriptsize{(6.5)}  & \textbf{75.0}\scriptsize{(18.2)} & \textbf{77.3}\scriptsize{(3.3)}  & $1 \mathrm{e} 7$ \\
3s5z              & Hard       & \textbf{97.5}\scriptsize{(1.2)} & 93.8\scriptsize{(1.2)}           & \textbf{96.9}\scriptsize{(0.7)}  & 89.8\scriptsize{(2.5)}           & $1 \mathrm{e} 7$ \\
10m\_vs\_11m      & Hard       & 87.5\scriptsize{(6.7)}          & \textbf{98.8}\scriptsize{(0.6)}  & 96.9\scriptsize{(4.8)}           & 95.3\scriptsize{(2.2)}           & $1 \mathrm{e} 7$ \\
MMM2              & Super Hard & 88.8\scriptsize{(2.0)}          & \textbf{97.5}\scriptsize{(6.4)}  & \textbf{93.8}\scriptsize{(4.7)}  & 87.5\scriptsize{(2.5)}           & $2 \mathrm{e} 7$ \\
3s5z\_vs\_3s6z    & Super Hard & 66.2\scriptsize{(3.1)}          & 72.5\scriptsize{(14.7)}          & 70.0\scriptsize{(10.7)}          & \textbf{87.5}\scriptsize{(12.6)} & $2 \mathrm{e} 7$ \\
27m\_vs\_30m      & Super Hard & 76.6\scriptsize{(1.3)}          & \textbf{93.8}\scriptsize{(2.1)}  & 80.0\scriptsize{(6.2)}           & 45.3\scriptsize{(14.0)}          & $2 \mathrm{e} 7$ \\
corridor          & Super Hard & 92.5\scriptsize{(13.9)}         & 88.8\scriptsize{(2.7)}           & \textbf{97.5}\scriptsize{(1.2)}  & 82.8\scriptsize{(4.4)}           & $2 \mathrm{e} 7$ \\
6h\_vs\_8z        & Super Hard & \textbf{76.2}\scriptsize{(3.1)} & \textbf{78.8}\scriptsize{(0.6)}  & \textbf{85.0}\scriptsize{(2.0)}  & \textbf{92.2}\scriptsize{(26.2)} & $4 \mathrm{e} 7$ \\ \hline
protoss\_5\_vs\_5 & -          & 57.5\scriptsize{(1.2)}          & 50.0\scriptsize{(2.4)}           & 56.2\scriptsize{(3.2)}           & \textbf{65.6}\scriptsize{(3.9)}           & $1 \mathrm{e} 7$ \\
terran\_5\_vs\_5  & -          & 57.5\scriptsize{(1.3)}          & 56.8\scriptsize{(2.9)}  & 53.1\scriptsize{(2.7)}           & \textbf{62.5}\scriptsize{(3.8)}           & $1 \mathrm{e} 7$ \\
zerg\_5\_vs\_5    & -          & 42.5\scriptsize{(2.5)}          & \textbf{43.8}\scriptsize{(1.2)}  & 40.6\scriptsize{(7.0)}           & 34.4\scriptsize{(2.2)}           & $1 \mathrm{e} 7$ \\
zerg\_10\_vs\_10  & -          & 28.4\scriptsize{(2.2)}          & 34.6\scriptsize{(0.2)}           & 37.5\scriptsize{(3.2)}           & \textbf{40.6}\scriptsize{(3.4)}           & $1 \mathrm{e} 7$ \\
zerg\_10\_vs\_11  & -          & 16.2\scriptsize{(0.6)}          & 19.3\scriptsize{(2.1)}           & \textbf{29.7}\scriptsize{(3.8)}           & 25.0\scriptsize{(3.9)}           & $1 \mathrm{e} 7$ \\ \hline
\end{tabular}
\caption{Median evaluation win rate and standard deviation on ten SMAC maps (upper in the table) and five SMACv2 maps (lower in the table) for different methods. All values within 1 standard deviation of the maximum win rate are marked in bold. The column labeled ``Steps'' specifies the number of steps used for training. Our results suggest that HAPPO and HATRPO perform comparably or better than MAPPO and QMIX on these tasks, which mainly involve homogeneous agents. Moreover, HAPPO and HATRPO do not rely on the restrictive parameter-sharing technique, demonstrating their versatility in various scenarios.}
\label{tab:smac}
\end{table}

We observe that HAPPO and HATRPO are able to achieve comparable or superior performance to QMIX and MAPPO across five hard maps and five super hard maps in SMAC, while not relying on the restrictive parameter-sharing trick, as opposed to MAPPO. From the learning curves, it shows that HAPPO and HATRPO exhibit steadily improving learning behaviours, while baselines experience large oscillations on \texttt{25m} and \texttt{27m\_vs\_30m}, again demonstrating the monotonic improvement property of our methods. On SMACv2, though randomness and heterogeneity increase, HAPPO and HATRPO robustly achieve competitive win rates and are comparable to QMIX and MAPPO.
Another important observation is that HATRPO is more effective than HAPPO in SMAC and SMACv2, outperforming HAPPO on 10 out of 15 tasks. This implies that HATRPO could enhance learning stability by imposing explicit constraints on update distance and reward improvement, making it a promising approach to tackling novel and challenging tasks. Overall, the performance of HAPPO and HATRPO in SMAC and SMACv2 confirms their capability to coordinate agents' training in largely homogeneous settings.

\subsection{Google Research Football Testbed}

Google Research Football Environment (GRF) composes a series of tasks where agents are trained to play football in an advanced, physics-based 3D simulator. 
From literature \citep{mappo}, it is shown that GRF is still challenging to existing methods. We apply HAPPO to the five academy tasks of GRF, namely 3 vs 1 with keeper (3v.1), counterattack (CA) easy and hard, 
pass and shoot with keeper (PS), and run pass and shoot with keeper (RPS),  with MAPPO and QMIX as baselines. As GRF does not provide a global state interface, our solution is to implement a global state based on agents' observations following the \texttt{Simple115StateWrapper} of GRF. Concretely, the global state consists of common components in agents' observations and the concatenation of agent-specific parts, and is taken as input by the centralised critic for value prediction. We also utilize the dense-reward setting in GRF. All methods are trained for 25 million environment steps in all scenarios with the exception of CA (hard), in which methods are trained for 50 million environment steps. We compute the success rate over 100 rollouts of the game and report the average success rate over the last 10 evaluations across 6 seeds in Table \ref{tab:football}. We also report the learning curves of the algorithms in Figure \ref{fig:football}.

\begin{table}[!t]
\centering
\begin{tabular}{cccc}
\hline
scenarios             & HAPPO                             & MAPPO                             & QMIX                      \\ \hline
PS                    & \textbf{96.93}\scriptsize{(1.11)} & 94.92\scriptsize{(0.85)}          & 8.05\scriptsize{(5.58)}   \\
RPS                   & \textbf{77.30}\scriptsize{(7.40)} & \textbf{76.83}\scriptsize{(3.57)} & 8.08\scriptsize{(3.29)}   \\
3v.1                  & \textbf{94.74}\scriptsize{(3.05)} & 88.03\scriptsize{(4.15)}          & 8.12\scriptsize{(4.46)}   \\
CA\scriptsize{(easy)} & \textbf{92.00}\scriptsize{(1.62)} & 87.76\scriptsize{(6.40)}          & 15.98\scriptsize{(11.77)} \\
CA\scriptsize{(hard)} & \textbf{88.14}\scriptsize{(5.77)} & 77.38\scriptsize{(10.95)}         & 3.22\scriptsize{(4.39)}   \\ \hline
\end{tabular}
\caption{Average evaluation score rate and standard deviation (over six seeds) on GRF scenarios for different methods. All values within 1 standard deviation of the maximum score rate are marked in bold. Our results reveal that HAPPO generally outperforms MAPPO and QMIX on all tasks, setting a new state-of-the-art performance benchmark.}
\label{tab:football}
\end{table}

\begin{figure}[tbp]
  \centering
  \includegraphics[width=\linewidth]{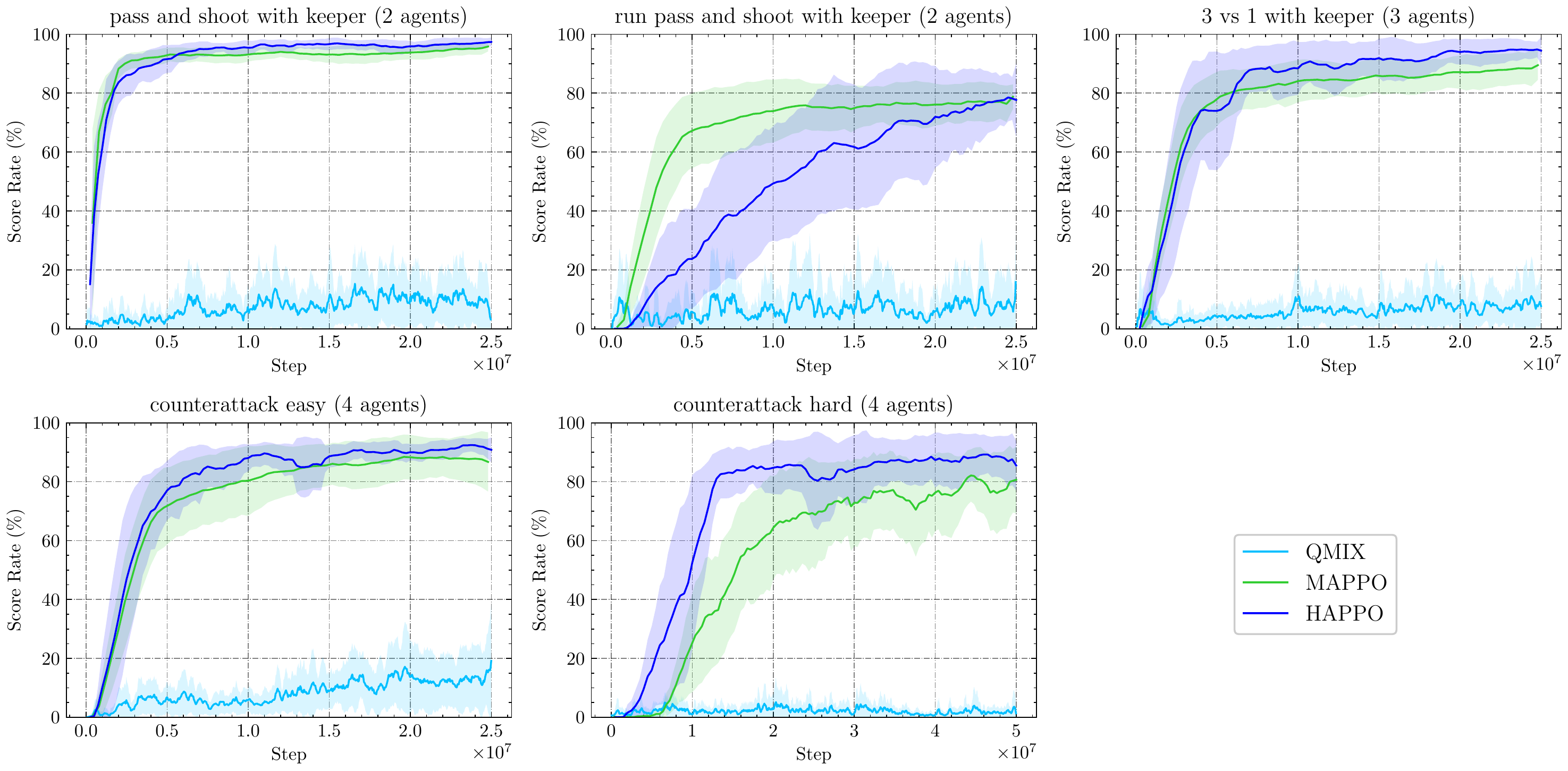}
  \caption{The figure displays the average score rate comparisons for different methods on GRF, and also illustrates how the performance gaps between HAPPO and MAPPO widen as the roles and difficulty levels of tasks increase. Overall, our results demonstrate that HAPPO outperforms MAPPO in tackling complex multi-agent scenarios.}
  \label{fig:football}
\end{figure}

We observe that HAPPO is generally better than MAPPO, establishing new state-of-the-art results, and they both significantly outperform QMIX. In particular, as the number of agents increases and the roles they play become more diverse, the performance gap between HAPPO and MAPPO becomes larger, again showing the effectiveness and advantage of HARL algorithms for the many-heterogeneous-agent settings. From the rendered videos, it is shown that agents trained by HAPPO develop clever teamwork strategies for ensuring a high score rate, such as cooperative breakthroughs to form one-on-one chances, etc. 
This result further supports the effectiveness of applying HAPPO to cooperative MARL problems.

\subsection{Bi-DexterousHands Testbed}

Based on IsaacGym, Bi-DexterousHands provides a suite of tasks for learning human-level bimanual dexterous manipulation. It leverages GPU parallelisation and enables simultaneous instantiation of thousands of environments. Compared with other CPU-based environments, Bi-DexterousHands significantly increases the number of samples generated in the same time interval, thus alleviating the sample efficiency problem of on-policy algorithms. We choose three representative tasks and compare HAPPO with MAPPO as well as PPO. As the existing reported results of MAPPO on these tasks do not utilize parameter sharing, we follow them in order to be consistent. The learning curves plotted from training data across three random seeds are shown in Figure \ref{fig:dexhands}. On all three tasks, HAPPO consistently outperforms MAPPO, and is at least comparable to or better than the single-agent baseline PPO, while also showing less variance. The comparison between HAPPO and MAPPO demonstrates the superior competence of the sequential update scheme adopted by HARL algorithms over simultaneous updates for coordinating multiple heterogeneous agents.

\begin{figure}[tbp]
  \centering
  \includegraphics[width=\linewidth]{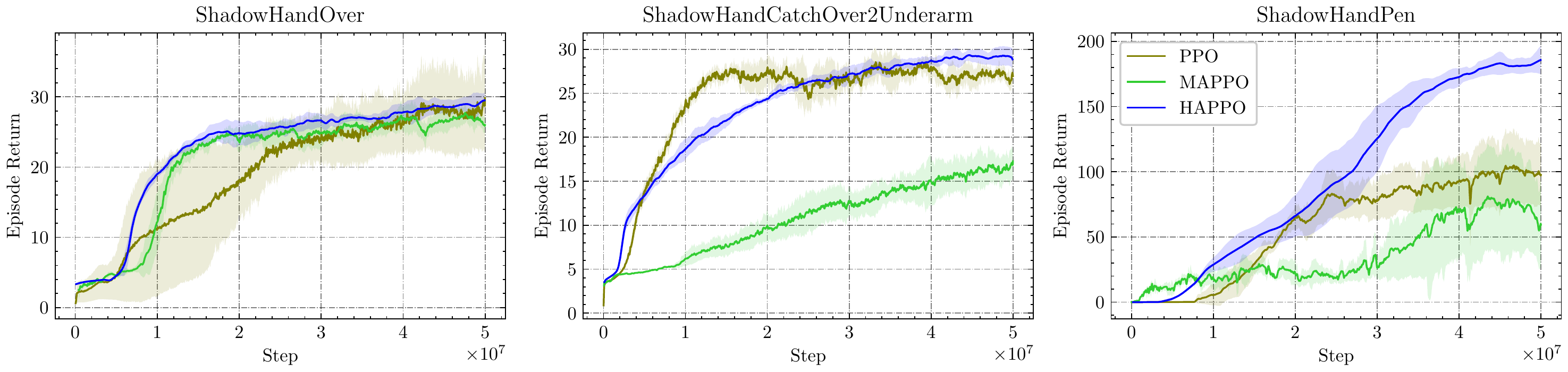}
  \caption{Comparisons of average episode return on Bi-DexterousHands. The learning curves demonstrate that HAPPO consistently achieves the highest return, outperforming both MAPPO and PPO.}
  \label{fig:dexhands}
\end{figure}

\subsection{Ablation Experiments}
\label{subsec:ablations}
In this subsection, we conduct ablation study to investigate the importance of two key novelties that our HARL algorithms introduced; they are heterogeneity of agents' parameters and the randomisation of order of agents in the sequential update scheme. 
We compare the performance of original HAPPO with a version that shares parameters, and with a version where the order in sequential update scheme is fixed throughout training. We run the experiments on two MAMuJoCo tasks, namely 2-agent Walker and 6-agent Walker.

\begin{figure}[tbp]
  \centering
  \includegraphics[width=0.8\linewidth]{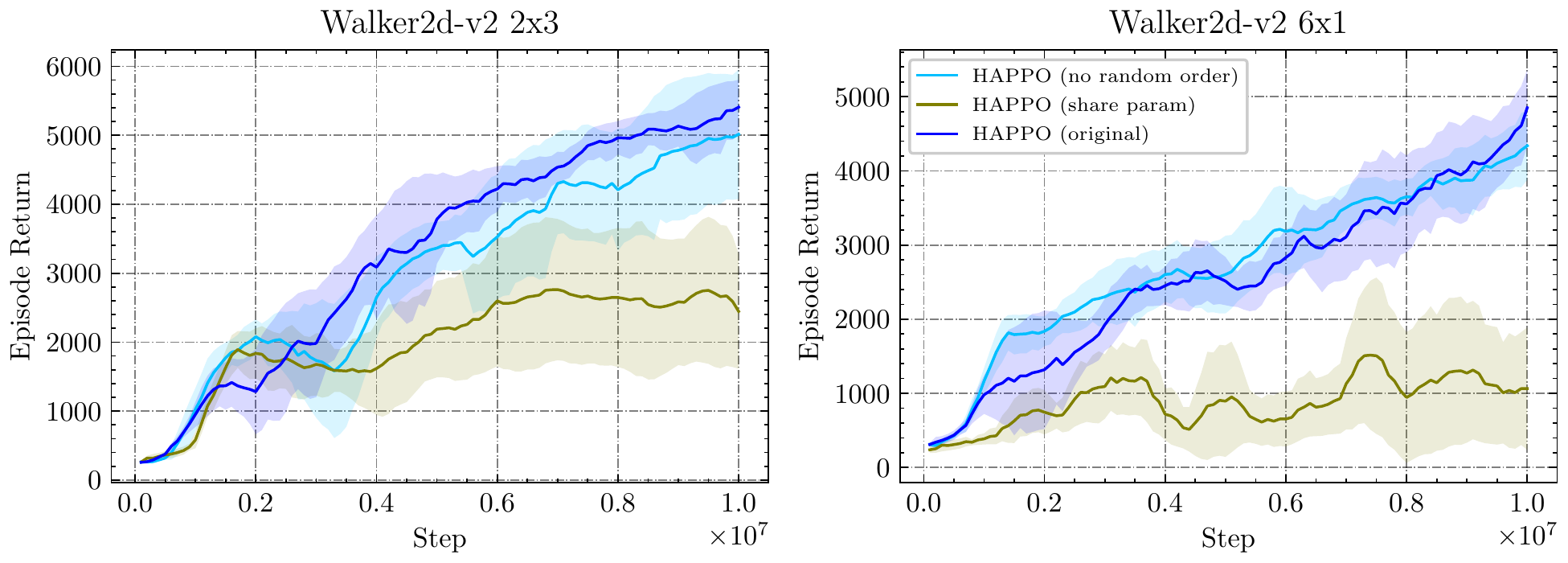}
  \caption{Performance comparison between original HAPPO, and its modified versions: HAPPO with parameter sharing, and HAPPO without randomisation of the sequential update scheme.}
  \label{fig:mamujoco-ablation}
\end{figure}
 
The experiments reveal that the deviation from the theory harms performance. In particular, parameter sharing introduces unreasonable policy constraints to training, harms the monotonic improvement property (Theorem \ref{theorem:monotonic-matrpo} assumes heterogeneity), and causes HAPPO to converge to suboptimal policies. The suboptimality is more severe in the task with more diverse agents, as discussed in Section \ref{sec:hh}. Similarly, fixed order in the sequential update scheme negatively affects the performance at convergence, as suggested by Theorem \ref{proposition:convergence-matrpo}. In the 2-agent task, fixing update order leads to inferior performance throughout the training process; in the 6-agent task, while the fixed order version initially learns faster, it is gradually overtaken by the randomised order version and achieves worse convergence results. We conclude that the fine performance of HARL algorithms relies strongly on the close connection between theory and implementation. 
%The connection becomes increasingly important with the growing number of agents.

\subsection{Analysis of Computational Overhead}
We then analyse the computational overhead introduced by the sequential update scheme. We mainly compare HAPPO with MAPPO in parameter-sharing setting, where our implementation conducts the single vectorized update \footnote{Corresponding to the original implementation at \url{https://github.com/marlbenchmark/on-policy/blob/0affe7f4b812ed25e280af8115f279fbffe45bbe/onpolicy/algorithms/r\_mappo/r\_mappo.py\#L205}.}. We conduct experiments on seven MAMuJoCo tasks with all hyperparameters fixed. Both methods are trained for 1 million steps and we record the computational performance in Table \ref{tab:comp-overhead}. The machine for experiments in this subsection is equipped with an AMD Ryzen 9 5950X 16-Core Processor and an NVIDIA RTX 3090 Ti GPU, and we ensure that no other experiments are running.

% Please add the following required packages to your document preamble:
% \usepackage{multirow}
\begin{table}[tbp]
    \centering
\begin{tabular}{c|ccc|ccc}
\hline
\multirow{2}{*}{scenarios} & \multicolumn{3}{c|}{HAPPO}                                                                                                                           & \multicolumn{3}{c}{MAPPO}                                                                                                                                 \\ \cline{2-7} 
                           & \begin{tabular}[c]{@{}c@{}}experiment\\ time(s)\end{tabular} & \begin{tabular}[c]{@{}c@{}}agents\\ update\\ time(s)\end{tabular} & FLOPS & \begin{tabular}[c]{@{}c@{}}experiment\\ time(s)\end{tabular} & \begin{tabular}[c]{@{}c@{}}share param\\ update\\ time(s)\end{tabular} & FLOPS \\ \hline
HalfCheetah 2x3 & $203.4_{0.4}$ & $ 8.6_{0.0}$ & 368 & $197.6_{1.0}$ & $ 4.9_{0.1}$ & 588 \\
HalfCheetah 3x2 & $264.7_{1.0}$ & $ 12.9_{0.0}$ & 368 & $256.0_{1.1}$ & $ 6.2_{0.0}$ & 644 \\
HalfCheetah 6x1 & $451.0_{1.4}$ & $ 25.3_{0.0}$ & 366 & $441.4_{0.9}$ & $ 12.3_{0.0}$ & 717 \\
Walker 2x3 & $193.9_{2.8}$ & $ 8.6_{0.0}$ & 368 & $187.4_{4.8}$ & $ 4.9_{0.0}$ & 588 \\
Walker 3x2 & $245.0_{6.2}$ & $ 12.9_{0.2}$ & 368 & $232.6_{1.5}$ & $ 6.3_{0.0}$ & 649 \\
Walker 6x1 & $408.6_{9.9}$ & $ 25.4_{0.2}$ & 370 & $383.2_{5.6}$ & $ 12.2_{0.0}$ & 711 \\
Humanoid 17x1 & $912.0_{11.8}$ & $ 76.7_{0.9}$ & 568 & $988.0_{2.1}$ & $ 71.3_{0.1}$ & 738 \\ \hline
\end{tabular}
\caption{Computational performance comparisons between HAPPO and MAPPO on seven MAMuJoCo tasks across three seeds. As for the comparison items, ``experiment time'' denotes the overall running time of a single experiment; ``agents update time'' of HAPPO denotes the total time of all agent updates; ``share param update time'' of MAPPO denotes the total time consumed in updating the shared parameters; ``FLOPS'' (floating-point operations per second) during the update is calculated as the total floating-point operations in a network forward pass divided by data transfer time plus computation time (unit: GFLOPS). The main figure represents the mean and the subscript represents the standard deviation. These figures suggest that the sequential update scheme does not introduce much computational burden compared to a single vectorized update.}
\label{tab:comp-overhead}
\end{table}

\begin{figure}[tbp]
    \centering
    \begin{subfigure}{0.45\textwidth}
        \includegraphics[width=\linewidth]{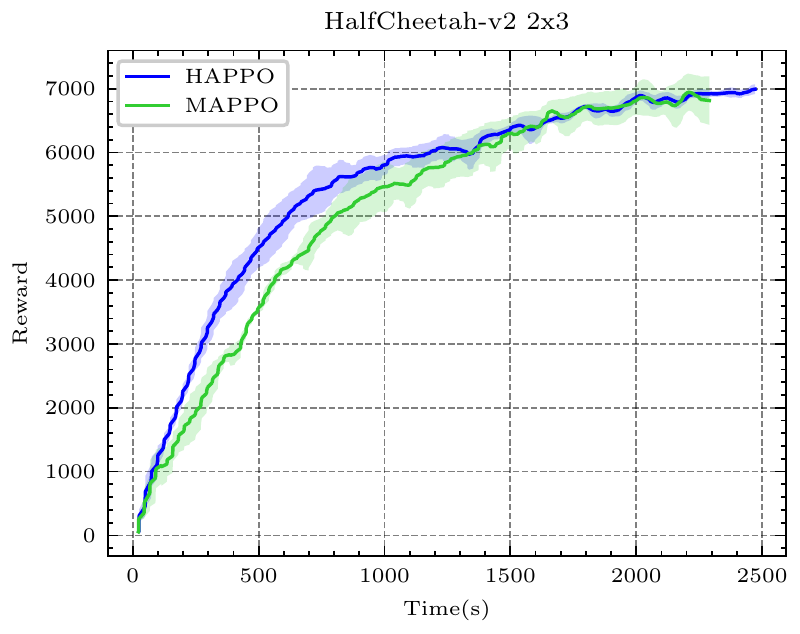} % 替换为您的图片文件的实际路径
        \label{fig:halfcheetah-2x3-time} % 如果您想引用图片，可以使用这个标签
        \caption{Return vs. time on 2-agent HalfCheetah.}
    \end{subfigure}
    \hfill % 这会在两个子图之间添加一些空间
    \begin{subfigure}{0.45\textwidth}
        \includegraphics[width=\linewidth]{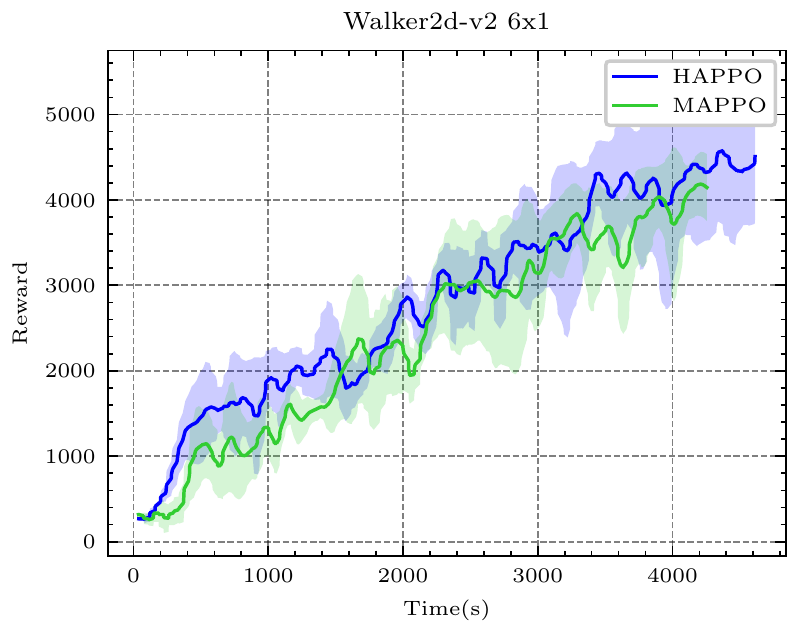} % 替换为您的图片文件的实际路径
        \label{fig:walker-6x1-time} % 如果您想引用图片，可以使用这个标签
        \caption{Return vs. time on 6-agent Walker.}
    \end{subfigure}
    \caption{Performance comparison between HAPPO and MAPPO with the x-axis being the wall-time. At the same time, HAPPO generally outperforms parameter-sharing MAPPO.}
    \label{fig:compare-step-time}
\end{figure}

We generally observe a linear relationship between update times for both HAPPO and MAPPO and the number of agents. For HAPPO, each agent is trained on a constant-sized batch input, denoted as $|B|$, across tasks. Thus the total time consumed to update all agents correlates directly with the agent count. For MAPPO, on the other hand, the shared parameter is trained on a batch input of size $n\times|B|$ when the number of agents is $n$. However, as the batch size $|B|$ used in MAMuJoCo is typically large, in this case 4000, vectorizing agents data does not significantly enhance GPU parallelization. This is evidenced by the relatively consistent FLOPS recorded across tasks. As a result, the MAPPO update timeframe also exhibits linear growth with increasing agents. The ratio of HAPPO and MAPPO update time is almost constant on the first six tasks and it nearly degenerates to 1 when both of them sufficiently utilize the computational resources, as shown in the case of 17-agent Humanoid where the significantly higher-dimensional observation space leads to increased GPU utilization, \emph{i.e.} FLOPS, for HAPPO. These facts suggest that the sequential update scheme does not introduce much computational burden compared to the single vectorized update. As the update only constitutes a small portion of the whole experiment, such an additional computational overhead is almost negligible.

In Figure \ref{fig:compare-step-time}, we further provide the learning curves of HAPPO and MAPPO on two MAMuJoCo tasks corresponding to Figure \ref{fig:mamujoco-on-policy}, with the x-axis being wall-time. The oscillation observed in Figure \ref{fig:compare-step-time}(b) is due to a slight difference in training time across the seeds rather than the instability of algorithms. These figures demonstrate that HAPPO generally outperforms MAPPO at the same wall-time. To run 10 million steps, HAPPO needs $8.12\%$ and $8.64\%$ more time than MAPPO respectively, an acceptable tradeoff to enjoy the benefits of the sequential update scheme in terms of improved performance and rigorous theoretical guarantees. Thus, we justify that computational overhead does not need to be a concern.

\section{Conclusion}

In this paper, we present Heterogeneous-Agent Reinforcement Learning (HARL) algorithm series, a set of powerful solutions to cooperative multi-agent problems with theoretical guarantees of monotonic improvement and convergence to Nash Equilibrium. Based on the multi-agent advantage decomposition lemma and the sequential update scheme, we successfully develop Heterogeneous-Agent Trust Region Learning (HATRL) and introduce two practical algorithms --- HATRPO and HAPPO --- by tractable approximations. We further discover the Heterogeneous-Agent Mirror Learning (HAML) framework, which strengthens validations for HATRPO and HAPPO and is a general template for designing provably correct MARL algorithms whose properties are rigorously profiled. Its consequences are the derivation of more HARL algorithms, HAA2C, HADDPG, and HATD3, which significantly enrich the tools for solving cooperative MARL problems. 
% In order to fully exploit the theoretical advantages of HARL algorithms and make them readily available to tackle new tasks, we open-sourced \texttt{HARL Toolbox}, an integrated implementation that features effectiveness, diversity, convenience, and extensibility. 
% Finally, we conduct various experiments to test all aspects of HARL algorithms, arriving at the conclusion that 
% they consistently outperform existing MA-counterparts  and that HAPPO and HATD3 perform particularly well, refreshing SOTA results on heterogeneous-agent tasks.
Experimental analysis on MPE, MAMuJoCo, SMAC, SMACv2, GRF, and Bi-DexterousHands confirms that HARL algorithms generally outperform existing MA-counterparts and refresh SOTA results on heterogeneous-agent benchmarks, showing their superior effectiveness for heterogeneous-agent cooperation over strong baselines such as MAPPO and QMIX.
% they collectively deliver effective and efficient SOTA solutions to the multi-agent cooperation problem, especially when facing the heterogeneous-agent setting. 
Ablation studies further substantiate the key novelties required in theoretical reasoning and enhance the connection between HARL theory and implementation. For future work, we plan to consider more possibilities of the HAML framework and validate the effectiveness of HARL algorithms on real-world multi-robot cooperation tasks.

% % Acknowledgements and Disclosure of Funding should go at the end, before appendices and references

% \acks{All acknowledgements go at the end of the paper before appendices and references.
% Moreover, you are required to declare funding (financial activities supporting the
% submitted work) and competing interests (related financial activities outside the submitted work).
% More information about this disclosure can be found on the JMLR website.}

\acks{We would like to thank Chengdong Ma for insightful discussions; the authors of MAPPO \citep{mappo} for providing original training data of MAPPO and QMIX on SMAC and GRF; the authors of SMACv2 \citep{ellis2022smacv2} for providing original training data of MAPPO and QMIX on SMACv2; and the authors of Bi-DexterousHands \citep{chen2022towards} for providing original training data of MAPPO and PPO on Bi-DexterousHands. 

This project is funded by National Key R\&D Program of China (2022ZD0114900) , Collective Intelligence \& Collaboration Laboratory (QXZ23014101) , CCF-Tencent Open Research Fund (RAGR20220109) , Young Elite Scientists Sponsorship Program by CAST (2022QNRC002), Beijing Municipal Science \& Technology Commission (Z221100003422004).}

% % Manual newpage inserted to improve layout of sample file - not
% % needed in general before appendices/bibliography.

% \newpage

\appendix

\section{Proofs of Example \ref{eg:suboptimal} and \ref{eg:needscare}}

\label{appendix:proof-of-examples}

\begin{restatable}{example}{suboptimal}
\label{eg:suboptimal}
Consider a fully-cooperative game with an even number of agents $n$, one state, and the joint action space $\{0, 1\}^{n}$, where the reward is given by $r(\boldsymbol{0}^{n/2}, \boldsymbol{1}^{n/2}) 
= r(\boldsymbol{1}^{n/2}, \boldsymbol{0}^{n/2}) = 1$, and $r(\va^{1:n}) =0$ for all other joint actions. Let $J^{*}$ be the optimal joint reward, and $J^*_{\text{share}}$ be the optimal joint reward under the shared policy constraint. Then 
\begin{align}
    \frac{J^*_{\text{share}}}{J^{*}} = \frac{2}{2^n}.\nonumber
\end{align}
\end{restatable}

\begin{proof}
   Clearly $J^* = 1$. An optimal joint policy in this case is, for example, the deterministic policy with joint action $(\boldsymbol{0}^{n/2}, \boldsymbol{1}^{n/2})$. 
   
   Now, let the shared policy be $(\theta, 1-\theta)$, where $\theta$ determines the probability that an agent takes action $0$. Then, the expected reward is 
   \begin{align}
       &J(\theta) = \text{Pr}\left( \va^{1:n} = (\boldsymbol{0}^{n/2}, \boldsymbol{1}^{n/2})\right)\cdot 1 +
       \text{Pr}\left( \va^{1:n} = (\boldsymbol{1}^{n/2}, \boldsymbol{0}^{n/2})\right)\cdot 1
       = 2\cdot \theta^{n/2}(1-\theta)^{n/2}.\nonumber
   \end{align}
   In order to maximise $J(\theta)$, we must maximise $\theta (1-\theta)$, or equivalently, $\sqrt{\theta(1-\theta)}$. By the artithmetic-geometric means inequality, we have
   \begin{align}
       \sqrt{\theta(1-\theta)} \leq \frac{\theta + (1-\theta)}{2} = \frac{1}{2},\nonumber
   \end{align}
   where the equality holds if and only if $\theta = 1-\theta$, that is $\theta = \frac{1}{2}$. In such case we have
   \begin{align}
       J^*_{\text{share}} = J\left(\frac{1}{2}\right) = 2\cdot 2^{-n/2}\cdot 2^{-n/2} = \frac{2}{2^n},\nonumber
   \end{align}
   which finishes the proof.
\end{proof}

\needscare*
\begin{proof}
As there is only one state, we can ignore the infinite horizon and the discount factor $\gamma$, thus making the state-action value and the reward functions equivalent, $Q\equiv r$.

Let us, for brevity, define $\pi^i = \pi^i_{\text{old}}(0) > 0.6$, for $i=1, 2$. We have 
\begin{align}
    J(\vpi_{\text{old}}) &= \text{Pr}(\ra^1 = \ra^2 = 0)r(0, 0) + \big(1 -\text{Pr}(\ra^1 = \ra^2 = 0)\big)\E[r(\ra^1, \ra^2)| (\ra^1, \ra^2)\neq(0, 0)] \nonumber\\
    &> 0.6^2\times 0 - (1-0.6^2) = -0.64.\nonumber
\end{align}
The update rule stated in the proposition can be equivalently written as
\begin{align}
    \label{eq:restate-prop}
    \pi^i_{\text{new}} = \argmax\limits_{\pi^i}\E_{\ra^i\sim\pi^i, \ra^{-i}\sim\pi^{-i}_{\text{old}}}\big[ Q_{\vpi_{\text{old}}}(\ra^i, \ra^{-i}) \big].
\end{align}
We have
\begin{align}
    \E_{\ra^{-i}\sim\pi^{-i}_{\text{old}}}\big[ Q_{\vpi_{\text{old}}}(0, \ra^{-i})\big] =
    \pi^{-i}Q(0, 0) + (1-\pi^{-i})Q(0, 1) = \pi^{-i}r(0, 0) + (1-\pi^{-i})r(0, 1) = 2(1-\pi^{-i}),\nonumber
\end{align}
and similarly
\begin{align}
    \E_{\ra^{-i}\sim\pi^{-i}_{\text{old}}}\big[ Q_{\vpi_{\text{old}}}(1, \ra^{-i})\big] = \pi^{-i}r(1, 0) + (1-\pi^{-i})r(1, 1)= 2\pi^{-i} - (1-\pi^{-i}) = 3\pi^{-i} - 1.\nonumber
\end{align}
Hence, if $\pi^{-i}>0.6$, then
\begin{align}
    \E_{\ra^{-i}\sim\pi^{-i}_{\text{old}}}\big[ Q_{\vpi_{\text{old}}}(1, \ra^{-i})\big] = 3\pi^{-i}-1>3\times 0.6 - 1= 0.8 > 2 - 2\pi^{-i} = 
    \E_{\ra^{-i}\sim\pi^{-i}_{\text{old}}}\big[ Q_{\vpi_{\text{old}}}(0, \ra^{-i})\big].\nonumber
\end{align}
Therefore, for every $i$, the solution to Equation (\ref{eq:restate-prop}) is the greedy policy $\pi^{i}_{\text{new}}(1) = 1$. Therefore, 
\begin{align}
    J(\vpi_{\text{new}}) = Q(1,1) = r(1, 1) = -1,\nonumber
\end{align}
which finishes the proof.
\end{proof}

% \newpage
\section{Derivation and Analysis of Algorithm \ref{algorithm:theoretical-matrpo}}
\label{appendix:theoretical-matrpo}

\subsection{Recap of Existing Results}

\begin{restatable}[Performance Difference]{lemma}{performancedifference}
\label{lemma:performance-difference}
Let $\bar{\pi}$ and $\pi$ be two policies. Then, the following identity holds,
\begin{align}
    J(\bar{\pi}) - J(\pi) = \E_{\rs\sim\rho_{\bar{\pi}}, \ra\sim\bar{\pi}}\left[ A_{\pi}(\rs, \ra) \right].\nonumber
\end{align}
\end{restatable}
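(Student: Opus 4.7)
The plan is to establish this by the standard telescoping-sum argument of Kakade and Langford, adapted to the notation of this paper. The starting point is the identity $A_{\pi}(s,a) = r(s,a) + \gamma \E_{\rs' \sim P(\cdot|s,a)}[V_\pi(\rs')] - V_\pi(s)$, which follows directly from the definitions of $Q_\pi$, $V_\pi$, and $A_\pi$.

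First I would consider a trajectory $\tau = (\rs_0, \rva_0, \rs_1, \rva_1, \dots)$ sampled from the joint policy $\bar{\vpi}$ with $\rs_0 \sim d$, and look at the discounted sum of advantages under $\pi$:
\begin{align*}
\E_{\tau \sim \bar{\vpi}}\left[\sum_{t=0}^{\infty} \gamma^t A_{\pi}(\rs_t, \rva_t)\right]
= \E_{\tau \sim \bar{\vpi}}\left[\sum_{t=0}^{\infty} \gamma^t \bigl(\rr_t + \gamma V_\pi(\rs_{t+1}) - V_\pi(\rs_t)\bigr)\right].
\end{align*}
The right-hand side telescopes: the $\rr_t$ terms sum to $J(\bar{\vpi})$, while the $V_\pi$ terms collapse (since $\rs_{t+1}$ under $\bar{\vpi}$ has the same marginal as the $\rs_{t+1}$ term appearing in the next summand) to leave only $-\E_{\rs_0 \sim d}[V_\pi(\rs_0)] = -J(\pi)$. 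Hence the right-hand side equals $J(\bar{\vpi}) - J(\pi)$.

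Second, I would rewrite the left-hand side in the form stated in the lemma. By linearity of expectation and the definition $\rho_{\bar{\vpi}} = \sum_{t=0}^\infty \gamma^t \rho^t_{\bar{\vpi}}$, we have
\begin{align*}
\E_{\tau \sim \bar{\vpi}}\left[\sum_{t=0}^\infty \gamma^t A_\pi(\rs_t, \rva_t)\right]
= \sum_{t=0}^\infty \gamma^t \E_{\rs \sim \rho^t_{\bar{\vpi}}, \rva \sim \bar{\vpi}}\left[A_\pi(\rs, \rva)\right]
= \E_{\rs \sim \rho_{\bar{\vpi}}, \rva \sim \bar{\vpi}}\left[A_\pi(\rs, \rva)\right],
\end{align*}
where the last equality uses the (improper) marginal state distribution notation from the Preliminaries. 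Combining the two displays yields the claimed identity.

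The main obstacle is purely bookkeeping: one must justify the interchange of expectation and the infinite sum (standard under the boundedness of rewards on a finite state-action space and $\gamma<1$), and be careful that the index shift in the telescoping cancellation is executed correctly so that only the initial $V_\pi(\rs_0)$ term survives. No genuinely new idea is required beyond Kakade–Langford; the lemma is stated here simply to be invoked in the subsequent monotonic-improvement analysis of HATRL.
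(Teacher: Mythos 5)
Your proof is correct and is precisely the standard Kakade--Langford telescoping argument that the paper itself defers to by citation (Lemma 6.1 of \cite{kakade2002approximately} and Appendix A of \cite{trpo}), so you are taking essentially the same route as the paper's own (outsourced) proof. The bookkeeping points you flag --- interchanging the sum and expectation, and the vanishing of $\gamma^T V_\pi(\rs_T)$ in the telescoping limit --- are exactly the right ones and are justified here by bounded rewards and $\gamma<1$.
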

\begin{proof}
    See \cite{kakade2002approximately} (Lemma 6.1) or \cite{trpo} (Appendix A).
\end{proof}

% \trpoinequality*
\begin{restatable}{theorem}{trpoinequality}\citep[Theorem 1]{trpo}
    \label{theorem:trpo-ineq}
    Let $\pi$  be the current policy and $\bar{\pi}$ be the next candidate policy. We define
        $ L_{\pi}(\bar{\pi}) = J(\pi) + \E_{\rs\sim\rho_{\pi}, \ra\sim\bar{\pi}}\left[ A_{\pi}(s, a) \right], 
         \text{{\normalfont D}}_{\text{KL}}^{\text{max}}(\pi, \bar{\pi}) = \max_{s}\text{{\normalfont D}}_{\text{KL}}\left( \pi(\cdot|s), \bar{\pi}(\cdot|s) \right).$
   Then the  inequality of 
    \begin{align}
        J(\bar{\pi}) \geq L_{\pi}(\bar{\pi}) - C\text{{\normalfont D}}_{\text{KL}}^{\text{max}}\big(\pi, \bar{\pi}\big)
    \end{align}
   holds, where $C = \frac{4\gamma\max_{s, a}|A_{\pi}(s, a)|}{(1-\gamma)^{2}}$.
\end{restatable}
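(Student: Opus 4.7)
The plan is to reduce the theorem to the Performance Difference Lemma (the preceding lemma) combined with a coupling argument that controls the state-distribution mismatch. First I would subtract $J(\pi)$ from both sides and rewrite the performance difference as $J(\bar{\pi}) - J(\pi) = \E_{\rs \sim \rho_{\bar{\pi}}, \ra \sim \bar{\pi}}[A_{\pi}(\rs, \ra)]$ using Lemma \ref{lemma:performance-difference}. Since $L_\pi(\bar{\pi}) - J(\pi) = \E_{\rs \sim \rho_{\pi}, \ra \sim \bar{\pi}}[A_{\pi}(\rs, \ra)]$ by definition, the entire task collapses to upper-bounding the surrogate gap
\begin{align*}
\big| J(\bar{\pi}) - L_{\pi}(\bar{\pi}) \big|
= \Big| \sum_{t=0}^{\infty} \gamma^{t} \big( \E_{\rs \sim \rho^{t}_{\bar{\pi}}, \ra \sim \bar{\pi}}[A_{\pi}(\rs, \ra)] - \E_{\rs \sim \rho^{t}_{\pi}, \ra \sim \bar{\pi}}[A_{\pi}(\rs, \ra)] \big) \Big|,
\end{align*}
in which only the state-marginal differs between the two terms.

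Next I would introduce a maximal coupling of the two Markov chains rolled out from $d$ under $\pi$ and $\bar{\pi}$. Writing $\alpha \triangleq \max_{s} D_{\text{TV}}\big(\pi(\cdot|s), \bar{\pi}(\cdot|s)\big)$, the coupling can be arranged so that at every step the two chains draw the same action with probability at least $1 - \alpha$, hence they agree on the whole history through time $t$ with probability at least $(1 - \alpha)^{t}$; conditional on that event, $\rho^{t}_{\pi}$ and $\rho^{t}_{\bar{\pi}}$ coincide and the $t$-th summand vanishes. Conditional on the complementary event, I would bound the inner discrepancy using the key identity $\E_{\ra \sim \pi}[A_{\pi}(s, \ra)] = 0$, which turns $\E_{\ra \sim \bar{\pi}}[A_{\pi}(s, \ra)]$ into a TV-controlled difference of size at most $2 \alpha \epsilon$, where $\epsilon = \max_{s, \va}|A_{\pi}(s, \va)|$. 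Combining the two factors of $\alpha$ (one from the probability of ever disagreeing, one from the expectation of the advantage) and summing the geometric series $\sum_{t} \gamma^{t}$ yields a bound of the form $\frac{4 \gamma \epsilon}{(1-\gamma)^{2}} \alpha^{2}$.

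Finally, I would invoke Pinsker's inequality $D_{\text{TV}}(p, q) \leq \sqrt{\tfrac{1}{2} D_{\text{KL}}(p, q)}$ to replace $\alpha^{2}$ by $\tfrac{1}{2} \text{D}_{\text{KL}}^{\max}(\pi, \bar{\pi})$, and then absorb the $\tfrac{1}{2}$ into a loose constant to arrive at the stated $C = \tfrac{4 \gamma \epsilon}{(1-\gamma)^{2}}$. Rearranging gives $J(\bar{\pi}) \geq L_{\pi}(\bar{\pi}) - C\,\text{D}_{\text{KL}}^{\max}(\pi, \bar{\pi})$. The main obstacle, and the step that demands the most care, is obtaining the quadratic $\alpha^{2}$ in the coupling bound: it requires simultaneously exploiting the zero-mean property of the advantage under $\pi$ and the coupling probability of first disagreement, rather than the naive linear-in-$\alpha$ bound one gets from a direct TV estimate of $\|\rho_{\bar{\pi}} - \rho_{\pi}\|_{\text{TV}}$. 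Since this is exactly the argument from Schulman et al.\ (the cited source), I would reference it rather than reproduce it in full.
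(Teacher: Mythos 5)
Your proposal is correct and coincides with the paper's treatment: the paper offers no independent proof of this statement, instead citing Theorem 1 of Schulman et al. (Appendix A of the TRPO paper), and your sketch is a faithful outline of exactly that argument — performance difference lemma, coupled rollouts giving the quadratic dependence on the maximal total-variation distance via the zero-mean advantage identity, then Pinsker to pass to the KL penalty. The only cosmetic imprecision is that the sum over $t$ is a difference of two geometric series rather than a single one, but this does not affect the resulting constant.
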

\begin{proof}
    See \cite{trpo} (Appendix A and Equation (9) of the paper).
\end{proof}

\subsection{Analysis of Training of Algorithm \ref{algorithm:theoretical-matrpo}}
\label{appendix:analysis-training-the-matrpo}
\maadlemma*
\begin{proof}
    By the definition of multi-agent advantage function,
    \begin{align}
        &A^{i_{1:m}}_{\boldsymbol{\pi}}(s, \va^{i_{1:m}}) = 
        Q^{i_{1:m}}_{\boldsymbol{\pi}}(s, \va^{i_{1:m}})
        -  V_{\boldsymbol{\pi}}(s)\nonumber\\
        &= \sum_{k=1}^{m}\left[Q^{i_{1:k}}_{\boldsymbol{\pi}}(s, \va^{i_{1:k}}) - Q^{i_{1:k-1}}_{\boldsymbol{\pi}}(s, \va^{i_{1:k-1}})  \right]\nonumber\\
        &= \sum_{k=1}^{m}A^{i_k}_{\boldsymbol{\pi}}(s, \va^{i_{1:k-1}}, a^{i_k}),\nonumber
    \end{align}
    which finishes the proof.
    \newline
    Note that a similar finding has been shown in \cite{kuba2021settling}.
\end{proof}

\begin{restatable}{lemma}{localKLglobalKL}
    \label{lemma:kl-inequality}
     Let $\boldsymbol{\pi} = \prod_{i=1}^{n}\pi^{i}$ and $\boldsymbol{\bar{\pi}} = \prod_{i=1}^{n}\bar{\pi}^{i}$ be joint policies. Then
     \begin{align}
        \text{{\normalfont D}}_{\text{KL}}^{\text{max}}\left(\boldsymbol{\pi}, \boldsymbol{\bar{\pi}}\right) \leq
        \sum_{i=1}^{n}\text{{\normalfont D}}_{\text{KL}}^{\text{max}}\left( \pi^{i}, \bar{\pi}^{i} \right) \nonumber
     \end{align}
\end{restatable}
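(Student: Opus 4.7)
The plan is to exploit the product structure of the joint policies. Since each agent acts independently conditional on the state, $\boldsymbol{\pi}(\cdot|s) = \prod_{i=1}^{n}\pi^i(\cdot^i|s)$ and likewise for $\boldsymbol{\bar{\pi}}$, so for every fixed state $s$ the joint distributions over $\boldsymbol{\mathcal{A}}$ are product measures on the factorised space $\times_{i=1}^{n}\mathcal{A}^i$.

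First, I would prove the state-wise additive decomposition of the KL divergence for product measures: for any fixed $s$,
\begin{align*}
\text{D}_{\text{KL}}\bigl(\boldsymbol{\pi}(\cdot|s), \boldsymbol{\bar{\pi}}(\cdot|s)\bigr) = \sum_{i=1}^{n}\text{D}_{\text{KL}}\bigl(\pi^i(\cdot^i|s), \bar{\pi}^i(\cdot^i|s)\bigr).
\end{align*}
This follows from plugging the product forms into the definition of KL, using $\log\prod_i = \sum_i\log$ to split the log-ratio into a sum of $n$ terms, and observing that the $i$-th term depends only on $\ra^i$, so the expectation over $\rva\sim\boldsymbol{\pi}(\cdot|s)$ marginalises out all other coordinates and collapses to $\text{D}_{\text{KL}}(\pi^i(\cdot^i|s), \bar{\pi}^i(\cdot^i|s))$.

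Next I would take the maximum over $s$ on both sides and use the elementary inequality $\max_s \sum_i f_i(s) \leq \sum_i \max_s f_i(s)$:
\begin{align*}
\text{D}_{\text{KL}}^{\text{max}}(\boldsymbol{\pi}, \boldsymbol{\bar{\pi}})
&= \max_{s}\sum_{i=1}^{n}\text{D}_{\text{KL}}\bigl(\pi^i(\cdot^i|s), \bar{\pi}^i(\cdot^i|s)\bigr) \\
&\leq \sum_{i=1}^{n}\max_{s}\text{D}_{\text{KL}}\bigl(\pi^i(\cdot^i|s), \bar{\pi}^i(\cdot^i|s)\bigr)
= \sum_{i=1}^{n}\text{D}_{\text{KL}}^{\text{max}}(\pi^i, \bar{\pi}^i),
\end{align*}
which delivers the claim. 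There is no real obstacle here: the only nontrivial step is the decomposition lemma for product measures, which is standard, and the remaining inequality is just the subadditivity of $\max$ over sums of non-negative terms (non-negativity follows from Gibbs' inequality applied to each summand).
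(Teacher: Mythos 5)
Your proposal is correct and follows essentially the same route as the paper's proof: a state-wise additive decomposition of the KL divergence for product policies (via $\log\prod_i = \sum_i\log$ and marginalising out the other agents' actions), followed by taking the maximum over states and invoking the subadditivity of $\max$ over a sum of non-negative terms. The only difference is cosmetic --- you make the $\max_s\sum_i \leq \sum_i\max_s$ step explicit where the paper leaves it implicit.
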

\begin{proof}
    For any state $s$, we have
    \begin{align}
        &\text{{\normalfont D}}_{\text{KL}}\left( \boldsymbol{\pi}(\cdot|s), \boldsymbol{\bar{\pi}}(\cdot|s) \right) = \E_{\rva\sim\boldsymbol{\pi}}\left[ \log \boldsymbol{\pi}(\rva|s) - \log \boldsymbol{\bar{\pi}}(\rva|s) \right]\nonumber\\
        &= \E_{\rva\sim\boldsymbol{\pi}}\left[ \log \left( \prod_{i=1}^{n}{\pi^{i}}(\ra^{i}|s) \right)- 
        \log \left(\prod_{i=1}^{n}\bar{\pi}^{i}(\ra^{i}|s) \right) \right]\nonumber\\
        &= \E_{\rva\sim\boldsymbol{\pi}}\left[ 
        \sum_{i=1}^{n}\log  \pi^{i}(\ra^{i}|s) - 
        \sum_{i=1}^{n}\log \bar{\pi}^{i}(\ra^{i}|s)  
        \right]\nonumber\\
        &= \sum_{i=1}^{n}\E_{\ra^{i}\sim\pi^{i}, \rva^{-i}\sim\boldsymbol{\pi}^{-i}}
        \left[  \log \pi^{i}(\ra^{i}|s) - \log \bar{\pi}^{i}(\ra^{i}|s)\right] = \sum_{i=1}^{n}\text{{\normalfont D}}_{\text{KL}}\left(\pi^{i}(\cdot|s), \bar{\pi}^{i}(\cdot|s)\right).
        \label{eq:kl}
    \end{align}
    Now, taking maximum over $s$ on both sides yields
    \begin{align}
         \text{{\normalfont D}}_{\text{KL}}^{\text{max}}\left(\boldsymbol{\pi}, \boldsymbol{\bar{\pi}}\right) \leq
        \sum_{i=1}^{n}\text{{\normalfont D}}_{\text{KL}}^{\text{max}}\left( \pi^{i}, \bar{\pi}^{i} \right), \nonumber
    \end{align}
    as required.
\end{proof}

\trpotosadtrpo*
\begin{proof}
    By Theorem \ref{theorem:trpo-ineq}
    \begin{align}
        &J(\boldsymbol{\bar{\pi}}) \geq L_{\boldsymbol{\pi}}(\boldsymbol{\bar{\pi}}) - C\text{{\normalfont D}}_{\text{KL}}^{\text{max}}(\boldsymbol{\pi}, \boldsymbol{\bar{\pi}}) \nonumber\\
        &= J(\boldsymbol{\pi}) + \E_{\rs\sim\rho_{\boldsymbol{\pi}}, \rva\sim\boldsymbol{\bar{\pi}}}
        \left[ A_{\boldsymbol{\pi}}(\rs, \rva) \right]
         - C\text{{\normalfont D}}_{\text{KL}}^{\text{max}}(\boldsymbol{\pi}, \boldsymbol{\bar{\pi}}) \nonumber\\
        &\text{which by Lemma \ref{lemma:maadlemma} equals}\nonumber\\
        &= J(\boldsymbol{\pi}) + \E_{\rs\sim\rho_{\boldsymbol{\pi}}, \rva\sim\boldsymbol{\bar{\pi}}}
        \left[ \sum_{m=1}^{n}A^{i_{m}}_{\boldsymbol{\pi}}\left(\rs, \rva^{i_{1:m-1}}, \ra^{i_{m}} \right)\right]
         - C\text{{\normalfont D}}_{\text{KL}}^{\text{max}}(\boldsymbol{\pi}, \boldsymbol{\bar{\pi}}) \nonumber
    \end{align}
    \begin{align}
        &\text{and by Lemma \ref{lemma:kl-inequality} this is at least}\nonumber\\
        &\geq J(\boldsymbol{\pi}) + \E_{\rs\sim\rho_{\boldsymbol{\pi}}, \rva\sim\boldsymbol{\bar{\pi}}}
        \left[ \sum_{m=1}^{n}A^{i_{m}}_{\boldsymbol{\pi}}\left(\rs, \rva^{i_{1:m-1}}, \ra^{i_{m}} \right)\right]
        - \sum_{m=1}^{n}C\text{{\normalfont D}}_{\text{KL}}^{\text{max}}(\pi^{i_{m}}, \bar{\pi}^{i_{m}}) \nonumber\\
        &= J(\boldsymbol{\pi}) +
        \sum_{m=1}^{n}
        \E_{\rs\sim\rho_{\boldsymbol{\pi}}, \rva^{i_{1:m-1}}\sim\boldsymbol{\bar{\pi}}^{i_{1:m-1}},
        \ra^{i_{m}}\sim\bar{\pi}^{i_{m}}}
        \left[A^{i_{m}}_{\boldsymbol{\pi}}\left(\rs, \rva^{i_{1:m-1}}, \ra^{i_{m}} \right)\right]
        - \sum_{m=1}^{n}C\text{{\normalfont D}}_{\text{KL}}^{\text{max}}(\pi^{i_{m}}, \bar{\pi}^{i_{m}}) \nonumber\\
        &= J(\boldsymbol{\pi}) +
        \sum_{m=1}^{n}\left(L^{i_{1:m}}_{\boldsymbol{\pi}}\left( 
        \boldsymbol{\bar{\pi}}^{i_{1:m-1}}, \bar{\pi}^{i_{m}}\right)
        - C\text{{\normalfont D}}_{\text{KL}}^{\text{max}}(\pi^{i_{m}}, \bar{\pi}^{i_{m}})\right).\nonumber
    \end{align}
\end{proof}

\matrpomonotonic*
\begin{proof}
    Let $\boldsymbol{\pi}_{0}$ be any joint policy. For every $k\geq 0$, the joint policy $\boldsymbol{\pi}_{k+1}$ is obtained from $\boldsymbol{\pi}_{k}$ by Algorithm \ref{algorithm:theoretical-matrpo} update; for $m=1, \dots, n$,
    \begin{align}
        \pi^{i_{m}}_{k+1} = \argmax_{\pi^{i_{m}}}\left[ L^{i_{1:m}}_{\boldsymbol{\pi}_{k}}\left( \boldsymbol{\pi}^{i_{1:m-1}}_{k+1}, \pi^{i_{m}} \right) 
        - C\text{{\normalfont D}}_{\text{KL}}^{\text{max}}\left(\pi^{i_{m}}_{k}, \pi^{i_{m}} \right)\right].\nonumber
    \end{align}
    By Theorem \ref{theorem:trpo-ineq},  we have
    \begin{align}
        \label{eq:sad-trpo-monotonic}
        &J(\boldsymbol{\pi}_{k+1}) \geq L_{\boldsymbol{\pi}_{k}}(\boldsymbol{\pi}_{k+1}) - C \text{{\normalfont D}}_{\text{KL}}^{\text{max}}(\boldsymbol{\pi}_{k}, \boldsymbol{\pi}_{k+1}),\nonumber\\
        &\text{which by Lemma \ref{lemma:kl-inequality} is lower-bounded by}\nonumber\\
        &\geq L_{\boldsymbol{\pi}_{k}}(\boldsymbol{\pi}_{k+1}) -  \sum_{m=1}^{n}C  \text{{\normalfont D}}_{\text{KL}}^{\text{max}}(\pi^{i_{m}}_{k}, \pi^{i_{m}}_{k+1})\nonumber\\
        &= J(\boldsymbol{\pi}_{k}) + 
        \sum_{m=1}^{n}\left( L^{i_{1:m}}_{\boldsymbol{\pi}_{k}}(\boldsymbol{\pi}^{i_{1:m-1}}_{k+1},\pi^{i_{m}}_{k+1}) 
        - C \text{{\normalfont D}}_{\text{KL}}^{\text{max}}(\pi^{i_{m}}_{k}, \pi^{i_{m}}_{k+1})\right),\\
        &\text{and as for every $m$, $\pi^{i_{m}}_{k+1}$ is the argmax, this is lower-bounded by}\nonumber\\
        &\geq J(\boldsymbol{\pi}_{k}) + 
        \sum_{m=1}^{n}\left( L^{i_{1:m}}_{\boldsymbol{\pi}_{k}}(\boldsymbol{\pi}^{i_{1:m-1}}_{k+1},\pi^{i_{m}}_{k}) 
        - C \text{{\normalfont D}}_{\text{KL}}^{\text{max}}(\pi^{i_{m}}_{k}, \pi^{i_{m}}_{k})\right),\nonumber\\
        &\text{which, as mentioned in Definition \ref{definition:localsurrogate}, equals}\nonumber\\
        &= J(\boldsymbol{\pi_{k}}) + \sum_{m=1}^{n}0 =
        J(\boldsymbol{\pi}_{k}),\nonumber
    \end{align}
    where the last inequality follows from Equation (\ref{eq:nice-maad-property}).
    This proves that Algorithm \ref{algorithm:theoretical-matrpo} achieves monotonic improvement.
\end{proof}

\subsection{Analysis of Convergence of Algorithm \ref{algorithm:theoretical-matrpo}}
\label{appendix:analysis-convergence-thematrpo}
\matrpoconvergence*
\begin{proof}
\paragraph{Step 1 (convergence). } Firstly, it is clear that the sequence $\left( J(\boldsymbol{\pi}_{k})\right)_{k=0}^{\infty}$ converges as, by Theorem \ref{theorem:monotonic-matrpo}, it is non-decreasing and bounded above by $\frac{R_{\text{max}}}{1-\gamma}$. Let us denote the limit by $\bar{J}$. For every $k$, we denote the tuple of agents, according to whose order the agents perform the sequential updates, by $i_{1:n}^k$, and we note that $\big( i_{1:n}^k\big)_{k\in\mathbb{N}}$ is a random process.
 Furthermore, we know that the sequence of policies $\left(\boldsymbol{\pi}_{k}\right)$ is bounded, so by Bolzano-Weierstrass Theorem, it has at least one convergent subsequence. Let $\boldsymbol{\bar{\pi}}$ be any limit point of the sequence (note that the set of limit points is a random set), and $\big(\boldsymbol{\pi}_{k_{j}}\big)_{j=0}^{\infty}$ be a subsequence converging to $\boldsymbol{\bar{\pi}}$ (which is a random subsequence as well). By continuity of $J$ in $\boldsymbol{\pi}$ 
 % (Corollary \ref{corollary:continuityofvalues})
 , we have
    \begin{align}
        J(\bar{\boldsymbol{\pi}}) = J\left( \lim_{j\to\infty}\boldsymbol{\pi}_{k_{j}}\right)
        =\lim_{j\to\infty}J\left( \boldsymbol{\pi}_{k_{j}}\right) = \bar{J}.
    \end{align}
    
    For now, we introduce an auxiliary definition.
    \begin{restatable}[TR-Stationarity]{definition}{tr-stationary}
    \label{def:tr-stationary}
     A joint policy $\boldsymbol{\bar{\pi}}$ is trust-region-stationary (TR-stationary) if, for every agent $i$,

     \begin{align}
         \bar{\pi}^{i} = \argmax_{\pi^{i}}\left[
         \E_{\rs\sim\rho_{\boldsymbol{\bar{\pi}}}, \ra^{i}\sim\pi^{i}}
         \left[ A^{i}_{\boldsymbol{\bar{\pi}}}(\rs, \ra^{i}) \right]
         -C_{\boldsymbol{\bar{\pi}}}\text{{\normalfont D}}_{\text{KL}}^{\text{max}}\left( \bar{\pi}^{i}, \pi^{i} \right)\right],\nonumber
     \end{align}
     where $C_{\boldsymbol{\bar{\pi}}} = \frac{4\gamma \epsilon}{(1-\gamma)^{2}}$, and $\epsilon = \max_{s, \va}|A_{\boldsymbol{\bar{\pi}}}(s, \va)|$.
\end{restatable}

We will now establish the TR-stationarity of any limit point joint policy $\bar{\boldsymbol{\pi}}$ (which, as stated above, is a random variable). Let $\E_{i_{1:n}^{0:\infty}}[\cdot]$ denote the expected value operator under the random process $(i_{1:n}^{0:\infty})$. Let also $\epsilon_k = \max_{s,\va}|A_{\boldsymbol{\pi}_k}(s, \va)|$, and $C_k = \frac{4\gamma\epsilon_k}{(1-\gamma)^2}$. We have
\begin{align}
    &0 = \lim_{k\to\infty}\E_{i_{1:n}^{0:\infty}}\left[ J(\boldsymbol{\pi}_{k+1}) -
    J(\boldsymbol{\pi}_k)\right]\nonumber\\
    &\geq \lim_{k\to\infty}\E_{i_{1:n}^{0:\infty}}\left[ 
    L_{\boldsymbol{\pi}_k}(\boldsymbol{\pi}_{k+1}) - C_k \text{{\normalfont D}}_{\text{KL}}^{\text{max}}(\boldsymbol{\pi}_k, \boldsymbol{\pi}_{k+1}) \right]
    \ \ \text{by Theorem \ref{theorem:trpo-ineq}}\nonumber\\
    &\geq \lim_{k\to\infty}\E_{i_{1:n}^{0:\infty}}\left[ 
    L^{i_1^k}_{\boldsymbol{\pi}_k}\left(\pi^{i_1^k}_{k+1}\right) - C_k \text{{\normalfont D}}_{\text{KL}}^{\text{max}}\left(\pi_k^{i_1^k}, \pi_{k+1}^{i_1^k}\right) \right]\nonumber\\
    &\text{by Equation (\ref{eq:sad-trpo-monotonic}) and the fact that each of its summands is non-negative.}\nonumber
\end{align}
Now, we consider an arbitrary limit point $\boldsymbol{\bar{\pi}}$ from the (random) limit set, and a (random) subsequence $\big(\boldsymbol{\pi}_{k_j}\big)_{j=0}^{\infty}$ that converges to $\boldsymbol{\bar{\pi}}$. We get
\begin{align}
    &0\geq \lim_{j\to\infty}\E_{i_{1:n}^{0:\infty}}\left[ 
    L^{i_1^{k_j}}_{\boldsymbol{\pi}_{k_j}}\left(\pi^{i_1^{k_j}}_{k_j+1}\right) - C_{k_j} \text{{\normalfont D}}_{\text{KL}}^{\text{max}}\left(\pi_{k_j}^{i_1^{k_j}}, \pi_{k_j+1}^{i_1^{k_j}}\right) \right].\nonumber
\end{align}
As the expectation is taken of non-negative random variables, and for every $i\in\mathcal{N}$ and $k\in\mathbb{N}$, with some positive probability $p_i$, we have $i_1^{k_j}=i$ (because every permutation has non-zero probability), the above is bounded from below by
\begin{align}
    &p_i\lim_{j\to\infty}\max_{\pi^i}\left[ 
    L^{i}_{\boldsymbol{\pi}_{k_j}}(\pi^i) - C_{k_j} \text{{\normalfont D}}_{\text{KL}}^{\text{max}}\left(\pi_{k_j}^i, \pi^i\right)
    \right],\nonumber\\
    &\text{which, as $\boldsymbol{\pi}_{k_j}$ converges to $\boldsymbol{\bar{\pi}}$, equals to}\nonumber\\
    &p_i\max_{\pi^i} \left[L^{i}_{\boldsymbol{\bar{\pi}}}(\pi^i) - C_{\boldsymbol{\bar{\pi}}} \text{{\normalfont D}}_{\text{KL}}^{\text{max}}\left(\bar{\pi}^i, \pi^i\right)\right] \geq 0, \ \  \text{by Equation (\ref{eq:nice-maad-property}).} \nonumber
\end{align}
% For convergence of $L^i_{\boldsymbol{\pi}_{k_j}}(\pi^i)$ we used Definition \ref{definition:localsurrogate} combined with Lemma \ref{lemma:continuous-expectation}, for convergence of $C_{k_j}$ we used Corollary \ref{corollary:continuityofvalues}, and the convergence of $\text{{\normalfont D}}_{\text{KL}}^{\text{max}}$ follows from continuity of $\text{{\normalfont D}}_{\text{KL}}$ and $\max$.
This proves that, for any limit point $\boldsymbol{\bar{\pi}}$ of the random process $(\boldsymbol{\pi}_k)$ induced by Algorithm \ref{algorithm:theoretical-matrpo}, $\max_{\pi^i} \left[L^{i}_{\boldsymbol{\bar{\pi}}}(\pi^i) - C_{\boldsymbol{\bar{\pi}}} \text{{\normalfont D}}_{\text{KL}}^{\text{max}}\left(\bar{\pi}^i, \pi^i\right)\right] = 0$, which is equivalent with Definition \ref{def:tr-stationary}.

\paragraph{Step 2 (dropping the penalty term). }
Now, we have to prove that TR-stationary points are NEs of cooperative Markov games. The main step is to prove the following statement: \textsl{a TR-stationary joint policy $\boldsymbol{\bar{\pi}}$, for every state $s\in\mathcal{S}$, satisfies}
\begin{align}
    \label{eq:bold-claim}
    \bar{\pi}^{i} = \argmax_{\pi^i}\E_{\ra^i\sim\pi^i}\big[ A_{\boldsymbol{\bar{\pi}}}^i(s, \ra^i) \big].
\end{align}
We will use the technique of the proof by contradiction. Suppose that there is a state $s_0$ such that there exists a policy $\hat{\pi}^i$ with
\begin{align}
    \label{ineq:assume-to-contradict}
    \E_{\ra^i \sim \hat{\pi}^i}\big[ A_{\boldsymbol{\bar{\pi}}}^i(s_0, \ra^i) \big] > \E_{\ra^i \sim \bar{\pi}^i}\big[ A_{\boldsymbol{\bar{\pi}}}^i(s_0, \ra^i) \big].
\end{align}
Let us parametrise the policies $\pi^i$ according to the template
\begin{align}
  \pi^i(\cdot|s_0) = \big(x^i_1, \dots, x^i_{d^i-1}, 1-\sum\limits_{j=1}^{d^i-1}x^i_j\big)\nonumber
\end{align}
where the values of $x^i_j \ (j=1, \dots, d^i-1)$ are such that $\pi^i(\cdot|s_0)$ is a valid probability distribution.
Then we can rewrite our quantity of interest (the objective of Equation (\ref{eq:bold-claim}) as
\begin{align}
    \E_{\ra^i \sim \pi^i}\big[ A_{\boldsymbol{\bar{\pi}}}^i(s_0, \ra^i) \big] &= \sum_{j=1}^{d^i-1}x^i_j\cdot A_{\boldsymbol{\bar{\pi}}}^i\big(s_0, a^i_j\big) + (1-\sum_{h=1}^{d^i-1}x^i_h)A^i_{\boldsymbol{\bar{\pi}}}\big(s_0, a^i_{d^i}\big)\nonumber\\
    &= \sum_{j=1}^{d^i-1}x^i_j \big[A_{\boldsymbol{\bar{\pi}}}^i\big(s_0, a^i_j\big) - A^i_{\boldsymbol{\bar{\pi}}}\big(s_0, a^i_{d^i}\big)\big] +  A^i_{\boldsymbol{\bar{\pi}}}\big(s_0, a^i_{d^i}\big), \nonumber
\end{align}
which is an affine function of the policy parameterisation. It follows that its gradient (with respect to $x^i$) and directional derivatives are constant in the space of policies at state $s_0$. The existance of policy $\hat{\pi}^i(\cdot|s_0)$, for which Inequality (\ref{ineq:assume-to-contradict}) holds, implies that the directional derivative in the direction from $\bar{\pi}^i(\cdot|s_0)$ to $\hat{\pi}^i(\cdot|s_0)$ is strictly positive. 
We also have
\begin{align}
    \label{eq:partial-kl}
    \frac{\partial \text{{\normalfont D}}_{\text{KL}}(\bar{\pi}^i(\cdot|s_0), \pi^i(\cdot|s_0))}{\partial x^i_j} &= \frac{\partial}{\partial x^i_j}\left[ (\bar{\pi}^{i}(\cdot|s_0))^{T}(\log\bar{\pi}^i(\cdot|s_0) - \log\pi^i(\cdot|s_0)) \right]\nonumber\\
    &= \frac{\partial}{\partial x^i_j}\left[ -(\bar{\pi}^i)^{T}\log\pi^i \right] \ \text{(omitting state }s_0\text{ for brevity)}\nonumber\\
    &= -\frac{\partial}{\partial x^i_j}\sum_{k=1}^{d_i-1}\bar{\pi}^i_k\log x^i_k -
    \frac{\partial}{\partial x^i_j}\bar{\pi}^i_{d_i}\log\left( 1-\sum_{k=1}^{d_i-1}x^i_k\right)\nonumber\\
    &= -\frac{\bar{\pi}^i_j}{x^i_j} + \frac{\bar{\pi}^i_{d_i}}{1-\sum_{k=1}^{d_i-1}x^i_k}\nonumber\\
    &= -\frac{\bar{\pi}^i_j}{\pi^i_j} + \frac{\bar{\pi}^i_{d_i}}{\pi^i_{d_i}} = 0, \ \ \text{when evaluated at }\pi^i=\bar{\pi}^i,
\end{align}
which means that the KL-penalty has zero gradient at $\bar{\pi}^i(\cdot|s_0)$.
Hence, when evaluated at $\pi^i(\cdot|s_0) = \bar{\pi}^i(\cdot|s_0)$, the objective
\begin{align}
    \rho_{\boldsymbol{\bar{\pi}}}(s_0)\E_{\ra^i\sim\pi^i}\big[ A^i_{\boldsymbol{\bar{\pi}}}(s_0, \ra^i) \big] - C_{\boldsymbol{\bar{\pi}}}\text{{\normalfont D}}_{\text{KL}}\big( \bar{\pi}^i(\cdot|s_0), \pi^i(\cdot|s_0) \big) \nonumber
\end{align}
has a strictly positive directional derivative in the direction of $\hat{\pi}^i(\cdot|s_0)$. Thus, there exists a policy $\widetilde{\pi}^i(\cdot|s_0)$, sufficiently close to $\bar{\pi}^i(\cdot|s_0)$ on the path joining it with $\hat{\pi}^i(\cdot|s_0)$, for which
\begin{align}
    \rho_{\boldsymbol{\bar{\pi}}}(s_0)\E_{\ra^i\sim\widetilde{\pi}^i}\big[ A^i_{\boldsymbol{\bar{\pi}}}(s_0, \ra^i) \big] - C_{\boldsymbol{\bar{\pi}}}\text{{\normalfont D}}_{\text{KL}}\big( \bar{\pi}^i(\cdot|s_0), \widetilde{\pi}^i(\cdot|s_0) \big) > 0.\nonumber
\end{align}
Let ${\pi}^i_*$ be a policy such that $\pi^i_*(\cdot|s_0) = \widetilde{\pi}^i(\cdot|s_0)$, and $\pi^i_*(\cdot|s) = \bar{\pi}^i(\cdot|s)$ for states $s\neq s_0$. As for these states we have
\begin{align}
    \rho_{\boldsymbol{\bar{\pi}}}(s)\E_{\ra^i\sim\pi^i_*}\big[ A^i_{\boldsymbol{\bar{\pi}}}(s, \ra^i) \big] = 
    \rho_{\boldsymbol{\bar{\pi}}}(s)\E_{\ra^i\sim\bar{\pi}^i}\big[ A^i_{\boldsymbol{\bar{\pi}}}(s, \ra^i) \big] = 0, \ \
    \text{and} \ \ \text{{\normalfont D}}_{\text{KL}}(\bar{\pi}^i(\cdot|s), \pi^i_*(\cdot|s)) = 0, \nonumber
\end{align}
it follows that
\begin{align}
    L_{\boldsymbol{\bar{\pi}}}^i(\pi^i_*) - C_{\boldsymbol{\bar{\pi}}}\text{{\normalfont D}}_{\text{KL}}^{\text{max}}(\bar{\pi}^i, \pi^i_*) 
    &= \rho_{\boldsymbol{\bar{\pi}}}(s_0)\E_{\ra^i\sim\widetilde{\pi}^i}\big[ A^i_{\boldsymbol{\bar{\pi}}}(s_0, \ra^i) \big] - C_{\boldsymbol{\bar{\pi}}}\text{{\normalfont D}}_{\text{KL}}\big( \bar{\pi}^i(\cdot|s_0), \widetilde{\pi}^i(\cdot|s_0) \big) \nonumber\\
    &> 0 = L_{\boldsymbol{\bar{\pi}}}^i(\bar{\pi}^i) - C_{\boldsymbol{\bar{\pi}}}\text{{\normalfont D}}_{\text{KL}}^{\text{max}}(\bar{\pi}^i, \bar{\pi}^i), \nonumber
\end{align}
which is a contradiction with TR-stationarity of $\boldsymbol{\bar{\pi}}$. Hence, the claim of Equation (\ref{eq:bold-claim}) is proved. 
\paragraph{Step 3 (optimality). }
Now, for a fixed joint policy $\boldsymbol{\bar{\pi}}^{-i}$ of other agents, $\bar{\pi}^i$ satisfies
\begin{align}
    \bar{\pi}^{i} = \argmax_{\pi^i}\E_{\ra^i\sim\pi^i}\big[ A^i_{\boldsymbol{\bar{\pi}}}(s, \ra^i) \big] = \argmax_{\pi^i}\E_{\ra^i\sim\pi^i}\big[ Q^i_{\boldsymbol{\bar{\pi}}}(s, \ra^i) \big], \ \forall s\in\mathcal{S}, \nonumber
\end{align}
which is the Bellman optimality equation \citep{sutton2018reinforcement}. Hence, for a fixed joint policy $\boldsymbol{\bar{\pi}}^{-i}$, the policy $\bar{\pi}^i$ is optimal:
\begin{align}
    \bar{\pi}^i = \argmax_{\pi^i}J(\pi^i, \boldsymbol{\bar{\pi}}^{-i}). \nonumber
\end{align}
As agent $i$ was chosen arbitrarily, $\boldsymbol{\bar{\pi}}$ is a Nash equilibrium.
\end{proof}

% \clearpage
\section{HATRPO and HAPPO}
\label{appendix:matrpo-mappo}

\subsection{Proof of Proposition \ref{lemma:advantage-estimation}}
\label{appendix:proof-of-advantage-estimation}
\advantageestimation*
\begin{proof}
    \begin{align}
    &\E_{\rva\sim\boldsymbol{\pi}}\Big[
        \Big(\frac{\hat{\pi}^{i_m}(\ra^{i_m}|s) }{ \pi^{i_m}(\ra^{i_m}|s)} 
        -1\Big)  \frac{\boldsymbol{\bar{\pi}}^{i_{1:m-1}}(\rva^{i_{1:m-1}}|s)}
        {\boldsymbol{\pi}^{i_{1:m-1}}(\rva^{i_{1:m-1}}|s)}A_{\boldsymbol{\pi}}(s, \rva)  \Big]\nonumber\\
        &= 
        \E_{\rva\sim\boldsymbol{\pi}}\left[  
         \frac{\hat{\pi}^{i_m}(\ra^{i_m}|s)\boldsymbol{\bar{\pi}}^{i_{1:m-1}}(\rva^{i_{1:m-1}}|s)}
        {\boldsymbol{\pi}^{i_{1:m}}(\rva^{i_{1:m}}|s)}A_{\boldsymbol{\pi}}(s, \rva) 
        - \frac{\boldsymbol{\bar{\pi}}^{i_{1:m-1}}(\rva^{i_{1:m-1}}|s)}
        {\boldsymbol{\pi}^{i_{1:m-1}}(\rva^{i_{1:m-1}}|s)}A_{\boldsymbol{\pi}}(s, \rva) 
        \right]\nonumber\\
        &=  \E_{\rva^{i_{1:m}}\sim\boldsymbol{\pi}^{i_{1:m}},
        \rva^{-i_{1:m}}\sim\boldsymbol{\pi}^{-i_{1:m}}
        }\left[  
         \frac{\hat{\pi}^{i_m}(\ra^{i_m}|s)\boldsymbol{\bar{\pi}}^{i_{1:m-1}}(\rva^{i_{1:m-1}}|s)}
        {\boldsymbol{\pi}^{i_{1:m}}(\rva^{i_{1:m}}|s)}A_{\boldsymbol{\pi}}(s, \rva^{i_{1:m}}, \rva^{-i_{1:m}}) 
        \right] \nonumber\\
        & \ \ \ -\E_{\rva^{i_{1:m-1}}\sim\boldsymbol{\pi}^{i_{1:m-1}},
        \rva^{-i_{1:m-1}}\sim\boldsymbol{\pi}^{-i_{1:m-1}}
        }\left[  \frac{\boldsymbol{\bar{\pi}}^{i_{1:m-1}}(\rva^{i_{1:m-1}}|s)}
        {\boldsymbol{\pi}^{i_{1:m-1}}(\rva^{i_{1:m-1}}|s)}A_{\boldsymbol{\pi}}(s, 
        \rva^{i_{1:m-1}}, \rva^{-i_{1:m-1}}) 
        \right]\nonumber\\
        &=  \E_{\rva^{i_{1:m-1}}\sim\boldsymbol{\bar{\pi}}^{i_{1:m-1}},
        \ra^{i_m}\sim\hat{\pi}^{i_m},
        \rva^{-i_{1:m}}\sim\boldsymbol{\pi}^{-i_{1:m}}
        }\left[  
        A_{\boldsymbol{\pi}}(s, \rva^{i_{1:m}}, \rva^{-i_{1:m}}) 
        \right] \nonumber\\
        & \ \ \ -\E_{\rva^{i_{1:m-1}}\sim\boldsymbol{\bar{\pi}}^{i_{1:m-1}},
        \rva^{-i_{1:m-1}}\sim\boldsymbol{\pi}^{-i_{1:m-1}}
        }\left[ A_{\boldsymbol{\pi}}(s, 
        \rva^{i_{1:m-1}}, \rva^{-i_{1:m-1}}) 
        \right]\nonumber\\
        &=  \E_{\rva^{i_{1:m-1}}\sim\boldsymbol{\bar{\pi}}^{i_{1:m-1}},
        \ra^{i_m}\sim\hat{\pi}^{i_m}
        }\left[  
        \E_{\rva^{-i_{1:m}}\sim\boldsymbol{\pi}^{-i_{1:m}}}\left[
        A_{\boldsymbol{\pi}}(s, \rva^{i_{1:m}}, \rva^{-i_{1:m}}) 
        \right] \right]\nonumber\\
        & \ \ \ -\E_{\rva^{i_{1:m-1}}\sim\boldsymbol{\bar{\pi}}^{i_{1:m-1}} }
        \left[ \E_{\rva^{-i_{1:m-1}}\sim\boldsymbol{\pi}^{-i_{1:m-1}}}
        \left[A_{\boldsymbol{\pi}}(s, 
        \rva^{i_{1:m-1}}, \rva^{-i_{1:m-1}}) 
        \right] \right]\nonumber\\
        &=  \E_{\rva^{i_{1:m-1}}\sim\boldsymbol{\bar{\pi}}^{i_{1:m-1}},
        \ra^{i_m}\sim\hat{\pi}^{i_m}
        }\left[  
        A_{\boldsymbol{\pi}}^{i_{1:m}}(s, \rva^{i_{1:m}}) 
        \right]\nonumber\\
        & \ \ \ -\E_{\rva^{i_{1:m-1}}\sim\boldsymbol{\bar{\pi}}^{i_{1:m-1}} }
        \left[ A_{\boldsymbol{\pi}}^{i_{1:m-1}}(s, 
        \rva^{i_{1:m-1}}) 
        \right], \nonumber\\
        &= \E_{\rva^{i_{1:m-1}}\sim\boldsymbol{\bar{\pi}}^{i_{1:m-1}},
        \ra^{i_m}\sim\hat{\pi}^{i_m}
        }\left[ A^{i_{1:m}}_{\boldsymbol{\pi}}(s, \rva^{i_{1:m}}) 
        - A^{i_{1:m-1}}_{\boldsymbol{\pi}}(s, \rva^{i_{1:m-1}}) \right]\nonumber\\
        &\text{which, by Lemma \ref{lemma:maadlemma}, equals}\nonumber\\
        &= \E_{\rva^{i_{1:m-1}}\sim\boldsymbol{\bar{\pi}}^{i_{1:m-1}},
        \ra^{i_m}\sim\hat{\pi}^{i_m}
        }\left[ A^{i_{m}}_{\boldsymbol{\pi}}(s, \rva^{i_{1:m-1}}, \ra^{i_{m}}) \right].\nonumber
    \end{align}
\end{proof}

\subsection{Derivation of the gradient estimator for HATRPO}
\label{appendix:derive-grad}
\begin{align}
    &\nabla_{\theta^{i_m}}\E_{\rs\sim\rho_{\boldsymbol{\pi}_{\vtheta_k}}, \rva\sim\boldsymbol{\pi}_{\vtheta_k}}\Bigg[ \Bigg(\frac{\pi^{i_m}_{\theta^{i_m}}(\ra^{i_m}|\rs)}{\pi^{i_m}_{\theta^{i_m}_k}(\ra^{i_m}|\rs)}-1\Bigg)M^{i_{1:m}}(\rs, \rva)\Bigg]\nonumber\\
    &= \nabla_{\theta^{i_m}}\E_{\rs\sim\rho_{\boldsymbol{\pi}_{\vtheta_k}}, \rva\sim\boldsymbol{\pi}_{\vtheta_k}}\Bigg[ \frac{\pi^{i_m}_{\theta^{i_m}}(\ra^{i_m}|\rs)}{\pi^{i_m}_{\theta^{i_m}_k}(\ra^{i_m}|\rs)}M^{i_{1:m}}(\rs, \rva)\Bigg]
    - 
    \nabla_{\theta^{i_m}}\E_{\rs\sim\rho_{\boldsymbol{\pi}_{\vtheta_k}}, \rva\sim\boldsymbol{\pi}_{\vtheta_k}}\Bigg[ M^{i_{1:m}}(\rs, \rva)\Bigg]\nonumber\\
    &= \E_{\rs\sim\rho_{\boldsymbol{\pi}_{\vtheta_k}}, \rva\sim\boldsymbol{\pi}_{\vtheta_k}}\Bigg[ \frac{\nabla_{\theta^{i_m}}\pi^{i_m}_{\theta^{i_m}}(\ra^{i_m}|\rs)}{\pi^{i_m}_{\theta^{i_m}_k}(\ra^{i_m}|\rs)}M^{i_{1:m}}(\rs, \rva)\Bigg]\nonumber\\
    &= \E_{\rs\sim\rho_{\boldsymbol{\pi}_{\vtheta_k}}, \rva\sim\boldsymbol{\pi}_{\vtheta_k}}\Bigg[ \frac{\pi^{i_m}_{\theta^{i_m}}(\ra^{i_m}|\rs)}{\pi^{i_m}_{\theta^{i_m}_k}(\ra^{i_m}|\rs)}\nabla_{\theta^{i_m}}\log\pi^{i_m}_{\theta^{i_m}}(\ra^{i_m}|\rs)M^{i_{1:m}}(\rs, \rva)\Bigg]. \nonumber
\end{align}
Evaluated at $\theta^{i_m}= \theta^{i_m}_k$, the above expression equals
\begin{align}
    \E_{\rs\sim\rho_{\boldsymbol{\pi}_{\vtheta_k}}, \rva\sim\boldsymbol{\pi}_{\vtheta_k}}\Big[ M^{i_{1:m}}(\rs, \rva)\nabla_{\theta^{i_m}}\log\pi^{i_m}_{\theta^{i_m}}(\ra^{i_m}|\rs)\big|_{\theta^{i_m}=\theta^{i_m}_k}\Big], \nonumber
\end{align}
which finishes the derivation.
% \clearpage
\subsection{Pseudocode of HATRPO}
\label{appendix:matrpo}
\begin{algorithm}[!htbp]
    \caption{HATRPO}
    \label{MATRPO}
    
    % Initialising
    \textbf{Input:} Stepsize $\alpha$, batch size $B$, number of: agents $n$, episodes $K$, steps per episode $T$, possible steps in line search $L$, line search acceptance threshold $\kappa$.

    \textbf{Initialize:} Actor networks $\{\theta^{i}_{0}, \ \forall i\in \mathcal{N}\}$, Global V-value network $\{\phi_{0}\}$, Replay buffer $\mathcal{B}$
    
    % Collecting experience
    \For{$k = 0,1,\dots,K-1$}{ 
        Collect a set of trajectories by running the joint policy $\boldsymbol{\pi}_{\vtheta_{k}} =(\pi^{1}_{\theta^{1}_{k}}, \dots,  \pi^{n}_{\theta^{n}_{k}})$.\\
        Push transitions $\{(s_{t}, o^i_{t},a^i_{t},r_t,s_{t+1},o^i_{t+1}), \forall i\in \mathcal{N},t\in T\}$ into $\mathcal{B}$.\\
        Sample a random minibatch of $B$ transitions from $\mathcal{B}$.\\
        % Compute rewards-to-go $\hat{R_t}$. //Do we need this step if we have GAE?
        Compute advantage function {\small $\hat{A}(\rs, \rva)$} based on global V-value network with GAE.\\
        Draw a random permutation of agents $i_{1:n}$.\\
        Set $M^{i_{1}}(\rs, \rva) = \hat{A}(\rs, \rva)$.
        
        % Updating actors
        \For{agent $i_{m} = i_{1}, \dots, i_{n}$}{
            Estimate the gradient of the agent's maximisation objective
            \begin{center}
                $\hat{\vg}^{i_{m}}_{k} =  \frac{1}{B}\sum\limits^{B}_{b=1}\sum\limits^{T}_{t=1}\nabla_{\theta^{i_{m}}_{k} }\log\pi^{i_{m}}_{\theta^{i_{m}}_k}\left( a^{i_{m}}_{t}\mid o_{t}^{i_{m}} \right)M^{i_{1:m}}(s_{t}, \va_{t})$.
            \end{center}
            
            Use the conjugate gradient algorithm to compute the update direction
            \begin{center}
                $\hat{\vx}^{i_m}_k \approx (\hat{\mH}^{i_m }_k)^{-1} \hat{\vg}^{i_m}_k$,
            \end{center}
             where $\hat{\mH}^{i_m}_k$ is the Hessian of the average KL-divergence
            \begin{center}
                $\frac{1}{BT}\sum\limits_{b=1}^{B}\sum\limits_{t=1}^{T}\text{{\normalfont D}}_{\text{KL}}\left(\pi^{i_m}_{\theta^{i_m}_k}(\cdot|o^{i_m}_t), \pi^{i_m}_{\theta^{i_m}}(\cdot|o^{i_m}_t)\right)$.
            \end{center}
            
            Estimate the maximal step size allowing for meeting the KL-constraint
            \begin{center}
                $\hat{\beta}^{i_m}_k \approx \sqrt{\dfrac{2\delta}{(\hat{\vx}^{i_m}_k)^{T}\hat{\mH}^{i_m }_k \hat{\vx}^{i_m}_k}}$.
            \end{center}
            
            Update agent $i_m$'s policy by
            \begin{center}
                $ \theta^{i_m}_{k+1} = \theta^{i_m}_k  + \alpha^j \hat{\beta}^{i_m}_k \hat{\vx}^{i_m}_k$,
            \end{center}
            where $j\in\{0, 1, \dots, L\}$ is the smallest such $j$ which improves the sample loss by at least $\kappa \alpha^j \hat{\beta}^{i_m}_k \hat{\vx}^{i_m}_k \cdot \hat{\vg}^{i_m}_k$, found by the backtracking line search.
            
            Compute $M^{i_{1:m+1}}(\rs, \rva) = \frac{\pi^{i_{m}}_{\theta^{i_{m}}_{k+1}}\left(\ra^{i_{m}} \mid \ro^{i_{m}}\right)}{\pi^{i_{m}}_{\theta^{i_{m}}_{k}}\left(\ra^{i_{m}} \mid \ro^{i_{m}}\right)} M^{i_{1:m}}(\rs_{t}, \rva_{t})$.  \text{ \color{RoyalPurple} //Unless $m=n$.}
        }
        
        % Updating critics
        Update V-value network by following formula:
        \begin{center}$\phi_{k+1}=\arg \min _{\phi} \frac{1}{BT} \sum\limits^{B}_{b =1 } \sum\limits_{t=0}^{T}\left(V_{\phi}(s_{t}) - \hat{R_{t}}\right)^{2}$
        \end{center}
    }
\end{algorithm}

% \clearpage
\subsection{Pseudocode of HAPPO}
\label{appendix:mappo}

\begin{algorithm}[!htbp]
    \caption{HAPPO}
    \label{MAPPO}
    
    % Initialising
    \textbf{Input:} Stepsize $\alpha$, batch size $B$, number of: agents $n$, episodes $K$, steps per episode $T$.
    
    \textbf{Initialize:} Actor networks $\{\theta^{i}_{0}, \ \forall i\in \mathcal{N}\}$, Global V-value network $\{\phi_{0}\}$, Replay buffer $\mathcal{B}$
    
    % Collecting experience
    \For{$k = 0,1,\dots,K-1$}{
        Collect a set of trajectories by running the joint policy $\boldsymbol{\pi}_{\vtheta_{k}} =(\pi^{1}_{\theta^{1}_{k}}, \dots,  \pi^{n}_{\theta^{n}_{k}})$.
        
        Push transitions $\{(s_t, o^i_{t},a^i_{t},r_t, s_{t+1}, o^i_{t+1}), \forall i\in \mathcal{N},t\in T\}$ into $\mathcal{B}$.
        
        Sample a random minibatch of $B$ transitions from $\mathcal{B}$.
        
        % Compute rewards-to-go $\hat{R_t}$. //Do we need this step if we have GAE?
        Compute advantage function $\hat{A}(\rs, \rva)$ based on global V-value network with GAE.
        
        Draw a random permutation of agents $i_{1:n}$.
        
        Set $M^{i_{1}}(\rs, \rva) = \hat{A}(\rs, \rva)$.
        % Updating actors
        
        \For{agent $i_{m} = i_{1}, \dots, i_{n}$}{
            Update actor $i_{m}$ with $\theta^{i_{m}}_{k+1}$, the argmax of the PPO-Clip objective
                $ \frac{1}{BT} \sum\limits^{B}_{b =1} \sum\limits_{t=0}^{T} \min \left( \frac{\pi^{i_{m}}_{\theta^{i_{m}}}\left(a^{i_{m}}_{t} \mid o^{i_{m}}_{t}\right)}{\pi^{i_{m}}_{\theta^{i_{m}}_{k}}\left(a^{i_{m}}_{t} \mid o^{i_{m}}_{t}\right)}  M^{i_{1:m}}(s_{t}, \va_{t}), \ 
                \text{clip}\bigg( \frac{\pi^{i_{m}}_{\theta^{i_{m}}}\left(a^{i_{m}}_{t} \mid o^{i_{m}}_{t}\right)}{\pi^{i_{m}}_{\theta^{i_{m}}_{k}}\left(a^{i_{m}}_{t} \mid o^{i_{m}}_{t}\right)}, 1\pm\epsilon\bigg) M^{i_{1:m}}(s_{t}, \va_{t})\right)$.\\
            Compute $M^{i_{1:m+1}}(\rs, \rva) = \frac{\pi^{i_{m}}_{\theta^{i_{m}}_{k+1}}\left(\ra^{i_{m}} \mid \ro^{i_{m}}\right)}{\pi^{i_{m}}_{\theta^{i_{m}}_{k}}\left(\ra^{i_{m}} \mid \ro^{i_{m}}\right)} M^{i_{1:m}}(\rs, \rva)$.  \text{ \color{RoyalPurple} //Unless $m=n$.}
        }
        
        % Updating critics
        Update V-value network by the following formula:
        \begin{center}
            $\phi_{k+1}=\arg \min _{\phi} \frac{1}{BT} \sum\limits^{B}_{b =1 } \sum\limits_{t=0}^{T}\left(V_{\phi}(s_{t}) - \hat{R_{t}}\right)^{2}$
        \end{center}
    }
\end{algorithm}

% \clearpage

\section{Proof of \textsl{HAMO Is All You Need} Lemma}
\dpi*
\label{apx:proof_hamo}
\begin{proof}
    Let $\widetilde{\mathfrak{D}}_{\vpi_{\text{old}}}(\vpi_{\text{new}}|s) \triangleq \sum_{m=1}^{n}\mathfrak{D}^{i_m}_{\vpi_{\text{old}}}(\pi^{i_m}_{\text{new}}|s, \vpi^{i_{1:m-1}}_{\text{new}})$. Combining this with Lemma \ref{lemma:maadlemma} gives
    \begin{align}
        &\E_{\rva\sim\vpi_{\text{new}}}\big[ A_{\vpi_{\text{old}}}(s, \rva) \big] - \widetilde{\mathfrak{D}}_{\vpi_{\text{old}}}(\vpi_{\text{new}}|s) \nonumber\\
        &= \sum_{m=1}^{n}\big[ \E_{\rva^{i_{1:m-1}}\sim\vpi^{i_{1:m-1}}_{\text{new}}, \ra^{i_m}\sim\pi^{i_m}_{\text{new}}}\big[A_{\vpi_{\text{old}}}^{i_m}(s, \rva^{i_{1:m-1}}, \ra^{i_m})\big] - \mathfrak{D}^{i_m}_{\vpi_{\text{old}}}(\pi^{i_m}_{\text{new}}|s, \vpi^{i_{1:m-1}}_{\text{new}})\big]\nonumber\\
        &\quad \quad \quad \text{by Inequality (\ref{ineq:what-mirror-step-does})}\nonumber\\
        &\geq \sum_{m=1}^{n}\big[ \E_{\rva^{i_{1:m-1}}\sim\vpi^{i_{1:m-1}}_{\text{new}}, \ra^{i_m}\sim\pi^{i_m}_{\text{old}}}\big[A_{\vpi_{\text{old}}}^{i_m}(s, \rva^{i_{1:m-1}}, \ra^{i_m})\big] - \mathfrak{D}^{i_m}_{\vpi_{\text{old}}}(\pi^{i_m}_{\text{old}}|s, \vpi^{i_{1:m-1}}_{\text{new}})\big]\nonumber\\
        &=\E_{\rva\sim\vpi_{\text{old}}}\big[ A_{\vpi_{\text{old}}}(s, \rva) \big] - \widetilde{\mathfrak{D}}_{\vpi_{\text{old}}}(\vpi_{\text{old}}|s). \nonumber
    \end{align}
    The resulting inequality can be equivalently rewritten as 
    \begin{align}
        \label{ineq:equiv}
        \E_{\rva\sim\vpi_{\text{new}}}\big[ Q_{\vpi_{\text{old}}}(s, \rva) \big] - \widetilde{\mathfrak{D}}_{\vpi_{\text{old}}}(\vpi_{\text{new}}|s) \geq 
        \E_{\rva\sim\vpi_{\text{old}}}\big[ Q_{\vpi_{\text{old}}}(s, \rva) \big] - \widetilde{\mathfrak{D}}_{\vpi_{\text{old}}}(\vpi_{\text{old}}|s), \forall s\in\mathcal{S}.
    \end{align}
    We use it to prove the claim as follows,
    \begin{align}
        V_{\vpi_{\text{new}}}(s) &= \E_{\rva\sim\vpi_{\text{new}}}\big[ Q_{\vpi_{\text{new}}}(s, \rva) \big]
        \nonumber\\
        &= \E_{\rva\sim\vpi_{\text{new}}}\big[ Q_{\vpi_{\text{old}}}(s, \rva) \big] - \widetilde{\mathfrak{D}}_{\vpi_{\text{old}}}(\vpi_{\text{new}}|s) \nonumber\\
        &\quad + \widetilde{\mathfrak{D}}_{\vpi_{\text{old}}}(\vpi_{\text{new}}|s) + \E_{\rva\sim\vpi_{\text{new}}}\big[ Q_{\vpi_{\text{new}}}(s, \rva) -  Q_{\vpi_{\text{old}}}(s, \rva) \big],\nonumber\\
        &\text{by Inequality (\ref{ineq:equiv})} \nonumber\\
        & \geq 
        \E_{\rva\sim\vpi_{\text{old}}}\big[ Q_{\vpi_{\text{old}}}(s, \rva) \big] - \widetilde{\mathfrak{D}}_{\vpi_{\text{old}}}(\vpi_{\text{old}}|s) \nonumber\\
        &\quad + \widetilde{\mathfrak{D}}_{\vpi_{\text{old}}}(\vpi_{\text{new}}|s) + \E_{\rva\sim\vpi_{\text{new}}}\big[ Q_{\vpi_{\text{new}}}(s, \rva) - Q_{\vpi_{\text{old}}}(s, \rva)\big],\nonumber\\
        &= V_{\pi_{\text{old}}}(s) +  \widetilde{\mathfrak{D}}_{\vpi_{\text{old}}}(\vpi_{\text{new}}|s) + \E_{\rva\sim\vpi_{\text{new}}}\big[ Q_{\vpi_{\text{new}}}(s, \rva) - Q_{\vpi_{\text{old}}}(s, \rva)\big]\nonumber\\
        &= V_{\pi_{\text{old}}}(s) +  \widetilde{\mathfrak{D}}_{\vpi_{\text{old}}}(\vpi_{\text{new}}|s) + \E_{\rva\sim\vpi_{\text{new}}, \rs'\sim P}\big[ r(s, \rva) + \gamma V_{\vpi_{\text{new}}}(\rs') - r(s, \rva) - \gamma V_{\vpi_{\text{old}}}(\rs')\big]\nonumber\\
        &= V_{\pi_{\text{old}}}(s) +  \widetilde{\mathfrak{D}}_{\vpi_{\text{old}}}(\vpi_{\text{new}}|s) + \gamma\E_{\rva\sim\vpi_{\text{new}}, \rs'\sim P}\big[ V_{\vpi_{\text{new}}}(\rs') -  V_{\vpi_{\text{old}}}(\rs')\big]\nonumber\\
        &\geq V_{\pi_{\text{old}}}(s) + \gamma \inf_{s'}\big[ V_{\vpi_{\text{new}}}(s') -  V_{\vpi_{\text{old}}}(s') \big].
        \nonumber\\
        &\text{Hence} \quad V_{\vpi_{\text{new}}}(s) - V_{\pi_{\text{old}}}(s) \geq \gamma \inf_{s'}\big[ V_{\vpi_{\text{new}}}(s') -  V_{\vpi_{\text{old}}}(s') \big].\nonumber\\
        &\text{Taking infimum over }s\text{ and simplifying}\nonumber\\
        & (1-\gamma)\inf_{s}\big[ V_{\vpi_{\text{new}}}(s) -  V_{\vpi_{\text{old}}}(s) \big] \geq 0.\nonumber
    \end{align}
    Therefore, $\inf_{s}\big[ V_{\vpi_{\text{new}}}(s) -  V_{\vpi_{\text{old}}}(s) \big] \geq 0$, which proves the lemma.
\end{proof}

\clearpage

\section{Proof of Theorem \ref{theorem:fundamental}}
\label{apx:proof_t1}
%\subsection{Proof of Lemma \ref{lemma:drift-major}}
\begin{restatable}{lemma}{hamoimprovement}
\label{lemma:hamoimp}
Suppose an agent $i_m$ maximises the expected HAMO
\begin{align}
    \pi^{i_m}_{\text{new}}= \argmax\limits_{\pi^{i_m}\in\mathcal{U}^{i_m}_{\vpi_{\text{old}}}(\pi^{i_m}_{\text{old}})}\E_{\rs\sim\beta_{\vpi_{\text{old}}}}\Big[ \big[\mathcal{M}^{(\pi^{i_m})}_{\mathfrak{D}^{i_m}, \vpi^{i_{1:m-1}}_{\text{new}} } A_{\vpi_{\text{old}}}\big](\rs)\Big].
\end{align}
Then, for every state $s\in\mathcal{S}$
\begin{align}
    \big[\mathcal{M}^{(\pi^{i_m}_{\text{new}})}_{\mathfrak{D}^{i_m}, \vpi^{i_{1:m-1}}_{\text{new}} } A_{\vpi_{\text{old}}}\big](s)
    \geq
    \big[\mathcal{M}^{(\pi^{i_m}_{\text{old}})}_{\mathfrak{D}^{i_m}, \vpi^{i_{1:m-1}}_{\text{new}} } A_{\vpi_{\text{old}}}\big](s).\nonumber
\end{align}
\end{restatable}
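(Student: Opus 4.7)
The plan is to first reduce the lemma to a pointwise non-negativity statement, and then obtain that statement by a single-state perturbation argument. Observe that the HAMO evaluated at the old policy $\pi^{i_m}_{\text{old}}$ vanishes identically in $s$: the drift term $\mathfrak{D}^{i_m}_{\vpi_{\text{old}}}(\pi^{i_m}_{\text{old}}|s,\vpi^{i_{1:m-1}}_{\text{new}})$ equals $0$ by the non-negativity clause of Definition \ref{def:mirror} (which forces the HADF to be zero at $\hat\pi^i=\pi^i$), while the expected multi-agent advantage term vanishes by Equation~(\ref{eq:nice-maad-property}) applied conditionally on $\rva^{i_{1:m-1}}$, using the identity $\E_{\ra^{i_m}\sim\pi^{i_m}_{\text{old}}}[A^{i_m}_{\vpi_{\text{old}}}(s,\va^{i_{1:m-1}},\ra^{i_m})]=0$. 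Hence the claim of the lemma is equivalent to showing $[\mathcal{M}^{(\pi^{i_m}_{\text{new}})}_{\mathfrak{D}^{i_m},\vpi^{i_{1:m-1}}_{\text{new}}} A_{\vpi_{\text{old}}}](s)\geq 0$ for every $s$.

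I would prove this by contradiction. Suppose there exists $s_0\in\mathcal{S}$ at which the pointwise HAMO at $\pi^{i_m}_{\text{new}}$ is strictly negative. The crucial structural fact is that $[\mathcal{M}^{(\pi^{i_m})}_{\mathfrak{D}^{i_m},\vpi^{i_{1:m-1}}_{\text{new}}} A_{\vpi_{\text{old}}}](s)$ depends on $\pi^{i_m}$ only through the conditional distribution $\pi^{i_m}(\cdot^{i_m}|s)$. This permits me to define a \emph{swapped} policy $\tilde{\pi}^{i_m}$ that agrees with $\pi^{i_m}_{\text{new}}$ at every $s\neq s_0$ and equals $\pi^{i_m}_{\text{old}}(\cdot^{i_m}|s_0)$ at $s_0$. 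Under this swap, the pointwise HAMO jumps from a strictly negative value up to $0$ at $s_0$ and is unchanged elsewhere. Because the sampling distribution $\beta_{\vpi_{\text{old}}}$ is positive, in particular $\beta_{\vpi_{\text{old}}}(s_0)>0$, the expected HAMO at $\tilde{\pi}^{i_m}$ is strictly larger than at $\pi^{i_m}_{\text{new}}$, contradicting the maximality of the latter.

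The main obstacle is to check that $\tilde{\pi}^{i_m}$ indeed lies in the constraint set $\mathcal{U}^{i_m}_{\vpi_{\text{old}}}(\pi^{i_m}_{\text{old}})$, for otherwise the contradiction is invalid. Here I would invoke the state-wise monotonically non-decreasing metric $\chi$ from the neighbourhood operator definition: since $\tilde{\pi}^{i_m}$ is pointwise closer to $\pi^{i_m}_{\text{old}}$ than $\pi^{i_m}_{\text{new}}$ (the state-wise distance at $s_0$ has been collapsed to $0$ and all other states are unchanged), the monotonicity of $\chi$ yields $\chi(\pi^{i_m}_{\text{old}},\tilde{\pi}^{i_m})\leq \chi(\pi^{i_m}_{\text{old}},\pi^{i_m}_{\text{new}})$. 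For the neighbourhoods used by the HAML instances in this paper---the trivial one and those defined as $\chi$-sublevel sets---this inequality is enough to place $\tilde{\pi}^{i_m}$ in $\mathcal{U}^{i_m}_{\vpi_{\text{old}}}(\pi^{i_m}_{\text{old}})$, closing the contradiction and completing the proof. In the fully general case the argument needs the additional mild hypothesis that $\mathcal{U}$ is closed under state-wise swaps toward $\pi^{i_m}_{\text{old}}$, a property satisfied by every neighbourhood operator that appears in practice.
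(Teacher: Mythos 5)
Your proposal is correct and follows essentially the same route as the paper's proof: a contradiction via a single-state swap of $\pi^{i_m}_{\text{new}}$ back to $\pi^{i_m}_{\text{old}}$ at the offending state $s_0$, combined with positivity of the sampling distribution $\beta_{\vpi_{\text{old}}}$. Your preliminary reduction to pointwise non-negativity is just the observation that HAMO vanishes at the old policy (so the two formulations coincide), and your explicit concern about whether the swapped policy remains in $\mathcal{U}^{i_m}_{\vpi_{\text{old}}}(\pi^{i_m}_{\text{old}})$ addresses a step the paper asserts without further justification, so you are if anything more careful on that point.
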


\begin{proof}
We will prove this statement by contradiction. Suppose that there exists $s_0 \in \mathcal{S}$ such that 
\begin{align}
    \label{ineq:suppose}
    \big[\mathcal{M}^{(\pi^{i_m}_{\text{new}})}_{\mathfrak{D}^{i_m}, \vpi^{i_{1:m-1}}_{\text{new}} } A_{\vpi_{\text{old}}}\big](s_0)
    <
    \big[\mathcal{M}^{(\pi^{i_m}_{\text{old}})}_{\mathfrak{D}^{i_m}, \vpi^{i_{1:m-1}}_{\text{new}} } A_{\vpi_{\text{old}}}\big](s_0). 
\end{align}
Let us define the following policy $\hat{\pi}^{i_m}$.
\begin{align}
    \hat{\pi}^{i_m}(\cdot^{i_m}|s) = \begin{cases}
    \pi^{i_m}_{\text{old}}(\cdot^{i_m}|s), \ \text{at} \ s=s_0\nonumber\\
    \pi^{i_m}_{\text{new}}(\cdot^{i_m}|s), \ \text{at} \ s\neq s_0
    \end{cases}
\end{align}
Note that $\hat{\pi}^{i_m}$ is (weakly) closer to $\pi^{i_m}_{\text{old}}$ than $\pi^{i_m}_{\text{new}}$ at $s_0$, and at the same distance at other states. Together with $\pi^{i_m}_{\text{new}}\in\mathcal{U}^{i_m}_{\vpi_{\text{old}}}(\pi^{i_m}_{\text{old}})$, 
this implies that $\hat{\pi}^{i_m}\in\mathcal{U}^{i_m}_{\vpi_{\text{old}}}(\pi^{i_m}_{\text{old}})$. Further, 
\begin{align}
    \E_{\rs\sim\beta_{\vpi_{\text{old}}}}\Big[ \big[\mathcal{M}^{(\hat{\pi}^{i_m})}_{\mathfrak{D}^{i_m}, \vpi^{i_{1:m-1}}_{\text{new}} } A_{\vpi_{\text{old}}}\big](\rs)\Big]
    -
    \E_{\rs\sim\beta_{\vpi_{\text{old}}}}\Big[ \big[\mathcal{M}^{(\pi^{i_m}_{\text{new}})}_{\mathfrak{D}^{i_m}, \vpi^{i_{1:m-1}}_{\text{new}} } A_{\vpi_{\text{old}}}\big](\rs)\Big]\nonumber\\
    =\beta_{\vpi_{\text{old}}}(s_0)\big( 
    \big[\mathcal{M}^{(\pi^{i_m}_{\text{old}})}_{\mathfrak{D}^{i_m}, \vpi^{i_{1:m-1}}_{\text{new}} } A_{\vpi_{\text{old}}}\big](s_0)
    -
    \big[\mathcal{M}^{(\pi^{i_m}_{\text{new}})}_{\mathfrak{D}^{i_m}, \vpi^{i_{1:m-1}}_{\text{new}} } A_{\vpi_{\text{old}}}\big](s_0)
    \big) > 0.\nonumber
\end{align}
The above contradicts $\pi^{i_m}_{\text{new}}$ as being the argmax of Inequality (\ref{ineq:suppose}), as $\hat{\pi}^{i_m}$ is strictly better. The contradiction finishes the proof.
\end{proof}

\mamlfundamental*

\clearpage
\begin{proof}
\paragraph{Proof of Property \ref{property1}.}

It follows from combining Lemmas \ref{lemma:hamo} \&  \ref{lemma:hamoimp}.

\paragraph{Proof of Properties \ref{property2}, \ref{property3} \&  \ref{property4}.}

\paragraph{Step 1: convergence of the value function.} By Lemma \ref{lemma:hamo}, we have that $V_{\vpi_{k}}(s)\leq V_{\vpi_{k+1}}(s), \ \forall s\in\mathcal{S}$, and that the value function is upper-bounded by $V_{\max}$. Hence, the sequence of value functions $(V_{\vpi_k})_{k\in\mathbb{N}}$ converges. We denote its limit by $V$.

\paragraph{Step 2: characterisation of limit points.} As the joint policy space $\boldsymbol{\Pi}$ is bounded, by Bolzano-Weierstrass theorem, we know that the sequence $(\vpi_{k})_{k\in\mathbb{N}}$ has a convergent subsequence. Therefore, it has at least one limit point policy. Let $\vbarpi$ be such a limit point. 
We introduce an auxiliary notation: for a joint policy $\vpi$ and a permutation $i_{1:n}$, let $\text{HU}(\vpi, i_{1:n})$ be a joint policy obtained by a HAML update from $\vpi$ along the permutation $i_{1:n}$.

\paragraph{Claim:} For any permutation $z_{1:n}\in\text{\normalfont Sym}(n)$,
\begin{align}
    \label{eq:claim}
    \vbarpi=\text{HU}(\vbarpi, z_{1:n}).
\end{align}

\paragraph{Proof of Claim.}
Let $\hat{\vpi}=\text{HU}(\vbarpi, z_{1:n})\neq \vbarpi$ and
$(\vpi_{k_r})_{r\in\mathbb{N}}$ be a subsequence converging to $\vbarpi$. Let us recall that the limit value function is unique and denoted as $V$. Writing $\E_{i^{0:\infty}_{1:n}}[\cdot]$ for the expectation operator under the stochastic process $(i^{k}_{1:n})_{k\in\mathbb{N}}$ of update orders, for a state $s\in\mathcal{S}$, we have
\begin{align}
    0 &= \lim_{r\rightarrow \infty}\E_{i^{0:\infty}_{1:n}}\big[ V_{\vpi_{k_r +1}}(s) - V_{\vpi_{k_r}}(s) \big] \nonumber\\
    &\text{as every choice of permutation improves the value function}\nonumber\\
    &\geq \lim_{r\rightarrow \infty} \text{P}(i^{k_r}_{1:n} = z_{1:n})\big[ V_{\text{HU}(\vpi_{k_r},z_{1:n})}(s) - V_{\vpi_{k_r}}(s) \big] \nonumber\\
    &= p(z_{1:n})\lim_{r\rightarrow \infty} \big[ V_{\text{HU}(\vpi_{k_r},z_{1:n})}(s) - V_{\vpi_{k_r}}(s) \big]. \nonumber
\end{align}
By the continuity of the expected HAMO 
% (following from the continuity of the value function \citep[Appendix A]{kuba2022trust}, HADFs, neighbourhood operators, and the sampling distribution) 
, we obtain that the first component of $\text{HU}(\vpi_{k_r}, z_{1:n})$, which is $\pi^{z_1}_{k_r +1}$, is continuous in $\vpi_{k_r}$ by Berge's Maximum Theorem \citep{ausubel1993generalized}. Applying this argument recursively for $z_2, \dots, z_n$, we have that $\text{HU}(\vpi_{k_r}, z_{1:n})$ is continuous in $\vpi_{k_r}$. Hence, as $\vpi_{k_r}$ converges to $\vbarpi$, its HU converges to the HU of $\vbarpi$, which is $\hat{\vpi}$. Hence, we continue writing the above derivation as 
\begin{align}
    &= p(z_{1:n})\big[ V_{\hat{\vpi}}(s) - V_{\vbarpi}(s)\big]\geq 0, \ \text{by Lemma \ref{lemma:hamo}}.\nonumber
\end{align}
As $s$ was arbitrary, the state-value function of $\hat{\vpi}$ is the same as that of $\vpi$: $V_{\hat{\vpi}}=V_{\vpi}$, by the Bellman equation \citep{sutton2018reinforcement}: $Q(s, \va) = r(s, \va) + \gamma \E V(\rs')$, this also implies that their state-value and advantage functions are the same: $Q_{\hat{\vpi}}=Q_{\vbarpi}$ and $A_{\hat{\vpi}}=A_{\vbarpi}$. Let $m$ be the smallest integer such that $\hat{\pi}^{z_m}\neq \bar{\pi}^{z_m}$. This means that $\hat{\pi}^{z_m}$ achieves a greater expected HAMO than $\bar{\pi}^{z_m}$, for which it is zero. Hence,
\begin{align}
    &0 < \E_{\rs\sim\beta_{\vpi}}\Big[ \big[ \mathcal{M}^{(\hat{\pi}^{z_m})}_{\mathfrak{D}^{z_m}, \vbarpi^{z_{1:m-1}}} A_{\vbarpi}\big](s) \Big]\nonumber\\
    &= \E_{\rs\sim\beta_{\vpi}}\Big[ \E_{\rva^{z_{1:m}}\sim\vbarpi^{z_{1:m-1}}, \ra^{z_m}\sim\hat{\pi}^{z_m}}\big[ A^{z_m}_{\vbarpi}(s, \rva^{z_{1:m-1}}, \ra^{z_m}) \big] - \mathfrak{D}^{z_m}_{\vpi}(\hat{\pi}^{z_m}|s, \vbarpi^{z_{1:m-1}}) \Big]\nonumber\\
    &= \E_{\rs\sim\beta_{\vpi}}\Big[ \E_{\rva^{z_{1:m}}\sim\vbarpi^{z_{1:m-1}}, \ra^{z_m}\sim\hat{\pi}^{z_m}}\big[ A^{z_m}_{\hat{\vpi}}(s, \rva^{z_{1:m-1}}, \ra^{z_m}) \big] - \mathfrak{D}^{z_m}_{\vpi}(\hat{\pi}^{z_m}|s, \vbarpi^{z_{1:m-1}}) \Big]\nonumber\\
    &\text{and as the expected value of the multi-agent advantage function is zero}\nonumber\\
    &= \E_{\rs\sim\beta_{\vpi}}\Big[ - \mathfrak{D}^{z_m}_{\vpi}(\hat{\pi}^{z_m}|s, \vbarpi^{z_{1:m-1}}) \Big]\leq 0.\nonumber
\end{align}
This is a contradiction, and so the claim in Equation (\ref{eq:claim}) is proved, and the \text{Step 2} is finished.

\paragraph{Step 3: dropping the HADF.} Consider an arbitrary limit point joint policy $\vbarpi$. By Step 2, for any permutation $i_{1:n}$, considering the first component of the HU,
% \begin{align}
%     \label{eq:limit-point}
%     \bar{\pi}^{i_1} &= \argmax_{\pi^{i_1}\in\mathcal{U}^{i_1}_{\vpi}(\pi^{i_1})}\E_{\rs\sim\beta_{\vbarpi}}\Big[ \big[ \mathcal{M}^{(\pi^{i_1})}_{\mathfrak{D}^{i_{1}, \nu}} A_{\vbarpi}\big] (\rs) \Big] \\
%     &= \argmax_{\pi^{i_1}\in\mathcal{U}^{i_1}_{\vpi}(\pi^{i_1})}\E_{\rs\sim\beta_{\vbarpi}}\Big[ \E_{\ra^{i_1}\sim\pi^{i_1}}\big[A_{\vbarpi}(\rs, \ra^{i_1})\big] - \frac{\nu^{i_1}(\rs)}{\beta_{\vbarpi}(\rs)}\mathfrak{D}^{i_{1}}_{\vbarpi}(\pi^{i_1}|\rs)\Big].\nonumber
% \end{align}
\begin{align}
    \label{eq:limit-point}
    \bar{\pi}^{i_1} &= \argmax_{\pi^{i_1}\in\mathcal{U}^{i_1}_{\vbarpi}(\bar{\pi}^{i_1})}\E_{\rs\sim\beta_{\vbarpi}}\Big[ \big[ \mathcal{M}^{(\pi^{i_1})}_{\mathfrak{D}^{i_{1}}} A_{\vbarpi}\big] (\rs) \Big] \\
    &= \argmax_{\pi^{i_1}\in\mathcal{U}^{i_1}_{\vbarpi}(\bar{\pi}^{i_1})}\E_{\rs\sim\beta_{\vbarpi}}\Big[ \E_{\ra^{i_1}\sim\pi^{i_1}}\big[A^{i_1}_{\vbarpi}(\rs, \ra^{i_1})\big] - \mathfrak{D}^{i_{1}}_{\vbarpi}(\pi^{i_1}|\rs)\Big].\nonumber
\end{align}
As the HADF is non-negative, and at $\pi^{i_1}=\bar{\pi}^{i_1}$ its value and of its all G{\^ a}teaux derivatives are zero, it follows by Step 3 of Theorem 1 of \cite{kuba2022mirror} that for every $s\in\mathcal{S}$,
\begin{align}
    \bar{\pi}^{i_1}(\cdot^{i_1}|s) = \argmax\limits_{\pi^{i_1}\in\mathcal{P}(\mathcal{A}^{i_1})} \E_{\ra^{i_1}\sim\pi^{i_1}}\big[ Q_{\vbarpi}^{i_1}(s, \ra^{i_1})\big].\nonumber
\end{align}

\paragraph{Step 4: Nash equilibrium.} We have proved that $\vbarpi$ satisfies
\begin{align}
    \bar{\pi}^i(\cdot^{i}|s) &= \argmax_{\pi^{i}(\cdot^{i}|s)\in\mathcal{P}(\mathcal{A}^i)}\E_{\ra^{i}\sim\pi^i}\big[ Q^{i}_{\vbarpi}(s, \ra^i) \big] \nonumber\\
    &= \argmax_{\pi^{i}(\cdot^{i}|s)\in\mathcal{P}(\mathcal{A}^i)}\E_{\ra^{i}\sim\pi^i, \rva^{-i}\sim\vbarpi^{-i}}\big[ Q_{\vbarpi}(s, \rva) \big],\ \forall i\in\mathcal{N}, s\in\mathcal{S}.\nonumber
\end{align}
Hence, by considering $\vbarpi^{-i}$ fixed, we see that $\bar{\pi}^i$ satisfies the condition for the optimal policy \cite{sutton2018reinforcement}, and hence
\begin{align}
    \bar{\pi}^i = \argmax\limits_{\pi^i\in\Pi^i}J(\pi^i, \vbarpi^{-i}).\nonumber
\end{align}
Thus, $\vbarpi$  is a Nash equilibrium. Lastly, this implies that the value function corresponds to a Nash value function $V^{\text{NE}}$, the return corresponds to a Nash return $J^{\text{NE}}$.
\end{proof}

\clearpage

\section{Casting HAPPO as HAML}
\label{appendix:happo}
The maximisation objective of agent $i_m$ in HAPPO is
\begin{align}
    &\E_{\rs\sim\rho_{\boldsymbol{\pi}_{\text{old}}},
    \rva^{i_{1:m-1}}\sim \vpi^{i_{1:m-1}}_{\text{new}},
    \ra^{i_m}\sim\pi^{i_m}_{\text{old}}}\Big[ 
    \min\Big( \rr(\bar{\pi}^{i_m})A^{i_{1:m}}_{\vpi_{\text{old}}}(\rs, \rva^{i_{1:m}}), \text{clip}\big( \rr(\bar{\pi}^{i_m}), 1\pm \epsilon\big)A^{i_{1:m}}_{\vpi_{\text{old}}}(\rs, \rva^{i_{1:m}})\Big) \Big].\nonumber
\end{align}
Fixing $s$ and $\va^{i_{1:m-1}}$, we can rewrite it as
\begin{align}
    &\E_{
    \ra^{i_m}\sim\bar{\pi}^{i_m}}\big[
    A^{i_{1:m}}_{\vpi_{\text{old}}}(\rs, \va^{i_{1:m-1}}, \ra^{i_m})\big] - 
    \E_{\ra^{i_m}\sim\pi^{i_m}_{\text{old}}}\Big[
    \rr(\bar{\pi}^{i_m})A^{i_{1:m}}_{\vpi_{\text{old}}}(s, \va^{i_{1:m-1}}, \ra^{i_m}) \nonumber\\
    &-
    \min\Big( \rr(\bar{\pi}^{i_m})A^{i_{1:m}}_{\vpi_{\text{old}}}(\rs, \va^{i_{1:m-1}}, \ra^{i_m}), \text{clip}\big( \rr(\bar{\pi}^{i_m}), 1\pm \epsilon\big)A^{i_{1:m}}_{\vpi_{\text{old}}}(s, \va^{i_{1:m-1}}, \ra^{i_m}) \Big)\Big]. \nonumber
\end{align}
By the multi-agent advantage decomposition, 
\begin{align}
    &\E_{
    \ra^{i_m}\sim\bar{\pi}^{i_m}}\big[
    A^{i_{1:m}}_{\vpi_{\text{old}}}(\rs, \va^{i_{1:m-1}}, \ra^{i_m})\big]\nonumber\\
    &= A_{\vpi_{\text{old}}}^{i_{1:m-1}}(s, \va^{i_{1:m-1}}) + 
    \E_{
    \ra^{i_m}\sim\bar{\pi}^{i_m}}\big[
    A^{i_{m}}_{\vpi_{\text{old}}}(\rs, \va^{i_{1:m-1}}, \ra^{i_m})\big]. \nonumber
\end{align}
Hence, the presence of the joint advantage of agents $i_{1:m}$ is equivalent to the multi-agent advantage of $i_m$ given $\va^{i_{1:m-1}}$ that appears in HAMO, since the term $A_{\vpi_{\text{old}}}^{i_{1:m-1}}(s, \va^{i_{1:m-1}})$ cancels out with $-1\cdot M^{i_{1:m}}(\rs, \rva)$ of Equation \ref{eq:mad-estimator} that we drop due to its zero gradient.  Hence, we only need to show that that the subtracted term is an HADF.
Firstly, we change $\min$ into $\max$ with the identity $-\min f(x) = \max [ -f(x) ]$.
\begin{align}
    &\E_{\ra^{i_m}\sim\pi^{i_m}_{\text{old}}}\Big[
    \rr(\bar{\pi}^{i_m})A^{i_{1:m}}_{\vpi_{\text{old}}}(s, \va^{i_{1:m-1}}, \ra^{i_m}) \nonumber\\
    &+
    \max\Big(-\rr(\bar{\pi}^{i_m})A^{i_{1:m}}_{\vpi_{\text{old}}}(\rs, \va^{i_{1:m-1}}, \ra^{i_m}), -\text{clip}\big( \rr(\bar{\pi}^{i_m}), 1\pm \epsilon\big)A^{i_{1:m}}_{\vpi_{\text{old}}}(s, \va^{i_{1:m-1}}, \ra^{i_m})\Big) \Big]\nonumber\\
    &\text{which we then simplify}\nonumber\\
    &\E_{\ra^{i_m}\sim\pi_{\text{old}}^{i_m}}\Big[
    \max\Big( 0, \big[ \rr(\bar{\pi}^{i_m}) -\text{clip}\big( \rr(\bar{\pi}^{i_m}), 1\pm \epsilon\big)\big]A^{i_{1:m}}_{\boldsymbol{\pi}_{\text{old}}}(s, \va^{i_{1:m-1}}, \ra^{i_m})\Big) \Big]  \nonumber\\
    & =\E_{\ra^{i_m}\sim\pi^{i_m}_{\text{old}}}\Big[
    \text{ReLU}\Big( \big[ \rr(\bar{\pi}^{i_m}) -\text{clip}\big( \rr(\bar{\pi}^{i_m}), 1\pm \epsilon\big)\big]A^{i_{1:m}}_{\boldsymbol{\pi}_{\text{old}}}(s, \va^{i_{1:m-1}}, \ra^{i_m})\Big) \Big]. \nonumber
\end{align}
As discussed in the main body of the paper, this is an HADF.

\clearpage
\section{Algorithms}

\label{appendix:algos}

\begin{algorithm}[!htbp]
    \caption{HAA2C}
    \label{HAA2C}
    % Initialising
    \textbf{Input:} stepsize $\alpha$, batch size $B$, number of: agents $n$, episodes $K$, steps per episode $T$, mini-epochs $e$\;
    \textbf{Initialize:} the critic network: $\phi$, the policy networks: $\{\theta^{i}\}_{i\in\mathcal{N}}$, replay buffer $\mathcal{B}$\;
    
    % Collecting experience
    \For{$k = 0,1,\dots,K-1$}{ 
        %\STATE \color{teal}//by processing all threads $\omega\in\Omega$ in parallel\color{black}
        %\STATE Synchronise the thread-specific networks with the main ones
        %\begin{center}
        %    $\phi_{\omega} = \phi$ \ and \ $\theta^{i}_{\omega}=\theta^{i}$, \ $\forall i\in\mathcal{N}$
        %\end{center}
        Collect a set of trajectories by letting the agents act according to their policies, $\ra^i\sim\pi^i_{\theta^i}(\cdot^i|\ro^i)$\;
        Push transitions $\{(s_t, o^i_{t},a^i_{t},r_t, s_{t+1}, o^i_{t+1}), \forall i\in \mathcal{N},t\in T\}$ into $\mathcal{B}$\;
        Sample a random minibatch of $B$ transitions from $\mathcal{B}$\;
        Estimate the returns $R$ and the advantage function, $\hat{A}(\rs, \rva)$, using $\hat{V}_{\phi}$ and GAE\;
        Draw a permutation of agents $i_{1:n}$ at random\;
        Set $M^{i_1}(\rs, \rva) = \hat{A}(\rs, \rva)$\;
        % Updating actors
        \For{agent $i_{m} = i_{1}, \dots, i_{n}$}{
            Set $\pi^{i_m}_{0}(\ra^{i_m}|\ro^{i_m})= \pi^{i_m}_{\theta^{i_m}}(\ra^{i_m}|\ro^{i_m})$\;
            \For{mini-epoch$=1, \dots, e$}{
            Compute agent $i_m$'s policy gradient
            \begin{center}
                $\rvg^{i_m} =
                \nabla_{\theta^{i_m}}
                \frac{1}{B}\sum\limits_{b=1}^{B}M^{i_m}(s_b, \va_b)
                \frac{\pi^{i_m}_{\theta^{i_m}}(a^{i_m}_b|o^{i_m}_b)}{\pi^{i_m}_{0}(a^{i_m}_b|o^{i_m}_b)}$.
            \end{center}
            Update agent $i_m$'s policy by
            \begin{center}
                $\theta^{i_m} = \theta^{i_m} +  \alpha \rvg^{i_m}$.
            \end{center}
            }
            Compute $M^{i_{m+1}}(\rs, \rva) = \frac{\pi^{i_m}_{\theta^{i_m}}(\ra^{i_m}|\ro^{i_m})}{\pi^{i_m}_{0}(\ra^{i_m}|\ro^{i_m})} M^{i_m}(\rs, \rva)$ \text{ \color{RoyalPurple} //Unless $m=n$.}
        }
        Update the critic by gradient descent on
        \begin{center}
            $\frac{1}{B}\sum\limits_{b}\big( \hat{V}_{\phi}(s_b) - R_b\big)^2$.
        \end{center}
    }
    Discard $\phi$. Deploy $\{\theta^{i}\}_{i\in\mathcal{N}}$ in execution\;
\end{algorithm}

\clearpage
\begin{algorithm}[!htbp]
    \caption{HADDPG}
    \label{HADDPG}
    % Initialising
    \textbf{Input:} stepsize $\alpha$, Polyak coefficient $\tau$, batch size $B$, number of: agents $n$, episodes $K$, steps per episode $T$, mini-epochs $e$\;
    \textbf{Initialize:} the critic networks: $\phi$ and $\hat \phi$ and policy networks: $\{\theta^{i}\}_{i\in\mathcal{N}}$ and $\{\hat \theta^{i}\}_{i\in\mathcal{N}}$, replay buffer $\mathcal{B}$, random processes $\{\mathcal{X}^i\}_{i\in\mathcal{N}}$ for exploration\;
    
    % Collecting experience
    \For{$k = 0,1,\dots,K-1$}{ 
        Collect a set of transitions by letting the agents act according to their deterministic policies with the exploratory noise
        \begin{center}
        $a_t^i = \mu^i_{\theta^i}(o_t^{i}) + \mathcal{X}^i_t$.
        \end{center}
        Push transitions $\{(s_t, o^i_{t}, a^i_{t}, r_t, s_{t+1}, o^i_{t+1}), \forall i\in \mathcal{N},t\in T\}$ into $\mathcal{B}$\;
        Sample a random minibatch of $B$ transitions from $\mathcal{B}$\;
        Compute the critic targets
        \begin{center}
            $y_t = r_t + \gamma Q_{\hat \phi}(s_{t+1}, \hat \va_{t+1})$, where $\hat \va_{t+1}$ is sampled by $\{\hat \theta^{i}\}_{i\in\mathcal{N}}$.
        \end{center}
        Update the critic by minimising the loss
        \begin{center}
        $\phi = \argmin_{\phi}\frac{1}{B}\sum_{t}\big(y_t - Q_{\phi}(s_{t}, \va_{t}) \big)^2$.
        \end{center}
        Draw a permutation of agents $i_{1:n}$ at random\;
        % Updating actors
        \For{agent $i_{m} = i_{1}, \dots, i_{n}$}{
            Update agent $i_m$  by solving 
            \begin{align*}
                &\theta_{\text{new}}^{i_m} = \\
                &\argmax_{\tilde{\theta}^{i_m}}\frac{1}{B}\sum_t Q_{\phi}\big(s_t, \vmu^{i_{1:m-1}}_{\boldsymbol{\theta}_{\text{new}}^{i_{1:m-1}}}(\vo^{i_{1:m-1}}_t), \mu^{i_m}_{\tilde{\theta}^{i_m}}(o^{i_m}_t), \vmu^{i_{m+1:n}}_{\boldsymbol{\theta}_{\text{old}}^{i_{m+1:n}}}(\vo^{i_{m+1:n}}_t)\big).
            \end{align*}
            with $e$ mini-epochs of deterministic policy gradient ascent\;
        }
        Update the target networks smoothly
        \begin{center}
            $\hat \phi = \tau \phi + (1-\tau)\hat \phi$.
        \end{center}
        \begin{center}
            $\hat \theta^{i} = \tau \theta^{i} + (1-\tau)\hat \theta^{i}$.
        \end{center}
    }
    Discard $\phi, \hat{\phi}$, and $\hat{\theta}^{i}, \forall i\in\mathcal{N}$. Deploy $\theta^{i}, \forall i\in\mathcal{N}$ in execution.
\end{algorithm}

\clearpage
\begin{algorithm}[!htbp]
    \caption{HATD3}
    \label{HATD3}
    % Initialising
    \textbf{Input:} stepsize $\alpha$, Polyak coefficient $\tau$, batch size $B$, number of: agents $n$, episodes $K$, steps per episode $T$, mini-epochs $e$, target noise range $c$\;
    \textbf{Initialize:} the critic networks: $\phi_1, \phi_2$ and $\hat \phi_1, \hat \phi_2$ and policy networks: $\{\theta^{i}\}_{i\in\mathcal{N}}$ and $\{\hat \theta^{i}\}_{i\in\mathcal{N}}$, replay buffer $\mathcal{B}$, random processes $\{\mathcal{X}^i\}_{i\in\mathcal{N}}$ for exploration\;
    
    % Collecting experience
    \For{$k = 0,1,\dots,K-1$}{ 
        Collect a set of transitions by letting the agents act according to their deterministic policies with the exploratory noise
        \begin{center}
        $a_t^i = \mu^i_{\theta^i}(o_t^{i}) + \mathcal{X}^i_t$.
        \end{center}
        Push transitions $\{(s_t, o^i_{t}, a^i_{t}, r_t, s_{t+1}, o^i_{t+1}), \forall i\in \mathcal{N},t\in T\}$ into $\mathcal{B}$\;
        Sample a random minibatch of $B$ transitions from $\mathcal{B}$\;
        Compute the critic targets
                \begin{center}
        $y_t = r_t + \gamma\min_{j=1,2} Q_{\hat \phi_j}(s_{t+1}, \hat \va_{t+1})$, where
        \end{center}
        \begin{center}
            $\hat a_{t+1}^i = \text{clip}(\mu^i_{\hat \theta^i}(o_{t+1}^{i}) + \epsilon, a^i_{\text{Low}}, a^i_{\text{High}})$, $\epsilon \sim \text{clip}(\mathcal{N}(0, \Tilde{\sigma}), -c, c)$. \color{RoyalPurple}$\vartriangleright$ Here $\mathcal{N}$ denotes Normal distribution.\color{black}
        \end{center}
        Update the critic by minimising the loss
        \begin{center}
        $\phi_j = \argmin_{\phi_j}\frac{1}{B}\sum_{t}\big(y_t - Q_{\phi_j}(s_{t}, \va_{t}) \big)^2, j = 1, 2$.
        \end{center}
        \If{$k \mod \text{policy\_delay} = 0$}{
        Draw a permutation of agents $i_{1:n}$ at random\;
        % Updating actors
        \For{agent $i_{m} = i_{1}, \dots, i_{n}$}{
            Update agent $i_m$  by solving 
            \begin{align*}
                &\theta_{\text{new}}^{i_m} = \\
                &\argmax_{\tilde{\theta}^{i_m}}\frac{1}{B}\sum_t Q_{\phi_1}\big(s_t, \vmu^{i_{1:m-1}}_{\boldsymbol{\theta}_{\text{new}}^{i_{1:m-1}}}(\vo^{i_{1:m-1}}_t), \mu^{i_m}_{\tilde{\theta}^{i_m}}(o^{i_m}_t), \vmu^{i_{m+1:n}}_{\boldsymbol{\theta}_{\text{old}}^{i_{m+1:n}}}(\vo^{i_{m+1:n}}_t)\big).
            \end{align*}
            with $e$ mini-epochs of deterministic policy gradient ascent\;
        }
        Update the target networks smoothly
        \begin{center}
            $\hat \phi_1 = \tau \phi_1 + (1-\tau)\hat \phi_1$.
        \end{center}
        \begin{center}
            $\hat \phi_2 = \tau \phi_2 + (1-\tau)\hat \phi_2$.
        \end{center}
        \begin{center}
            $\hat \theta^{i} = \tau \theta^{i} + (1-\tau)\hat \theta^{i}$.
        \end{center}
        }
    }
    Discard $\phi_1, \phi_2, \hat{\phi}_1, \hat{\phi}_2$, and $\hat{\theta}^{i}, \forall i\in\mathcal{N}$. Deploy $\theta^{i}, \forall i\in\mathcal{N}$ in execution.
\end{algorithm}

\clearpage
\section{The Summary of HARL algorithms as Instances of HAML}
\label{appendix:haml-summary}

\subsection*{Recap of HAML}

\begin{itemize}
    \item Definition of HAMO:
\begin{align}
    \big[\mathcal{M}^{(\pi^{i_m})}_{\mathfrak{D}^{i_{m}}, \vpi_{k+1}^{i_{1:m-1}}} A_{\boldsymbol{\vpi}_{k}}\big](s) \triangleq &\ \E_{\rva^{i_{1:m-1}}\sim\vpi_{k+1}^{i_{1:m-1}}, \ra^{i_m}\sim\pi^{i_m}}\Big[ A_{\boldsymbol{\pi}_k}^{i_{m}}\left(s, \rva^{i_{1:m-1}}, \ra^{i_{m}}\right) \Big] \nonumber \\
    &- \mathfrak{D}^{i_m}_{\vpi_k}\Big(\pi^{i_{m}} \big| s, \vpi_{k+1}^{i_{1:m-1}}\Big).\nonumber
\end{align}
\item Optimisation target: $\pi^{i_{m}}_{k+1} = \argmax\limits_{\pi^{i_{m}}\in\mathcal{U}^{i_m}_{\vpi_k}(\pi^{i_m}_k)}\E_{\rs\sim\beta_{\vpi_k}}\Big[
        \big[\mathcal{M}^{(\pi^{i_m})}_{\mathfrak{D}^{i_{m}}, \vpi_{k+1}^{i_{1:m-1}}} A_{\boldsymbol{\vpi}_k}\big](\rs)
        \Big]$
\end{itemize}

\begin{comment}
\subsection*{HATRL}

% \begin{align}
% &L^{i_{1:m}}_{\boldsymbol{\pi}_k}\left( \boldsymbol{\pi}_{k+1}^{i_{1:m-1}}, \pi^{i_{m}} \right) 
%          \triangleq
%          \E_{\rs \sim \rho_{\boldsymbol{\pi}_k}, \rva^{i_{1:m-1}}\sim\boldsymbol{\pi}_{k+1}^{i_{1:m-1}}, \ra^{i_m}\sim\pi^{i_{m}}}
%          \left[  A_{\boldsymbol{\pi}_k}^{i_{m}}\left(\rs, \rva^{i_{1:m-1}}, \ra^{i_{m}}\right) \right]. \nonumber \\
% &\pi^{i_{m}}_{k+1} = \argmax_{\pi^{i_{m}}}\left[ 
%         L_{\boldsymbol{\pi}_k}^{i_{1:m}}\left(\boldsymbol{\pi}^{i_{1:m-1}}_{k+1}, \pi^{i_{m}}\right) - C\text{{\normalfont D}}_{\text{KL}}^{\text{max}}(\pi^{i_{m}}_{k}, \pi^{i_{m}})\right], \nonumber\\
%         &\qquad \text{ where } C = \frac{4\gamma\max_{s, \va}|A_{\boldsymbol{\pi}}(s, \va)|}{(1-\gamma)^{2}}. 
% \end{align}

\begin{align}
\pi^{i_{m}}_{k+1} = &\ \argmax_{\pi^{i_{m}}}\left[ 
        \E_{\rs \sim \rho_{\boldsymbol{\pi}_k}, \rva^{i_{1:m-1}}\sim\boldsymbol{\pi}_{k+1}^{i_{1:m-1}}, \ra^{i_m}\sim\pi^{i_{m}}}
         \left[  A_{\boldsymbol{\pi}_k}^{i_{m}}\left(\rs, \rva^{i_{1:m-1}}, \ra^{i_{m}}\right) \right] - C\text{{\normalfont D}}_{\text{KL}}^{\text{max}}(\pi^{i_{m}}_{k}, \pi^{i_{m}})\right], \nonumber\\
        & \text{ where } C = \frac{4\gamma\max_{s, \va}|A_{\boldsymbol{\pi}}(s, \va)|}{(1-\gamma)^{2}}. 
\end{align}

\begin{itemize}
    \item Drift functional: 
    \begin{align}
    &\mathfrak{D}^{i_m}_{\vpi_k}\Big(\pi^{i_{m}} \big| s, \vpi_{k+1}^{i_{1:m-1}}\Big) = C\text{D}_{\text{KL}}\big( \pi^{i_m}_k(\cdot|s), \pi^{i_m}(\cdot|s) \big),\ \nonumber\\
        \text{ with } &\nu^{i_m}_{\vpi_k, \pi^{i_m}}(s_{\max}) = 1, \text{ where } s_{\max}=\argmax_{s\in\mathcal{S}}\text{D}_{\text{KL}}\big( \pi^{i_m}_k(\cdot|s), \pi^{i_m}(\cdot|s) \big). \nonumber 
    \end{align}
    \item Neighborhood operator: $\mathcal{U}^{i_m}_{\vpi_k}(\pi^{i_m}_k) \equiv \Pi^{i_m}$.
    \item Sampling distribution: $\beta_{\vpi_k} = \rho_{\boldsymbol{\pi}_k}$.
\end{itemize}
\end{comment}
\subsection*{HATRPO}

\begin{align}
\pi_{k+1}^{i_m} = &\  \underset{\pi^{i_m}}{\arg\max}\ \mathbb{E}_{\mathrm{s} \sim \rho_{\boldsymbol{\pi}_{k}}, \mathbf{a}^{i_{1: m-1}} \sim \bm{\pi}_{k+1}^{i_{1: m-1}}, \mathrm{a}^{i_m} \sim \pi^{i m}}\left[A_{\boldsymbol{\pi}_{k}}^{i_{m}}\left(\mathrm{s}, \mathbf{a}^{i_{1: m-1}}, \mathrm{a}^{i_{m}}\right)\right], \nonumber\\
            &\ \text { subject to } \bar{\text{D}}_{\text{KL}}\left(\pi_k^{i_{m}}, \pi^{i_{m}}\right) \leq \delta .
\end{align}

\begin{itemize}
    \item Drift functional: HADF $\mathfrak{D}^{i_m}_{\vpi_k}\Big(\pi^{i_{m}} \big| s,  \vpi_{k+1}^{i_{1:m-1}}\Big) \equiv 0$.
    \item Neighborhood operator:
    \begin{align}
\mathcal{U}^{i_m}_{\vpi_k}(\pi^{i_m}_k) = \Big\{ \pi^{i_m} \in \Pi^{i_m} \ \Big| \ \E_{\rs\sim\rho_{\vpi_k}}\Big[\text{D}_{\text{KL}}\big( \pi^{i_m}_k(\cdot|\rs), \pi^{i_m}(\cdot|\rs) \big)\Big] \leq \delta \Big\}. \nonumber
\end{align}
    \item Sampling distribution: $\beta_{\vpi_k} = \rho_{\boldsymbol{\pi}_k}$.
\end{itemize}

\subsection*{HAPPO}

\begin{align}
\pi_{k+1}^{i_m} = \  \underset{\pi^{i_m}}{\arg\max}\ &\E_{\rs\sim\rho_{\boldsymbol{\pi}_k},
    \rva^{i_{1:m-1}}\sim \vpi^{i_{1:m-1}}_{k+1},
    \ra^{i_m}\sim\pi^{i_m}_k} \nonumber\\
    &\Big[ 
    \min\Big( \rr(\pi^{i_m})A^{i_{1:m}}_{\vpi_k}(\rs, \rva^{i_{1:m}}), \text{clip}\big( \rr(\pi^{i_m}), 1\pm \epsilon\big)A^{i_{1:m}}_{\vpi_k}(\rs, \rva^{i_{1:m}})\Big) \Big],\nonumber \\            &\ \text { where } \rr(\pi^{i_m}) = \frac{\pi^{i_m}(\ra^{i_m}|\rs)}{\pi^{i_m}_k(\ra^{i_m}|\rs)}.
\end{align}

\begin{itemize}
    \item Drift functional:
    \begin{align}
    &\mathfrak{D}^{i_m}_{\vpi_k}\Big(\pi^{i_{m}} \big| s, \vpi_{k+1}^{i_{1:m-1}}\Big) = \nonumber \\
    &\E_{\rva^{i_{1:m-1}}\sim \vpi^{i_{1:m-1}}_{k+1},
    \ra^{i_m}\sim\pi^{i_m}_{k}}\big[
    \text{ReLU}\big( \big[ \rr(\pi^{i_m}) -\text{clip}\big( \rr(\pi^{i_m}), 1\pm \epsilon\big)\big]A^{i_{1:m}}_{\boldsymbol{\pi}_{k}}(s, \rva^{i_{1:m}})\big) \big]
    %, \nonumber \\&\text{ with } \nu^{i_m}_{\vpi_k, \pi^{i_m}} = \rho_{\boldsymbol{\pi}_k}. \nonumber
    \end{align}
    \item Neighborhood operator: $\mathcal{U}^{i_m}_{\vpi_k}(\pi^{i_m}_k) \equiv \Pi^{i_m}$.
    \item Sampling distribution: $\beta_{\vpi_k} = \rho_{\boldsymbol{\pi}_k}$.
\end{itemize}

\subsection*{HAA2C}

\begin{align}
\pi_{k+1}^{i_m} = &\  \underset{\pi^{i_m}}{\arg\max}\ \mathbb{E}_{\mathrm{s} \sim \rho_{\boldsymbol{\pi}_{k}}, \mathbf{a}^{i_{1: m-1}} \sim \bm{\pi}_{k+1}^{i_{1: m-1}}, \mathrm{a}^{i_m} \sim \pi^{i m}}\left[A_{\boldsymbol{\pi}_{k}}^{i_{m}}\left(\mathrm{s}, \mathbf{a}^{i_{1: m-1}}, \mathrm{a}^{i_{m}}\right)\right]
\end{align}

\begin{itemize}
    \item Drift functional: HADF $\mathfrak{D}^{i_m}_{\vpi_k}\Big(\pi^{i_{m}} \big| s, \vpi_{k+1}^{i_{1:m-1}}\Big) \equiv 0$.
    \item Neighborhood operator: $\mathcal{U}^{i_m}_{\vpi_k}(\pi^{i_m}_k) \equiv \Pi^{i_m}$.
    \item Sampling distribution: $\beta_{\vpi_k} = \rho_{\boldsymbol{\pi}_k}$.
\end{itemize}

\subsection*{HADDPG \& HATD3}

\begin{align}
    \mu_{k+1}^{i_m} = \ \underset{\mu^{i_m}}{\arg\max}\ \E_{\rs\sim\beta_{\vmu_{k}}}\Big[ Q^{i_{1:m}}_{\vmu_{k}}\big(\rs, \vmu_{k+1}^{i_{1:m-1}}(\rs), \mu^{i_m}(\rs)
    \big)\Big],
\end{align}

\begin{itemize}
    \item Drift functional: HADF $\mathfrak{D}^{i_m}_{\vmu_k}\Big(\mu^{i_{m}} \big| s, \vmu_{k+1}^{i_{1:m-1}}\Big) \equiv 0$.
    \item Neighborhood operator: 
    \begin{align}
    \mathcal{U}^{i_m}_{\vmu_k}(\mu^{i_m}_k) \equiv \Pi^{i_m} \ \ \text{(the deterministic policy space)}.\nonumber%\Big\{ \pi^{i_m} \in \Pi^{i_m} \ \Big|\ \pi^{i_m}(\cdot|s) \text{ is a Dirac delta distribution }, \forall s \in \mathcal{S} \Big\}.\nonumber
    \end{align}
    \item Sampling distribution: $\beta_{\vmu_k}$ is a uniform distribution over the states in the off-policy replay buffer.
\end{itemize}

\clearpage
\section{HAD3QN: A Pure Value-based Approximation to HADDPG}
\label{appendix:had3qn}

In this section, we propose HAD3QN, which is a pure value-based approximation of HADDPG. Corresponding to HADDPG where each agent learns to maximise the joint target given the previous agents' updates, HAD3QN models decentralised agents as individual Q networks that predict the centralised critic's output. In particular, the centralised critic's output is sequentially maximised for sequential learning. During execution, for each observation each agent chooses the action that maximises its individual Q network. We provide its pseudocode in Algorithm \ref{HADQN}.

\begin{figure}[h]
  \centering
  \includegraphics[width=0.8\linewidth]{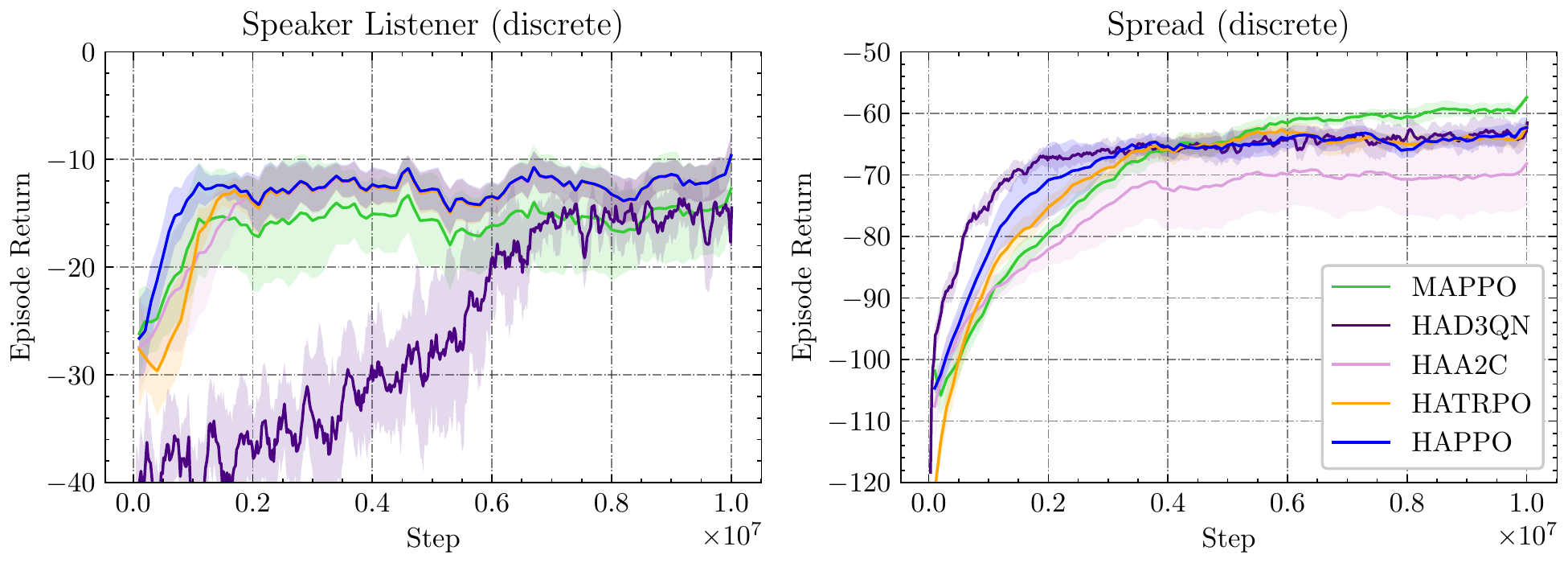}
  \caption{Average episode return of HAD3QN on Speaker Listener and Spread compared with existing methods.}
  \label{fig:had3qn_report}
\end{figure}

\begin{figure}[h]
  \centering
  \includegraphics[width=0.8\linewidth]{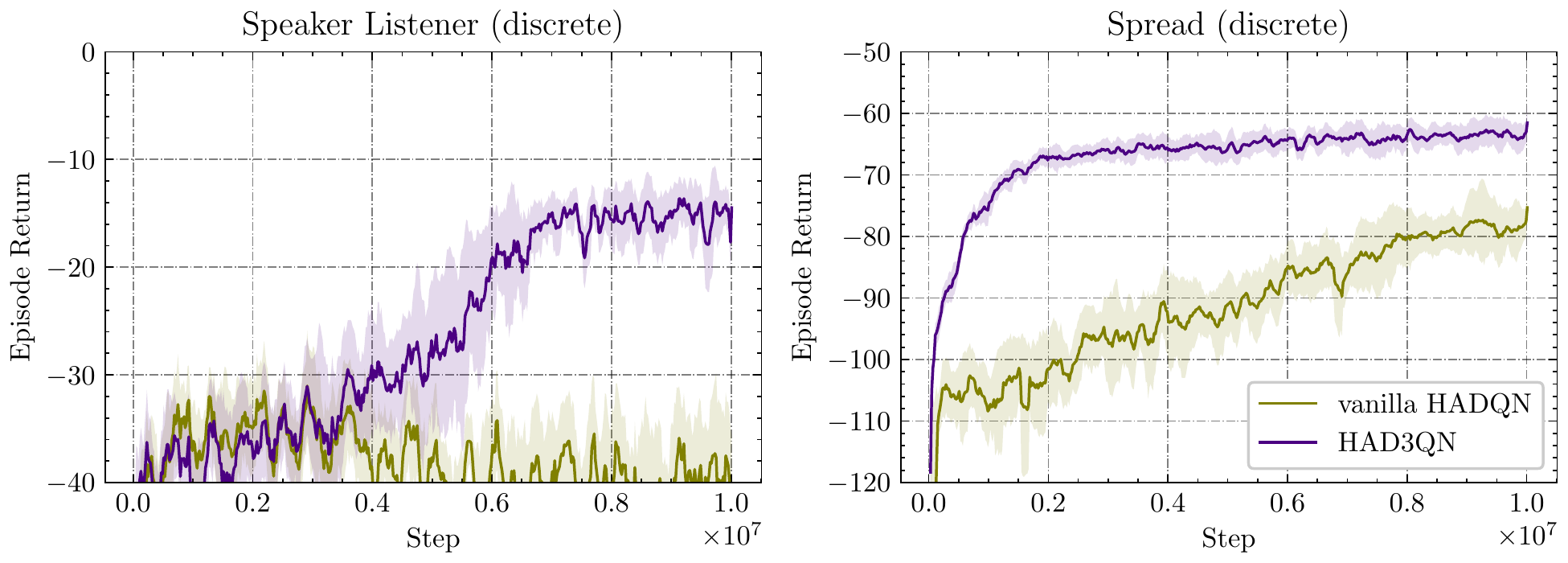}
  \caption{Ablation study on the effect of dueling network architecture in HAD3QN.}
  \label{fig:had3qn_ablation}
\end{figure}

Empirically, we test it on the Speaker Listener and Spread task in MPE, and observe that HAD3QN is able to solve them within 10 million steps (Figure \ref{fig:had3qn_report}). Compared with the vanilla HADQN where dueling architecture is not utilised (Figure \ref{fig:had3qn_ablation}), we find that the dueling network architecture effectively improves learning efficiency and stability, and is crucial for HAD3QN to achieve higher return. The hyperparameters are reported in Section \ref{appendix:exp}. 

However, we note that HAD3QN does not scale well as it suffers from the curse of dimensionality with the growing number of agents and increasing dimensionality of individual action space. This phenomenon is similar to what has been discussed in the DQN case in RL by \cite{lillicrap2015continuous}. The purpose of proposing HAD3QN is not to refresh SOTA methods, but to show that discretised approximation of HADDPG is also possible and it performs well on low-dimensional tasks. It also shows that our HARL framework allows direct extension of RL research results, in this case being the dueling network design, which is potentially powerful as the efforts to re-derive similar multi-agent results can be saved.

\begin{algorithm}[!htbp]
    \caption{HAD3QN}
    \label{HADQN}
    
    % Initialising
    \textbf{Input:} stepsize $\alpha$, Polyak coefficient $\tau$, batch size $B$, exploration parameter $\epsilon$, number of: agents $n$, episodes $K$, steps per episode $T$.\\
    \textbf{Initialize:} global critic and target networks: $\phi$, and $\hat{\phi}$, distributed critic and target networks: $\{\theta^{i}, \ \forall i\in \mathcal{N}\}$ and $\{\hat{\theta}^{i}, \ \forall i\in \mathcal{N}\}$, replay buffer $\mathcal{B}$.\\
    
    % Collecting experience
    \For{$k = 0,1,\dots,K-1$}{
        Collect a set of trajectories by letting the agents act $\epsilon$-greedily with respect to the distributed critics
        \begin{align}
            a_t^{i} = 
            \begin{cases}
            \argmax_{a^i} Q^{i}_{\theta^i}(o^i_t, a^i) \quad \text{with probability }1-\epsilon\nonumber\\
            \text{random} \quad \quad \quad \quad \quad \quad \ \ \  \text{with probability }\epsilon.
            \end{cases}
        \end{align}
        
        Push transitions $\{(s_{t}, o^i_{t},a^i_{t}, r_t, s_{t+1}, o^i_{t+1}), \forall i\in \mathcal{N},t\in T\}$ into $\mathcal{B}$.
        
        Sample a random minibatch of $B$ transitions from $\mathcal{B}$.

        Compute the global target 
        \begin{center}
            $y_{t} = r_t + \gamma \cdot Q_{\hat{\phi}}(s_{t+1}, \va_{*})$,\\
            where $a^{i}_* = \argmax_{a^{i}} Q^{i}_{\hat{\theta}^i}(o^i_{t+1}, a^{i})$, for all $i\in\mathcal{N}$.
        \end{center}
        Compute the global loss
        \begin{center}
                $L(\phi) = \frac{1}{B}\sum\limits_{b=1}^{B}\big( Q_{\phi}(s_b, \va_b) - y_{b} \big)^2$.
           \end{center}
           Update the critic parameters
           \begin{center}
               $\phi = \phi - \alpha \nabla_{\phi}L(\phi)$.
           \end{center}
        
        % Updating actors
        Draw a permutation of agents $i_{1:n}$ at random\;
        \For{agent $i_{m} = i_{1}, \dots, i_{n}$}{
            Compute the local targets
            \begin{center}
               $y^{i_{m}}_{t} =  Q_{\phi}(s_{t}, \va^{i_{1:m-1}}_{*}, \va^{-i_{1:m-1}}_{t})$,\\
               where $a^{i_j}_* = \argmax_{a^{i_j}} Q_{\phi}(s_{t}, \va^{i_{1:j-1}}_{*}, a^{i_j}, \va^{-i_{1:j}}_t)$, for $j<m$.
              \end{center}
           Compute the agent's local loss
           \begin{center}
                $L(\theta^{i_m}) = \frac{1}{B}\sum\limits_{b=1}^{B}\big( Q^{i_m}_{\theta^{i_m}}(o^{i_m}_b, a^{i_m}_b) - y^{i_m}_{b} \big)^2$.
           \end{center}
           Update the critic parameters
           \begin{center}
               $\theta^{i_m} = \theta^{i_m} - \alpha \nabla_{\theta^{i_m}}L(\theta^{i_m})$.
           \end{center}
        }
        Update the target networks smoothly
        \begin{center}
            $\hat \phi = \tau \phi + (1-\tau)\hat \phi$,\quad  $\hat \theta^{i} = \tau \theta^{i} + (1-\tau)\hat \theta^{i}$.
        \end{center}
    }
    Discard $\phi, \hat{\phi}$, and $\hat{\theta}^{i}, \forall i\in\mathcal{N}$. Deploy $\theta^{i}, \forall i\in\mathcal{N}$ in execution.
\end{algorithm}

\section{Additional Experiment Results}
\label{appendix:smac-figures}

In this section, we present the learning curves of HAPPO, HATRPO, MAPPO, and QMIX across at least three seeds on ten SMAC maps and five SMACv2 maps in Figure \ref{fig:smac}.

\begin{figure}[h]
  \centering
  \includegraphics[width=\linewidth]{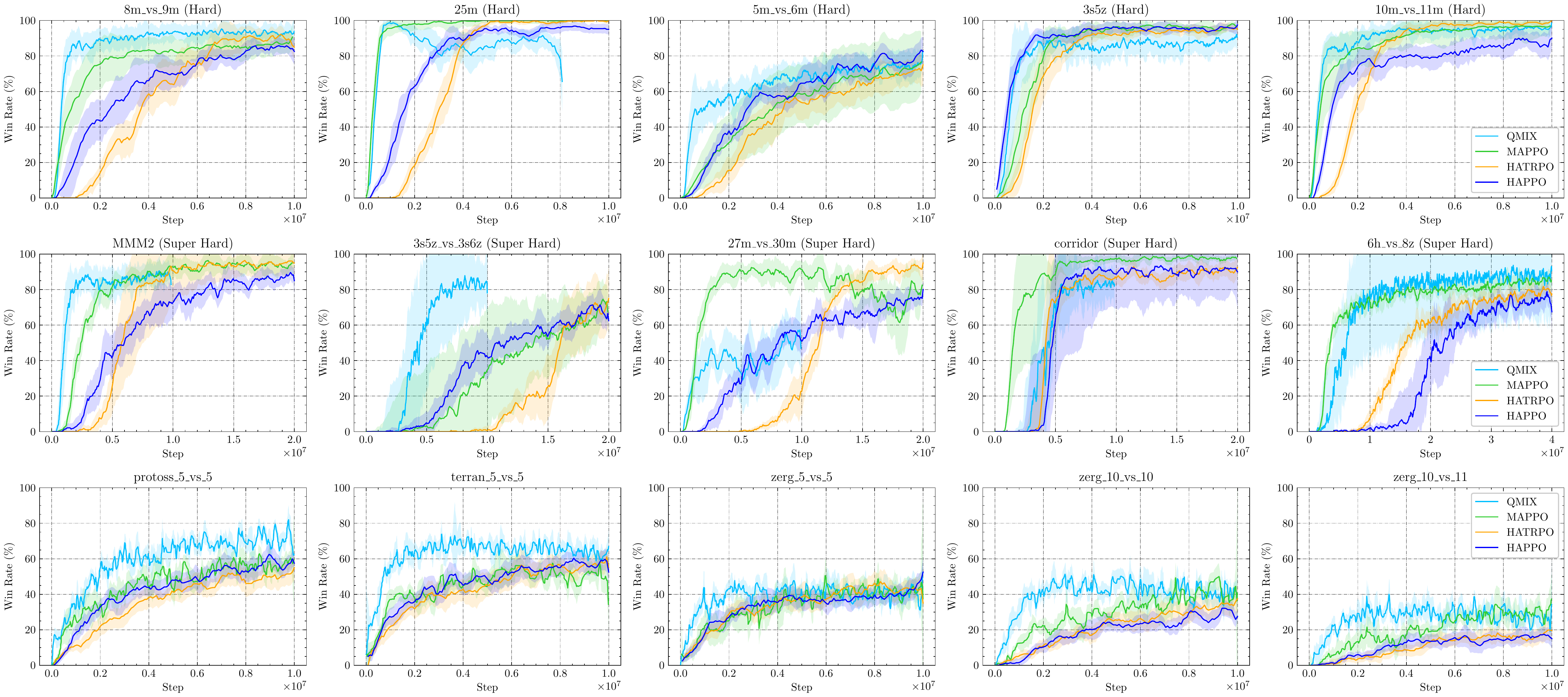}
  \caption{Comparisons of average win rate on SMAC and SMACv2. It should be noted that some of the QMIX experiments were terminated early if they had already converged, as observed in \texttt{MMM2}, \texttt{3s5z\_vs\_3s6z}, and \texttt{corridor}, or if the computational resources required were excessive, as observed in the case of \texttt{27m\_vs\_30m}. Specifically, running QMIX for a single seed for 20 million steps in \texttt{27m\_vs\_30m} would have necessitated more than 250 GB memory and 10 days, which exceeded the computational budget allocated for this study. Consequently, we executed the experiment for only 10 million steps.}
  \label{fig:smac}
\end{figure}

\newpage
\section{Hyperparameter Settings for Experiments}
\label{appendix:exp}

Before we report the hyperparameters used in the experiments, we would like to clarify the reporting conventions that we follow. Firstly, for simplicity and clarity reasons, we specify the network architecture to be MLP or RNN, but in configuration files the corresponding term is a boolean value \texttt{use\_recurrent\_policy} . The only difference between RNN network and MLP network is that the former has a GRU layer after the same MLP backbone, and the related configuration of this GRU layer is provided in Table \ref{tab:on-policy-common}. Secondly, the hyperparameters will only take effect when they are used. For example, the number of GRU layers is set to 1 across all environments, but it should only be considered when the network architecture is RNN; as another example, while we report \texttt{kl\_threshold} in on-policy hyperparameter tables, it is only useful when HATRPO is applied. Finally, the \texttt{batch\_size} reported for on-policy algorithms is calculated as the product of \texttt{n\_rollout\_threads} and \texttt{episode\_length}.

\subsection{Common Hyperparameters Across All Environments}

In this part, we present the common hyperparameters used for on-policy algorithms in Table \ref{tab:on-policy-common} and for off-policy algorithms in Table \ref{tab:off-policy-common} across all environments. 

\begin{table}[htbp]
    \caption{Common hyperparameters used for on-policy algorithms HAPPO, HATRPO, HAA2C, and MAPPO (when our MAPPO implementation is used) across all environments.}
    \label{tab:on-policy-common}
    \centering
\begin{tabular}{cc|cc}
\hline
hyperparameters            & value      & hyperparameters        & value            \\ \hline
use valuenorm                     & True       & use proper time limits     & True             \\
activation                 & ReLU       & use feature normalization  & True             \\
initialization method      & orthogonal & gain                   & 0.01             \\
use naive recurrent policy & False      & num GRU layers         & 1                \\
data chunk length          & 10         & optim eps              & $1 \mathrm{e}-5$ \\
weight decay               & 0          & std x coef             & 1                \\
std y coef                 & 0.5        & use clipped value loss & True             \\
value loss coef            & 1          & use max grad norm      & True             \\
max grad norm              & $10.0$     & use GAE                & True             \\
GAE lambda                 & 0.95       & use huber loss         & True             \\
use policy active masks    & True       & huber delta            & 10.0             \\
action aggregation         & prod       & ls step                & 10               \\
accept ratio               & 0.5        &                        &                  \\ \hline
\end{tabular}
\end{table}

% Common Hyperparameters used for off-policy algorithms

\begin{table}[htbp]
    \caption{Common hyperparameters used for off-policy algorithms HADDPG, HATD3, HAD3QN, MADDPG, and MATD3 across all environments.}
    \label{tab:off-policy-common}
    \centering
\begin{tabular}{cc|cc}
\hline
hyperparameters      & value     & hyperparameters         & value            \\ \hline
proper time limits   & True      & warmup steps            & $1 \mathrm{e} 4$ \\
activation           & ReLU      & final activation        & Tanh             \\
base activation      & ReLU      & dueling v activation    & Hardswish        \\
dueling a activation & Hardswish & buffer size             & $1 \mathrm{e} 6$ \\
batch size           & 1000      & polyak                  & 0.005            \\
epsilon              & 0.05      & policy noise            & 0.2              \\
noise clip           & 0.5       & &            \\ \hline
\end{tabular}
\end{table}

\subsection{Multi-Agent Particle Environment (MPE)}
\label{appendix:mpe-hyperparam}

In this part, we present the hyperparameters used in MPE tasks for HAPPO, HATRPO, HAA2C, and MAPPO in Table \ref{tab:on-policy-mpe-common}, for HADDPG, HATD3, MADDPG, and MATD3 in Table \ref{tab:off-policy-mpe-common}, and for HAD3QN in Table \ref{tab:had3qn-mpe-common}.

\begin{table}[htbp]
    \caption{Common hyperparameters used for HAPPO, HATRPO, HAA2C, and MAPPO in the MPE domain.}
    \label{tab:on-policy-mpe-common}
    \centering
\begin{tabular}{cc|cc|cc}
\hline
hyperparameters & value      & hyperparameters  & value            & hyperparameters   & value            \\ \hline
batch size      & 4000       & linear lr decay  & False            & network           & MLP              \\
hidden sizes    & [128, 128] & actor lr         & $5 \mathrm{e}-4$ & critic lr         & $5 \mathrm{e}-4$ \\
ppo epoch       & 5          & critic epoch     & 5                & a2c epoch         & 5                \\
clip param      & 0.2        & actor mini batch & 1                & critic mini batch & 1                \\
entropy coef    & 0.01       & gamma            & 0.99             & kl threshold      & 0.005            \\
backtrack coeff & 0.8        &                  &                  &                   &                  \\ \hline
\end{tabular}
\end{table}

\begin{table}[htbp]
    \caption{Common hyperparameters used for HADDPG, HATD3, MADDPG, and MATD3 in the MPE domain.}
    \label{tab:off-policy-mpe-common}
    \centering
\begin{tabular}{cc|cc|cc}
\hline
hyperparameters & value            & hyperparameters & value      & hyperparameters  & value            \\ \hline
rollout threads & 20               & train interval  & 50         & update per train & 1                \\
linear lr decay & False            & hidden sizes    & [128, 128] & actor lr         & $5 \mathrm{e}-4$ \\
critic lr       & $1 \mathrm{e}-3$ & gamma           & 0.99       & n step           & 1                \\
policy update frequency & 2    & & & & \\ \hline
\end{tabular}
\end{table}

\begin{table}[htbp]
    \caption{Common hyperparameters used for HAD3QN in the MPE domain.}
    \label{tab:had3qn-mpe-common}
    \centering
\begin{tabular}{cc|cc|cc}
\hline
hyperparameters & value            & hyperparameters  & value            & hyperparameters        & value          \\ \hline
rollout threads & 20               & train interval   & 50               & base hidden sizes      & {[}128, 128{]} \\
linear lr decay & False            & update per train & 1                & dueling v hidden sizes & {[}128{]}      \\
actor lr        & $5 \mathrm{e}-4$ & critic lr        & $1 \mathrm{e}-3$ & dueling a hidden sizes & {[}128{]}      \\
gamma           & 0.95             & n step           & 1                &                        &                \\ \hline
\end{tabular}
\end{table}

\subsection{Multi-Agent MuJoCo (MAMuJoCo)}
\label{appendix:mamujoco-hyperparam}

In this part, we report the hyperparameters used in MAMuJoCo tasks for HAPPO, HATRPO, HAA2C, and MAPPO in Table \ref{tab:on-policy-mamujoco-common}, \ref{tab:happo-mappo-mamujoco-different}, \ref{tab:hatrpo-mamujoco-different}, and \ref{tab:haa2c-mamujoco-different}, and for HADDPG, HATD3, MADDPG, and MATD3 in Table \ref{tab:haddpg-maddpg-mamujoco-common}, \ref{tab:haddpg-maddpg-mamujoco-different}, \ref{tab:hatd3-mamujoco-common}, and \ref{tab:hatd3-mamujoco-different}.

\begin{table}[htbp]
    \caption{Common hyperparameters used for HAPPO, HATRPO, HAA2C, and MAPPO in the MAMuJoCo domain.}
    \label{tab:on-policy-mamujoco-common}
    \centering
\begin{tabular}{cc|cc|cc}
\hline
hyperparameters & value & hyperparameters & value & hyperparameters & value           \\ \hline
batch size      & 4000  & network         & MLP   & hidden sizes    & [128, 128, 128] \\
gamma           & 0.99  & backtrack coeff & 0.8   &                 &                 \\ \hline
\end{tabular}
\end{table}

\begin{table}[htbp]
    \caption{Different hyperparameters used for HAPPO and MAPPO in the MAMuJoCo domain.}
    \label{tab:happo-mappo-mamujoco-different}
    \centering
\begin{tabular}{c|ccccccc}
\hline
scenarios      & \begin{tabular}[c]{@{}c@{}}linear\\ lr decay\end{tabular} & \begin{tabular}[c]{@{}c@{}}actor/critic\\ lr\end{tabular} &  \begin{tabular}[c]{@{}c@{}}ppo/critic\\ epoch\end{tabular} & \begin{tabular}[c]{@{}c@{}}clip\\ param\end{tabular} & \begin{tabular}[c]{@{}c@{}}actor/critic\\ mini batch\end{tabular} & \begin{tabular}[c]{@{}c@{}}entropy\\ coef\end{tabular} \\ \hline
Ant 4x2         & False                                                     & $5 \mathrm{e}-4$                                   & 5                                                            & 0.1                                                  & 1                                                                   & 0                                                      \\
HalfCheetah 2x3 & False                                                     & $5 \mathrm{e}-4$                                   & 15                                                           & 0.05                                                 & 1                                                                   & 0.01                                                   \\
Hopper 3x1      & True                                                      & $5 \mathrm{e}-4$                                   & 10                                                           & 0.05                                                 & 1                                                                   & 0                                                      \\
Walker 2x3      & True                                                      & $1 \mathrm{e}-3$                                   & 5                                                            & 0.05                                                 & 2                                                                   & 0                                                      \\
Walker 6x1      & False                                                     & $5 \mathrm{e}-4$                                   & 5                                                            & 0.1                                                  & 1                                                                   & 0.01                                                   \\
Humanoid 17x1   & True                                                      & $5 \mathrm{e}-4$                                   & 5                                                            & 0.1                                                  & 1                                                                   & 0                                                      \\ \hline
\end{tabular}
\end{table}

\begin{table}[htbp]
    \caption{Different hyperparameters used for HATRPO in the MAMuJoCo domain.}
    \label{tab:hatrpo-mamujoco-different}
    \centering
\begin{tabular}{c|cccccc}
\hline
scenarios      & \begin{tabular}[c]{@{}c@{}}linear\\ lr decay\end{tabular} & \begin{tabular}[c]{@{}c@{}}critic\\ lr\end{tabular} & \begin{tabular}[c]{@{}c@{}}critic\\ epoch\end{tabular} & \begin{tabular}[c]{@{}c@{}}clip\\ param\end{tabular} & \begin{tabular}[c]{@{}c@{}}critic\\ mini batch\end{tabular} & \begin{tabular}[c]{@{}c@{}}kl\\ threshold\end{tabular} \\ \hline
Ant 4x2         & False                                                     & $5 \mathrm{e}-4$                                    & 5                                                      & 0.2                                                  & 1                                                           & $5 \mathrm{e}-3$                                       \\
HalfCheetah 2x3 & False                                                     & $5 \mathrm{e}-4$                                    & 5                                                      & 0.2                                                  & 1                                                           & $1 \mathrm{e}-2$                                       \\
Hopper 3x1      & False                                                     & $5 \mathrm{e}-4$                                    & 5                                                      & 0.2                                                  & 1                                                           & $1 \mathrm{e}-3$                                       \\
Walker 2x3      & False                                                     & $5 \mathrm{e}-4$                                    & 5                                                      & 0.2                                                  & 1                                                           & $1 \mathrm{e}-2$                                       \\
Walker 6x1      & False                                                     & $5 \mathrm{e}-4$                                    & 5                                                      & 0.2                                                  & 1                                                           & $5 \mathrm{e}-3$                                       \\ \hline
\end{tabular}
\end{table}

\begin{table}[htbp]
    \caption{Different hyperparameters used for HAA2C in the MAMuJoCo domain.}
    \label{tab:haa2c-mamujoco-different}
    \centering
\begin{tabular}{c|ccccccc}
\hline
scenarios      & \begin{tabular}[c]{@{}c@{}}linear\\ lr decay\end{tabular} & \begin{tabular}[c]{@{}c@{}}actor/critic\\ lr\end{tabular} & \begin{tabular}[c]{@{}c@{}}a2c/critic\\ epoch\end{tabular} & \begin{tabular}[c]{@{}c@{}}clip\\ param\end{tabular} & \begin{tabular}[c]{@{}c@{}}actor/critic\\ mini batch\end{tabular} & \begin{tabular}[c]{@{}c@{}}entropy\\ coef\end{tabular} \\ \hline
Ant 4x2         & True                                                      & $5 \mathrm{e}-4$                                   & 5                                                            & 0.1                                                  & 1                                                                   & 0                                                      \\
HalfCheetah 2x3 & True                                                      & $5 \mathrm{e}-4$                                   & 5                                                            & 0.1                                                  & 1                                                                   & 0                                                      \\
Hopper 3x1      & True                                                      & $1 \mathrm{e}-4$                                   & 3                                                            & 0.1                                                  & 1                                                                   & 0                                                      \\
Walker 2x3      & True                                                      & $1 \mathrm{e}-4$                                   & 5                                                            & 0.1                                                  & 1                                                                   & 0                                                      \\
Walker 6x1      & True                                                      & $1 \mathrm{e}-4$                                   & 5                                                            & 0.1                                                  & 1                                                                   & 0                                                      \\ \hline
\end{tabular}
\end{table}

\begin{table}[htbp]
    \caption{Common hyperparameters used for HADDPG and MADDPG in the MAMuJoCo domain.}
    \label{tab:haddpg-maddpg-mamujoco-common}
    \centering
\begin{tabular}{cc|cc|cc}
\hline
hyperparameters & value      & hyperparameters & value            & hyperparameters & value            \\ \hline
rollout threads & 10         & train interval  & 50               & linear lr decay & False            \\
hidden sizes    & [256, 256] & actor lr        & $5 \mathrm{e}-4$ & critic lr       & $1 \mathrm{e}-3$ \\
gamma           & 0.99       &                 &                  &                 &                  \\ \hline
\end{tabular}
\end{table}

\begin{table}[htbp]
    \centering
    \caption{Different hyperparameters used for HADDPG and MADDPG in the MAMuJoCo domain.}
    \label{tab:haddpg-maddpg-mamujoco-different}
\begin{tabular}{c|ccc}
\hline
scenarios       & \begin{tabular}[c]{@{}c@{}}update\\ per train\end{tabular} & \begin{tabular}[c]{@{}c@{}}exploration\\ noise\end{tabular} & n step \\ \hline
Ant 4x2         & 0.5                                                        & 0.05                                                        & 20     \\
HalfCheetah 2x3 & 1                                                          & 0.1                                                         & 20     \\
Hopper 3x1      & 1                                                          & 0.1                                                         & 20     \\
Walker 2x3      & 1                                                          & 0.1                                                         & 10     \\
Walker 6x1      & 1                                                          & 0.1                                                         & 20     \\ \hline
\end{tabular}
\end{table}

\begin{table}[htbp]
    \caption{Common hyperparameters used for HATD3 and MATD3 in the MAMuJoCo domain.}
    \label{tab:hatd3-mamujoco-common}
    \centering
\begin{tabular}{cc|cc|cc}
\hline
hyperparameters & value            & hyperparameters & value          & hyperparameters  & value            \\ \hline
rollout threads & 10               & train interval  & 50             & update per train & 1                \\
linear lr decay & False            & hidden sizes    & {[}256, 256{]} & actor lr         & $5 \mathrm{e}-4$ \\
critic lr       & $1 \mathrm{e}-3$ & gamma           & 0.99           & exploration noise       & 0.1                 \\ \hline
\end{tabular}
\end{table}

\begin{table}[htbp]
    \centering
    \caption{Different hyperparameters used for HATD3 and MATD3 in the MAMuJoCo domain.}
    \label{tab:hatd3-mamujoco-different}
\begin{tabular}{c|cc}
\hline
scenarios       & policy update frequency & n step \\ \hline
Ant 4x2         & 2                                                                 & 5      \\
HalfCheetah 2x3 & 2                                                                 & 10     \\
Hopper 3x1      & 2                                                                 & 5      \\
Walker 2x3      & 8                                                                 & 20     \\
Walker 6x1      & 2                                                                 & 25     \\
Humanoid 17x1   & 2                                                                 & 5      \\ \hline
\end{tabular}
\end{table}

\subsection{StarCraft Multi-Agent Challenge (SMAC)}
\label{appendix:smac-hyperparam}

In the SMAC domain, for MAPPO and QMIX baselines we adopt the implementation and tuned hyperparameters reported in the MAPPO paper. Here we report the hyperparameters for HAPPO and HATRPO in Table \ref{tab:happo-smac-common}, \ref{tab:happo-smac-different}, \ref{tab:hatrpo-smac-common}, and \ref{tab:hatrpo-smac-different}, which are kept comparable with the baselines for fairness purposes. The \texttt{state type} hyperparameter can take ``EP'' (for \emph{Environment-Provided global state}) and ``FP'' (for \emph{Featured-Pruned agent-specific global state}), as named by \cite{mappo}.

\begin{table}[htbp]
    \caption{Common hyperparameters used for HAPPO in the SMAC domain.}
    \label{tab:happo-smac-common}
    \centering
\begin{tabular}{cc|cc|cc}
\hline
hyperparameters & value            & hyperparameters & value            & hyperparameters & value        \\ \hline
batch size      & 3200             & linear lr decay & False            & hidden sizes    & [64, 64, 64] \\
actor lr        & $5 \mathrm{e}-4$ & critic lr       & $5 \mathrm{e}-4$ & entropy coef    & 0.01         \\ \hline
\end{tabular}
\end{table}

\begin{table}[htbp]
    \caption{Different hyperparameters used for HAPPO in the SMAC domain.}
    \label{tab:happo-smac-different}
    \centering
\begin{tabular}{c|cccccc}
\hline
Map          & network & \begin{tabular}[c]{@{}c@{}}ppo/critic\\ epoch\end{tabular} & \begin{tabular}[c]{@{}c@{}}clip\\ param\end{tabular} & \begin{tabular}[c]{@{}c@{}}actor/critic\\ mini batch\end{tabular} & gamma & \begin{tabular}[c]{@{}c@{}}state\\ type\end{tabular} \\ \hline
8m\_vs\_9m     & RNN     & 5                                                          & 0.05                                                 & 1                                                                 & 0.95  & EP                                                   \\
25m          & RNN     & 5                                                          & 0.2                                                  & 1                                                                 & 0.99  & EP                                                   \\
5m\_vs\_6m     & RNN     & 5                                                          & 0.05                                                 & 1                                                                 & 0.95  & FP                                                   \\
3s5z         & RNN     & 5                                                          & 0.2                                                  & 1                                                                 & 0.99  & EP                                                   \\
10m\_vs\_11m   & RNN     & 5                                                          & 0.05                                                 & 1                                                                 & 0.95  & FP                                                   \\
MMM2         & MLP     & 5                                                          & 0.2                                                  & 1                                                                 & 0.95  & EP                                                   \\
3s5z\_vs\_3s6z & RNN     & 5                                                          & 0.1                                                  & 2                                                                 & 0.95  & FP                                                   \\
27m\_vs\_30m   & RNN     & 5                                                          & 0.05                                                 & 1                                                                 & 0.95  & FP                                                   \\
6h\_vs\_8z     & MLP     & 10                                                         & 0.05                                                 & 2                                                                 & 0.95  & FP                                                   \\
corridor     & MLP     & 5                                                          & 0.2                                                  & 1                                                                 & 0.99  & FP                                                   \\ \hline
\end{tabular}
\end{table}

\begin{table}[htbp]
    \caption{Common hyperparameters used for HATRPO in the SMAC domain.}
    \label{tab:hatrpo-smac-common}
    \centering
\begin{tabular}{cc|cc|cc}
\hline
hyperparameters & value & hyperparameters & value & hyperparameters   & value        \\ \hline
batch size      & 3200  & linear lr decay & False & hidden sizes      & [64, 64, 64] \\
critic epoch    & 5     & clip param      & 0.2   & critic mini batch & 1            \\ \hline
\end{tabular}
\end{table}

\begin{table}[htbp]
    \caption{Different hyperparameters used for HATRPO in the SMAC domain.}
    \label{tab:hatrpo-smac-different}
    \centering
\begin{tabular}{c|cccccc}
\hline
Map          & network & \begin{tabular}[c]{@{}c@{}}critic\\ lr\end{tabular} & gamma & \begin{tabular}[c]{@{}c@{}}kl\\ threshold\end{tabular} & \begin{tabular}[c]{@{}c@{}}backtrack\\ coeff\end{tabular} & \begin{tabular}[c]{@{}c@{}}state\\ type\end{tabular} \\ \hline
8m\_vs\_9m     & MLP     & $5 \mathrm{e}-4$                                    & 0.99  & $5 \mathrm{e}-3$                                       & 0.5                                                      & FP                                                   \\
25m          & RNN     & $5 \mathrm{e}-4$                                    & 0.99  & $1 \mathrm{e}-2$                                       & 0.5                                                      & EP                                                   \\
5m\_vs\_6m     & RNN     & $5 \mathrm{e}-4$                                    & 0.99  & $1 \mathrm{e}-2$                                       & 0.5                                                      & FP                                                   \\
3s5z         & MLP     & $5 \mathrm{e}-4$                                    & 0.95  & $1 \mathrm{e}-2$                                       & 0.5                                                      & EP                                                   \\
10m\_vs\_11m   & MLP     & $5 \mathrm{e}-4$                                    & 0.95  & $5 \mathrm{e}-3$                                       & 0.5                                                      & FP                                                   \\
MMM2         & MLP     & $5 \mathrm{e}-4$                                    & 0.95  & $6 \mathrm{e}-2$                                       & 0.5                                                      & EP                                                   \\
3s5z\_vs\_3s6z & MLP     & $5 \mathrm{e}-4$                                    & 0.99  & $5 \mathrm{e}-3$                                       & 0.5                                                      & FP                                                   \\
27m\_vs\_30m   & RNN     & $5 \mathrm{e}-4$                                    & 0.99  & $1 \mathrm{e}-3$                                       & 0.8                                                      & FP                                                   \\
6h\_vs\_8z     & MLP     & $1 \mathrm{e}-3$                                    & 0.99  & $1 \mathrm{e}-3$                                       & 0.8                                                      & FP                                                   \\
corridor     & RNN     & $5 \mathrm{e}-4$                                    & 0.99  & $6 \mathrm{e}-2$                                       & 0.5                                                      & FP                                                   \\ \hline
\end{tabular}
\end{table}

\subsection{SMACv2}
\label{appendix:smacv2-hyperparam}

In the SMACv2 domain, for MAPPO and QMIX baselines we adopt the implementation and tuned hyperparameters reported in \cite{ellis2022smacv2}. Here we report the hyperparameters for HAPPO and HATRPO in Table \ref{tab:happo-smacv2} and \ref{tab:hatrpo-smacv2-common}, which are kept comparable with the baselines for fairness purposes.

\begin{table}[htbp]
    \caption{Hyperparameters used for HAPPO in the SMACv2 domain.}
    \label{tab:happo-smacv2}
    \centering
\begin{tabular}{cc|cccc}
\hline
hyperparameters & value            & hyperparameters           & \multicolumn{1}{c|}{value}            & hyperparameters & value    \\ \hline
batch size      & 3200             & linear lr decay           & \multicolumn{1}{c|}{False}            & hidden sizes    & {[}64{]} \\
network         & RNN              & ppo / critic epoch        & \multicolumn{1}{c|}{5}                & clip param      & 0.05     \\
actor lr        & $5 \mathrm{e}-4$ & critic lr                 & \multicolumn{1}{c|}{$5 \mathrm{e}-4$} & entropy coef    & 0.01     \\ \cline{3-6} 
gamma           & 0.99             & actor / critic mini batch & \multicolumn{3}{c}{2 for \texttt{terran\_5\_vs\_5} and 1 otherwise}         \\ \hline
\end{tabular}
\end{table}

\begin{table}[htbp]
    \caption{Hyperparameters used for HATRPO on all tasks in the SMACv2 domain.}
    \label{tab:hatrpo-smacv2-common}
    \centering
\begin{tabular}{cc|cc|cc}
\hline
hyperparameters & value            & hyperparameters   & value            & hyperparameters & value    \\ \hline
batch size      & 3200             & linear lr decay   & False            & hidden sizes    & {[}64{]} \\
network         & RNN              & critic lr         & $5 \mathrm{e}-4$ & critic epoch    & 5        \\
clip param      & 0.2              & critic mini batch & 1                & gamma           & 0.99     \\
kl threshold    & $5 \mathrm{e}-3$ & backtrack coeff   & 0.5              &                 &          \\ \hline
\end{tabular}
\end{table}

\subsection{Google Research Football Environment (GRF)}
\label{appendix:football-hyperparam}

In the GRF domain, for MAPPO and QMIX baselines we adopt the implementation and tuned hyperparameters reported in the MAPPO paper. Here we report the hyperparameters for HAPPO in Table \ref{tab:common-happo-grf} and \ref{tab:different-happo-grf}, which are kept similar and comparable to the baselines for fairness purposes.

\begin{table}[htbp]
    \caption{Common hyperparameters used for HAPPO in the GRF domain.}
    \label{tab:common-happo-grf}
\centering
\begin{tabular}{cc|cc|cc}
\hline
hyperparameters & value            & hyperparameters  & value    & hyperparameters   & value            \\ \hline
rollout threads & 50               & hidden sizes     & [64, 64] & actor lr          & $5 \mathrm{e}-4$ \\
critic lr       & $5 \mathrm{e}-4$ & ppo epoch        & 15       & critic epoch      & 15               \\
clip param      & 0.2              & actor mini batch & 2        & critic mini batch & 2                \\
entropy coef    & 0.01             & gamma            & 0.99     &                   &                  \\ \hline
\end{tabular}
\end{table}

% Please add the following required packages to your document preamble:
% \usepackage{multirow}
\begin{table}[htbp]
    \caption{Different hyperparameters used for HAPPO in the GRF domain.}
    \label{tab:different-happo-grf}
\centering
\begin{tabular}{c|ccc}
\hline
scenarios             & network & \begin{tabular}[c]{@{}c@{}}episode\\ length\end{tabular} & \begin{tabular}[c]{@{}c@{}}linear\\ lr decay\end{tabular} \\ \hline
PS                    & RNN     & 200                                                      & True                                                      \\
RPS                   & MLP     & 200                                                      & False                                                     \\
3v.1                  & MLP     & 200                                                      & True                                                      \\
CA\scriptsize{(easy)} & MLP     & 200                                                      & True                                                      \\
CA\scriptsize{(hard)} & MLP     & 1000                                                     & True                                                      \\ \hline
\end{tabular}
\end{table}

\subsection{Bi-DexterousHands}
\label{appendix:dexhands-hyperparam}

In the Bi-DexterousHands domain, we use the PPO and MAPPO baselines implemented in the Bi-DexterousHands benchmark for comparison and for them we adopt the officially reported hyperparameters. Here we report the hyperparameters used for HAPPO in Table \ref{tab:common-happo-dexhands}. As Bi-DexterousHands tasks are GPU-parallelised, we reload the configuration term \texttt{n\_rollout\_threads} with a meaning of number of parallel environments. Thus, \texttt{parallel envs} in Table \ref{tab:common-happo-dexhands} refers to \texttt{n\_rollout\_threads}.

\begin{table}
    \caption{Common hyperparameters used for HAPPO in the Bi-DexterousHands domain.}
    \label{tab:common-happo-dexhands}
\centering
\begin{tabular}{cc|cc|cc}
\hline
hyperparameters  & value           & hyperparameters   & value            & hyperparameters & value            \\ \hline
parallel envs    & 256             & linear lr decay   & False            & network         & MLP              \\
hidden sizes     & [256, 256, 256] & actor lr          & $5 \mathrm{e}-4$ & critic lr       & $5 \mathrm{e}-4$ \\
ppo epoch        & 5               & critic epoch      & 5                & clip param      & 0.2              \\
actor mini batch & 1               & critic mini batch & 1                & entropy coef    & 0.01             \\
gamma            & 0.95            &                   &                  &                 &                  \\ \hline
\end{tabular}
\end{table}

\vskip 0.2in
\bibliography{main}

\end{document}